\newcommand{\fedavg}{\textsc{FedAvg}\xspace}
\newcommand{\clientopt}{\textsc{ClientOpt}\xspace}
\newcommand{\lr}{\eta}
\newcommand{\serveropt}{\textsc{ServerOpt}\xspace}
\newcommand{\slr}{\lr_\textrm{s}}
\newcommand{\vx}{\bm{x}}
\newcommand{\vy}{\bm{y}}
\newcommand{\vz}{\bm{z}}
\newcommand{\fed}{\textrm{f}}
\newcommand{\cent}{\textrm{c}}
\newcommand{\loss}{f}
\newcommand{\fedloss}{\loss_\fed}
\newcommand{\centloss}{\loss_\cent}
\newcommand{\sgrad}{g}
\newcommand{\fedsgrad}{\sgrad_\fed}
\newcommand{\fedaug}{\tilde{\sgrad}_\fed}
\newcommand{\fedsgradavg}{\bar{\sgrad}_\fed}
\newcommand{\centsgrad}{\sgrad_\cent}
\newcommand{\centaug}{\tilde{\sgrad}_\cent}
\newcommand{\activeBatch}{\mathcal{B}}
\newcommand{\centactiveBatch}{\activeBatch_\cent}
\newcommand{\clientvar}{\sigma^2}
\newcommand{\centvar}{\sigma_\cent^2}
\newcommand{\activeClients}{\mathcal{S}}
\newcommand{\localStep}{K}
\newcommand{\localChange}{\Delta}
\newcommand{\centralopt}{\textsc{CentralOpt}\xspace}
\newcommand{\clr}{\lr_\cent}
\newcommand{\mergeopt}{\textsc{MergeOpt}\xspace}
\newcommand{\mlr}{\lr_\textrm{m}}
\newcommand{\mergedChange}{\mathbf{\Delta}}
\newcommand{\totRounds}{T}
\newcommand{\colorpt}{Green}
\newcommand{\colorowgt}{NavyBlue}
\newcommand{\colortwgt}{Purple}
\newcommand{\et}{\textsc{Example Transfer}\xspace}
\newcommand{\pt}{\textsc{Parallel Training}\xspace}
\newcommand{\ptabbr}{\textsc{PT}\xspace}
\newcommand{\owgt}{\textsc{1-way Gradient Transfer}\xspace}
\newcommand{\owgtabbr}{\textsc{1-w GT}\xspace}
\newcommand{\ow}{\textsc{1-way}\xspace}
\newcommand{\twgt}{\textsc{2-way Gradient Transfer}\xspace}
\newcommand{\twgtabbr}{\textsc{2-w GT}\xspace}
\newcommand{\tw}{\textsc{2-way}\xspace}
\newcommand{\gt}{\textsc{Gradient Transfer}\xspace}
\newcommand{\dpmd}{\textsc{PDA-DPMD}\xspace}
\newcommand{\clientupdate}{\textsc{ClientUpdate}\xspace}
\theoremstyle{plain}
\newtheorem{theorem}{Theorem}[section]
\newtheorem{proposition}[theorem]{Proposition}
\newtheorem{lemma}[theorem]{Lemma}
\theoremstyle{definition}
\newtheorem{definition}[theorem]{Definition}
\newtheorem{assumption}[theorem]{Assumption}
\theoremstyle{remark}
\title{Mixed Federated Learning:\\Joint Decentralized and Centralized Learning}
\author{%
  Sean Augenstein \\
  Google Inc.\\
  \texttt{saugenst@google.com} \\
  \And
  Andrew Hard \\
  Google Inc.\\
  \texttt{harda@google.com} \\
  \And
  Lin Ning \\
  Google Inc.\\
  \texttt{linning@google.com} \\ 
  \And
  Karan Singhal \\
  Google Inc.\\
  \texttt{karansinghal@google.com} \\   
  \And
  Satyen Kale \\
  Google Inc.\\
  \texttt{satyenkale@google.com} \\ 
  \And  
  Kurt Partridge \\
  Google Inc.\\
  \texttt{kep@google.com} \\
  \And
  Rajiv Mathews \\
  Google Inc.\\
  \texttt{mathews@google.com} \\ 
}
\begin{document}

\maketitle

\begin{abstract}
Federated learning (FL) enables learning from decentralized privacy-sensitive data, with computations on raw data confined to take place at edge clients. This paper introduces \textit{mixed FL}, which incorporates an additional loss term calculated at the coordinating server (while maintaining FL's private data restrictions). There are numerous benefits. For example, additional datacenter data can be leveraged to jointly learn from centralized (datacenter) and decentralized (federated) training data and better match an expected inference data distribution. 
\textit{Mixed FL} also enables offloading some intensive computations (e.g., embedding regularization) to the server, greatly reducing communication and client computation load. For these and other \textit{mixed FL} use cases, we present three algorithms: \pt, \owgt, and \twgt. We state convergence bounds for each, and give intuition on which are suited to particular \textit{mixed FL} problems. Finally we perform extensive experiments on three tasks, demonstrating that \textit{mixed FL} can blend training data to achieve an oracle's accuracy on an inference distribution, and can reduce communication and computation overhead by over 90\%. Our experiments confirm theoretical predictions of how algorithms perform under different \textit{mixed FL} problem settings.
\end{abstract}

\section{Introduction}
\label{sec.intro}

Federated learning (FL)~\citep{mcmahan2017fedavg} is a machine learning setting where multiple `clients' (e.g., mobile phones) collaborate to train a model under coordination of a central server. Clients' raw data are never transferred. Instead, focused updates intended for immediate aggregation are used to achieve the learning objective~\citep{kairouz2019advances}. FL typically delivers model quality improvements because training examples gathered \textit{in situ} by clients reflect actual inference serving requests. For example, a mobile keyboard next-word prediction model can be trained from actual SMS messages, yielding higher accuracy than a model trained on a proxy document corpus. Because of the benefits, FL has been used to train production models for many applications~\citep{hard2018nwp, ramaswamy2019emoji, apple2019wwdc, ramaswamy2020dpnwp, hartmann2021smartselect, hard2022keyword}.

Building on FL, we can gain significant benefits from `mixed FL': jointly\footnote{We use `joint' to distinguish our work from sequential `central-then-FL' use cases, e.g. transfer learning.} training with an additional \emph{centralized} objective in conjunction with the \emph{decentralized} objective of FL. Let $\vx$ be model parameters to be optimized. Let $\loss$ denote a mixed loss, a sum\footnote{To simplify we subsume any relative weights into loss terms, i.e. this can be $\loss(\vx) = (w_\fed \tilde{\fedloss}(\vx)) + (w_\cent \tilde{\centloss}(\vx))$.}~of a federated loss $\fedloss$ and a centralized loss $\centloss$:

\begin{equation}
\loss(\vx) = \fedloss(\vx) + \centloss(\vx)
\label{eqn.overall_loss}
\end{equation}

Mixed loss $\loss$ might be a more useful training objective than $\fedloss$ for many reasons, including:

\paragraph{Mitigating Distribution Shift by Adding Centralized Data to FL}
While FL helps with reducing train vs.~inference distribution skew, it may not remove it completely. Examples include: training device populations that are subsets of inference device populations (e.g., training on high-end phones, for eventual use also on low-end phones), label-biased example retention on edge clients (e.g., only retaining positive examples of a binary classification task), and infrequent safety-critical example events with outsized importance (e.g., automotive hard-braking events needed to train a self-driving AI) \citep{augenstein2021mixedfl}. The benefits of FL can be achieved while overcoming remaining distribution skew by incorporating data from an additional datacenter dataset, via mixed FL. This affords a composite set of training data that better matches the inference distribution.

\paragraph{Reducing Client Computation and Communication} In representation learning, negative examples are used to push dissimilar items apart in a latent space while keeping positive examples closer together \citep{oord2018representation}. In federated settings, clients' caches may have limited local negative examples, and recent work \citep{ning2021noniid} shows this significantly degrades performance compared to centralized learning. This work also shows that using a regularization term to push representations apart, instead of negative examples, can resolve this performance gap. However, if done naively this requires communicating and computing over a large embedding table, introducing massive overhead for large-scale tasks. Applying mixed FL by computing the regularization term at the server avoids communicating the embedding table to clients and greatly reduces client computation.

Though mixed FL can clearly be useful, an actual process to minimize $\loss$ is not trivial. FL requires that clients’ data stay on device, as they contain private information that possibly reveals personal identity. Moreover, centralized loss/data is expected to differ significantly\footnotemark~from client loss/data.

\footnotetext{Were they not to differ, one could treat a centralized compute node as an additional client in standard FL, and simply make use of an established FL algorithm like \fedavg for training $\vx$.}

\paragraph{Contributions}
\begin{itemize}[topsep=0pt,itemsep=-0.5ex,partopsep=0ex,parsep=1ex,leftmargin=5ex] 
    \item We motivate the mixed FL problem and present three algorithms for addressing it: \pt (\ptabbr), \owgt (\owgtabbr), and \twgt (\twgtabbr). These algorithms maintain the data privacy protections inherent in FL. [Section \ref{sec.algos}]
    \item We experiment with facial attribute classification and language modeling, demonstrating that our algorithms overcome distribution shift. We match the accuracy of hypothetical `oracle' scenarios where the entire inference distribution was colocated for training. [Section \ref{sec.exp}]
    \item We experiment with user-embedding based movie recommendation, reducing communication overhead by 93.9\% and client computation by 99.9\% with no degradation in quality. [Section \ref{sec.exp}]    
    \item We state convergence bounds for the algorithms (in strongly, general, and non-convex settings), giving intuition on how each performs on particular mixed FL tasks. [Section \ref{sec.conv}; Appendix \ref{app.convergence_theorems}]
    \begin{itemize}[topsep=0pt,itemsep=-0.5ex,partopsep=0ex,parsep=1ex,leftmargin=5ex]
        \item[\textbullet] For \ptabbr and \twgtabbr, we bound via a `meta-FL' view; $\fedloss$ and $\centloss$ are `meta-clients'.
        \item[\textbullet] For \owgtabbr, we derive novel proofs of convergence. [Appendix \ref{app.proof_owgt}]
    \end{itemize}
    \item Our experiments confirm predictions of our theoretical bounds. [Section \ref{sec.exp}; Appendix \ref{app.added_exp}]
\end{itemize}

\section{Algorithms}
\label{sec.algos}

In FL, the loss function $\fedloss$ is an average of client loss functions $\loss_i$. The client loss $\loss_i$ is an expectation over batches of data examples $\activeBatch_i$ on client $i$.

\begin{equation}
\fedloss(\vx) = \frac{1}{N} \sum_{i=1}^{N} \loss_i(\vx)
,\quad
\loss_i(\vx) = \mathbb{E}_{\activeBatch_i}\left[\loss_i(\vx; \activeBatch_i)\right]
\label{eqn.fed_loss}
\end{equation}

\fedavg \citep{mcmahan2017fedavg} is a ubiquitous, heuristic FL method designed to minimize Equation \ref{eqn.fed_loss} w.r.t.~model $\vx$ in a manner that allows all client data ($\activeBatch_i$) to remain at respective clients $i$. Providing strong privacy protection is a major motivation for FL. Storing raw data locally on clients rather than replicating it on servers decreases the attack surface of the system. Also, using focused ephemeral updates and early aggregation follows principles of data minimization~\citep{whitehouse13privacy}.\footnotemark 

\footnotetext{Even stronger privacy properties are possible when FL is combined with technologies such as differential privacy (DP) and secure multiparty computation (SMPC)~\citep{wang2021fedopt}.}

While training with loss $\fedloss$ via \fedavg can yield an effective model $\vx$, this paper shows there are scenarios where `mixing' in an additional `centralized' loss $\centloss$ proves beneficial to the training of $\vx$. Such a loss term can make use of batches of centralized data examples $\centactiveBatch$, from a datacenter dataset: 

\begin{equation}
\centloss(\vx) = \mathbb{E}_{\centactiveBatch}\left[\centloss(\vx; \centactiveBatch)\right]    
\end{equation}

As noted, this centralized loss $\centloss$ differs significantly from the federated loss $\fedloss$, in their respective functional forms and/or in the respective data distributions that $\centactiveBatch$ and $\activeBatch_i$ are drawn from. We will present an expression that quantifies the difference between $\centloss$ and $\fedloss$ in Section \ref{sec.conv}.

We now state our mixed FL algorithms (Algorithms \ref{algo.pt_and_twgt} and \ref{algo.owgt}). Appendix \ref{app.impl} has a few practical details.

\begin{itemize}[topsep=0pt,itemsep=-0.5ex,partopsep=0ex,parsep=1ex,leftmargin=5ex] 
    \item \textbf{\pt} performs a round of \fedavg (minimizing $\fedloss$) in parallel with steps of centralized training (minimizing $\centloss$), merges (e.g., averages) the respective updates, and repeats. \textcolor{\colorpt}{Green} in Algorithm \ref{algo.pt_and_twgt} indicates added steps beyond $\fedavg$ for \pt.
    \item \textbf{\owgt} starts a round by calculating a gradient of $\centloss$. It is sent to participating clients and summed with clients' gradients of $\loss_i$ during client optimization. \textcolor{\colorowgt}{Blue} in Algorithm \ref{algo.owgt} indicates added steps beyond $\fedavg$ for \owgt.
    \item \textbf{\twgt} is \pt with gradient sharing. Two gradients are now used, one based on $\centloss$ and sent to clients (like \owgtabbr), one based on $\fedloss$ and applied centrally. \textcolor{\colortwgt}{Purple} in Algorithm \ref{algo.pt_and_twgt} is added steps beyond \ptabbr for \twgt.
\end{itemize}

\begin{algorithm}
    \DontPrintSemicolon
    \SetKwInput{Input}{Input}
    {\scriptsize \Input{Initial model $\vx^{(0)}$; \clientopt, \serveropt, \textcolor{\colorpt}{\centralopt, \mergeopt} with learning rate $\lr, \slr, \textcolor{\colorpt}{\clr, \mlr}$; \textcolor{\colortwgt}{initial augmenting centralized and federated gradients, $\centaug^{(0)}$ and $\fedaug^{(0)}$}}}
    {\scriptsize \For{$t \in \{0,1,\dots,\totRounds-1\}$ }{
      \textcolor{\colorpt}{Initialize central model $\vx_\cent^{(t, 0)} = \vx^{(t)}$}\;
      \textcolor{\colorpt}{\For{{\it \bf central step} $k = 0,\dots,\localStep-1$}{
          Sample centralized batch $\centactiveBatch^{(k)}$; 
          compute stochastic gradient $\centsgrad(\vx_\cent^{(t,k)}; \centactiveBatch^{(k)})$\;
          Perform central update $\vx_\cent^{(t,k+1)} = \centralopt(\vx_\cent^{(t,k)}, \centsgrad(\vx_\cent^{(t,k)}; \centactiveBatch^{(k)}) \mathbin{\textcolor{\colortwgt}{+}} \textcolor{\colortwgt}{\fedaug^{(t)}}, \clr, t)$\;
      }}
      \textcolor{\colorpt}{Compute central model delta $\mergedChange_\cent^{(t)} = \vx_\cent^{(t,J)} - \vx^{(t)}$}\;
      Sample a subset $\activeClients^{(t)}$ of clients;
      \For{{\it \bf client} $i \in \activeClients^{(t)}$ {\it \bf in parallel}}{
        $\localChange_i^{(t)}$, $p_i$ = \clientupdate($\vx^{(t)}$, $\textcolor{\colortwgt}{\centaug^{(t)}}$, \clientopt, $\lr$)\;
      }
      Aggregate client changes $\localChange^{(t)} = \sum_{i \in \activeClients^{(t)}} p_i \localChange_i^{(t)} / \sum_{i \in \activeClients^{(t)}} p_i$\;
      \textcolor{\colorpt}{Compute federated model} $\textcolor{\colorpt}{\vx_\fed^{(t)}} = \serveropt(\vx^{(t)}, -\localChange^{(t)},\slr,t)$\;
      \textcolor{\colorpt}{Compute federated model delta $\mergedChange_\fed^{(t)} = \vx_\fed^{(t)} - \vx^{(t)}$}\;
      \textcolor{\colorpt}{Aggregate central model and federated model deltas $\mergedChange^{(t)} = \mergedChange_\cent^{(t)} + \mergedChange_\fed^{(t)}$}\;
      Update global model $\vx^{(t+1)} \mathbin{\textcolor{\colorpt}{=}} \textcolor{\colorpt}{\mergeopt(\vx^{(t)}, -\mergedChange^{(t)}, \mlr)}$\;
      \textcolor{\colortwgt}{Update augmenting centralized gradient $\centaug^{(t+1)} = -\mergedChange_\cent^{(t)} / (\lr J) - \fedaug^{(t)}$}\;
      \textcolor{\colortwgt}{Update augmenting federated gradient $\fedaug^{(t+1)} = - \sum_{i \in \activeClients^{(t)}} \localChange_i^{(t)} / (\lr \sum_{i \in \activeClients^{(t)}} \localStep_i) - \centaug^{(t)} $~~~~~(\textit{see Appendix \ref{app.impl}})}\;       
    }}
    ~\\
    {\small \clientupdate:}\\
    {\scriptsize \Input{Initial client model $\vx_i^{(t,0)}$; (possible) augmenting gradient $\centaug^{(t)}$; \clientopt with learning rate $\lr$; initial client weight $p_i=0$}
    \For {{\it \bf client step} $k = 0,\dots,\localStep_i-1$}{
        Sample batch $\activeBatch_{i}^{(k)}$; 
        compute stochastic gradient $\sgrad_i(\vx_i^{(t,k)}; \activeBatch_{i}^{(k)})$;
        update client weight $p_i=p_i + |\activeBatch_{i}^{(k)}|$\;
        Perform client update $\vx_i^{(t,k+1)} = \clientopt(\vx_i^{(t,k)}, \sgrad_i(\vx_i^{(t,k)}; \activeBatch_{i}^{(k)}) + \centaug^{(t)}, \lr, t)$\;
    }
    Compute client model changes $\localChange_i^{(t)} = \vx_i^{(t,\localStep_i)} - \vx_i^{(t,0)}$ and \Return $\localChange_i^{(t)}$, $p_i$}    
    \caption{\pt and \twgt\newline{\small (\fedavg \textcolor{\colorpt}{with added steps for \pt} \textcolor{\colortwgt}{and further steps for \twgt})}
    }
    \label{algo.pt_and_twgt}
\end{algorithm}

\begin{algorithm}
    \DontPrintSemicolon
    \SetKwInput{Input}{Input}
    {\scriptsize \Input{Initial model $\vx^{(0)}$; \clientopt, \serveropt with learning rate $\lr, \slr$}}
    {\scriptsize \For{$t \in \{0,1,\dots,\totRounds-1\}$ }{
      \textcolor{\colorowgt}{Sample centralized batch $\centactiveBatch^{(t)}$; compute stochastic gradient $\centsgrad(\vx^{(t)}; \centactiveBatch^{(t)})$; set augmenting gradient $\centaug^{(t)} = \centsgrad(\vx^{(t)}; \centactiveBatch^{(t)})$}\;
      Sample a subset $\activeClients^{(t)}$ of clients;
      \For{{\it \bf client} $i \in \activeClients^{(t)}$ {\it \bf in parallel}}{
        $\localChange_i^{(t)}$, $p_i$ = \clientupdate($\vx^{(t)}$, $\textcolor{\colorowgt}{\centaug^{(t)}}$, \clientopt, $\lr$)~~~~~~(\textit{\clientupdate function defined in Algorithm \ref{algo.pt_and_twgt}})\;
      }
      Aggregate client changes $\localChange^{(t)} = \sum_{i \in \activeClients^{(t)}} p_i \localChange_i^{(t)} / \sum_{i \in \activeClients^{(t)}} p_i$\;
      Update global model $\vx^{(t+1)} = \serveropt(\vx^{(t)}, -\localChange^{(t)},\slr,t)$\;
    }}
    \caption{\owgt~{\small(\fedavg~\citep{mcmahan2017fedavg} \textcolor{\colorowgt}{with added steps})}}
    \label{algo.owgt}
\end{algorithm}

\section{Related Work}
\label{sec.related}

\pt and \owgt were presented in an early form in \citet{augenstein2021mixedfl}. This paper greatly expands on mixed FL, with an additional algorithm (\twgt), convergence proofs, more use cases, and extensive experiments. 

There are parallels between \gt and algorithms aimed at addressing inter-client data heterogeneity in standard FL, like SCAFFOLD~\citep{karimireddy2019scaffold} or Mime~\citep{karimireddy2020mime}. These algorithms calculate a gradient reflective of the federated client population as a whole and transmit it to clients to reduce update variance, improving optimization on non-IID client datasets. In contrast, \gt calculates a gradient that is reflective of \emph{centralized} data/loss, to augment computations based on \emph{decentralized} data/loss at the federated clients (and in \twgt, also the converse). SCAFFOLD also requires keeping state at the server (in the form of control variates) for \emph{each} participating client, which is impractical in real large-scale FL systems. \twgt only requires state (in the form of augmenting gradients) for two entities, the centralized and federated data/losses, and so is easily implemented.

Another mixed FL algorithm is \et \citep{augenstein2021mixedfl, zhao2018noniid}, where centralized examples are sent directly to federated clients (as opposed to calculating gradients and sending those instead). This is typically precluded in real FL applications, as the volume of data needed to transfer is excessive. Therefore, this paper focuses on alternative strategies. Split learning \citep{vepakomma2018split,gupta2018distributed} is an alternative to \et where some layers of a model are computed by a client and others by another client or a server, via communication of layer activations and gradients. Unlike mixed FL, this approach does not train a model to perform well on centralized data. Other works that partition models into global and local parts \citep{singhal2021federated,arivazhagan2019federated} also do not optimize the mixed FL objective.

Transfer learning (a.k.a.~`fine-tuning') also involves two different distributions at training time, but with a clear difference of objective from mixed FL. In transfer learning, a model is pre-trained on a distribution (e.g., centralized data in a datacenter), then further trained on the actual distribution of interest (e.g., decentralized data via FL). It is desirable as a way to quickly train on the latter distribution (e.g., as in~\citet{ro2022scaling}). But the sequential approach of transfer learning results in \textit{catastrophic forgetting}~\citep{mccloskey1989catforget, ratcliff1990catforget, french1999catforget}; accuracy on the pre-training distribution is lost as the model learns to fit the fine-tuning data instead. In mixed FL, we seek strategies yielding good inference performance against \emph{all} data distributions trained on.

In differentially private (DP) optimization, a line of work has aimed to improve privacy/utility tradeoffs by utilizing additional non-private data.  One way is to use non-private data to pre-train \citep{abadi2016dpdl}.  Another avenue is to use non-private data to learn the gradient geometry \citep{zhou2020privgradsubspace, amid2021pdmd, asi2021privadaptopt, kairouz2021privadaptopt, li2022privadaptopt}, improving accuracy by enabling tighter, non-isotropic gradient noise during DP optimization. \citet{amid2021pdmd} and \citet{li2022privadaptopt} consider the FL use case\footnotemark. As in transfer learning, additional data is used only to improve performance on a single distribution, and retaining accuracy on other distributions is a non-goal (in contrast to mixed FL).  Also, the non-private data used is generally matching (in distribution) to the private data, whereas in mixed FL we typically explicitly leverage \emph{distinct} distributions.

\footnotetext{An interesting similarity between \dpmd \citep{amid2021pdmd} and our work: in \dpmd for FL, a first order approximation of mirror descent is used, where the server model update is calculated as weighted sum of private (federated) and public loss terms, just as in \pt or \twgt.}

\section{Convergence}
\label{sec.conv}

\subsection{Preliminaries}
\label{subsec.conv_prelims}

We now describe the convergence properties for each mixed FL algorithm from Section \ref{sec.algos}.

We assume the mixed loss $\loss$ has a finite minimizer (i.e., $\exists~\vx^*~s.t.~ \loss(\vx) \geq \loss(\vx^*)~\forall~\vx$). We assume the client losses $\loss_i$ and centralized loss $\centloss$ are $\beta$-smooth. Note that if the $\loss_i$ are $\beta$-smooth, the federated loss $\fedloss$ as well\footnote{By its definition in Equation \ref{eqn.fed_loss} combined with the triangle inequality.}. For some results, we assume $\loss_i$ and $\centloss$ are $\mu$-convex (possibly strongly convex, $\mu > 0$). Note that if the $\loss_i$ are $\mu$-convex, $\fedloss$ is as well\footnote{By its definition in Equation \ref{eqn.fed_loss}, it is convex combination of $\loss_i$.}.

For a parameter vector $\vx$, we use $\nabla \loss_i(\vx)$ to denote the full gradient of $\loss_i$ (i.e., over all data on client $i$). Similarly, $\nabla \fedloss(\vx)$ and $\nabla \centloss(\vx)$ denote full gradients\footnotemark~of $\fedloss$ and $\centloss$ at $\vx$. We use $\sgrad_i(\vx)$ to denote an unbiased \emph{stochastic} gradient of $\loss_i$, calculated on a random batch $\activeBatch_i$ of examples on client $i$.

\footnotetext{Note: $\nabla \fedloss(\vx)$ is useful for theoretical convergence analysis, but cannot be practically computed in a real cross-device FL setting. In contrast, $\nabla \centloss(\vx)$ \emph{can} be computed.}

We focus on the impact to convergence when differences exist between the federated and centralized losses/data. As such, we make the following homogeneity assumption about the federated data, which simplifies the analysis and brings out the key differences. Our analysis can be easily extended to heterogeneous clients by assuming a bound on variance of the client gradients.

\begin{assumption}
\label{assump.IID}
The federated clients have homogeneous data distributions (i.e., with examples that are drawn IID from a common data distribution), and their stochastic gradients have bounded variance. Specifically, for some $\sigma > 0$, we have for all clients $i$ and parameter vectors $\vx$,
\begin{equation}
\mathbb{E} \left[ \sgrad_i(\vx) \right] = \nabla \fedloss(\vx)
,\quad
\mathbb{E} \left\| \sgrad_i(\vx) - \nabla \fedloss(\vx) \right\|^2 \leq \clientvar.
\label{eqn.client_stoch_grad_var}
\end{equation}

Under such IID conditions, if \fedavg is used to train $\vx$ on these federated clients, the convergence rate at best matches that of SGD (see Table 2 in \citet{karimireddy2019scaffold}).
\end{assumption}

Let $\fedsgrad$ denote an unbiased stochastic gradient of the federated loss $\fedloss$, formed by randomly sampling a cohort of $S$ (out of $N$ total) federated clients, randomly sampling a batch $\activeBatch_i$ of data examples on each client, and averaging the respective client stochastic gradients over the cohort. Given Assumption \ref{assump.IID} we can bound the variance of this federated stochastic gradient $\fedsgrad$:

\begin{equation}
\fedsgrad(\vx) = \frac{1}{S} \sum_{i \in {\mathcal S}} \sgrad_i(\vx)
,\quad
\mathbb{E} \left\| \fedsgrad(\vx) - \nabla \fedloss(\vx) \right\|^2 = \mathbb{E} \left\| \frac{1}{S} \sum_{i \in {\mathcal S}} \sgrad_i(\vx) - \nabla \fedloss(\vx) \right\|^2 \leq \frac{1}{S} \clientvar
\label{eqn.fed_stoch_grad_var}
\end{equation}

Let $\centsgrad(\vx)$ denote a stochastic gradient of the centralized loss $\centloss$ at $\vx$, calculated on a randomly sampled batch $\centactiveBatch$ of centralized examples (from a datacenter dataset), with variance bounded by $\centvar$:
\begin{equation}
\mathbb{E} \left\| \centsgrad(\vx) - \nabla \centloss(\vx) \right\|^2 \leq \centvar.
\label{eqn.cent_stoch_grad_var}
\end{equation}

Summarizing Equations \ref{eqn.client_stoch_grad_var}-\ref{eqn.cent_stoch_grad_var}, a client's stochastic gradient $\sgrad_i(\vx)$ has variance bounded by $\clientvar$, the federated cohort's stochastic gradient $\fedsgrad(\vx)$ has variance bounded by $\nicefrac{\clientvar}{S}$, and the centralized stochastic gradient $\centsgrad(\vx)$ has variance bounded by $\centvar$. Increasing client batch size $|\activeBatch_i|$ reduces variance of $\sgrad_i(\vx)$ and $\fedsgrad(\vx)$, increasing cohort size $S$ reduces variance of $\fedsgrad(\vx)$, and increasing central batch size $|\centactiveBatch|$ reduces variance of $\centsgrad(\vx)$. 

Note that $\nicefrac{\clientvar}{S}$ only bounds variance \emph{within} the federated data distribution, and $\centvar$ only bounds variance \emph{within} the central data distribution. To say something about variance \emph{across} the two data distributions, we adapt the notion of `bounded gradient dissimilarity' (or `BGD`) introduced in \citet{karimireddy2019scaffold} (Definition A1), and apply it to the mixed FL scenario here.

\begin{definition}[mixed FL ($G,B$)-BGD]
\label{def.bgd}
There exist constants $G \geq 0$ and $B \geq 1$ such that $\forall \vx$:
\begin{equation*}
w_\fed \left\| \frac{\nabla \fedloss(\vx)}{w_\fed} \right\|^2 + w_\cent \left\| \frac{\nabla \centloss(\vx)}{w_\cent} \right\|^2 \leq G^2 + B^2 \left\| \nabla \loss(\vx) \right\|^2
\end{equation*}
In the definition, $w_\fed$ and $w_\cent$ are proportions of influence ($w_\fed + w_\cent = 1$) of the federated and centralized objectives on the overall mixed optimization. (The simplest setting is $w_\fed = w_\cent = \nicefrac{1}{2}$.)
\end{definition}

\subsection{Bounds}
\label{subsec.conv_bounds}
We can now state upper bounds on convergence (to an error smaller than $\epsilon$) for the respective mixed FL algorithms. For ease of comparison, the convergence bounds are summarized in Table \ref{tab.conv_summary}. The Theorems and Proofs of these convergence bounds are given in Appendix \ref{app.convergence_theorems}. As mentioned previously, the analysis extends in a straightforward manner to the setting of heterogeneous clients assuming a bound on the variance of client gradients: for all $\vx$, $\frac{1}{N}\sum_{i=1}^N \|\nabla f_i(\vx) - \nabla \fedloss(\vx)\|^2 \leq \sigma_{\fed}^2$ for some $\sigma_{\fed} \geq 0$. Under this assumption, the bounds in Table \ref{tab.conv_summary} change by an additional $K \sigma_{\fed}^2$ term in the expression involving $\clientvar$ and $\centvar$ in the parenthesis on the numerator of the leading term. We omit the detailed analysis since it doesn't provide additional insight. 

\begin{table*}
\caption{Order of number of rounds required to reach $\epsilon$ accuracy for different mixing strategies. See Appendix \ref{app.convergence_theorems} for Theorems/Proofs. $\clientvar$ as defined in \eqref{eqn.client_stoch_grad_var}, $\centvar$ as defined in \eqref{eqn.cent_stoch_grad_var}, $G$ and $B$ as defined in Def. \ref{def.bgd} with $w_\fed = w_\cent = \nicefrac{1}{2}$. $\beta$ is the smoothness bound (Def. \ref{def.beta}), $\mu$ is the convexity bound (Def. \ref{def.mu}). $\localStep$ is the number of local steps taken on each client per round ($\geq2$), $S$ is the cohort size of clients per round. $D$ and $F$ are distances/errors at initialization, described in Appendix \ref{app.convergence_theorems}.}
\label{tab.conv_summary}
\begin{center}
\begingroup
\setlength{\tabcolsep}{4pt}
\begin{tabular}{llll}
\toprule
 & \pt & \owgtabbr & \twgtabbr \\
\midrule
$\mu$-\textsc{Convex}& 
$\frac{(\clientvar + S \centvar)}{\localStep S \mu \epsilon} + \frac{G\sqrt{\beta}}{\mu\sqrt{\epsilon}} + \frac{B^2 \beta}{\mu}\log(\frac{1}{\epsilon})$
& 
$\frac{(\clientvar + \localStep S \centvar)}{\localStep S \mu \epsilon} + \frac{\beta}{\mu}\log(\frac{1}{\epsilon})$
& 
$\frac{(\clientvar + S \centvar)}{\localStep S \mu \epsilon} + \frac{\beta}{\mu}\log(\frac{1}{\epsilon})$
\\
\textsc{Convex}&
$\frac{(\clientvar + S \centvar) D^2}{\localStep S \epsilon^2} + \frac{G \sqrt{\beta}}{\epsilon^{\frac{3}{2}}} + \frac{B^2 \beta D^2}{\epsilon}$
& 
$\frac{(\clientvar + \localStep S \centvar) D^2}{\localStep S \epsilon^2} + \frac{\beta D^2}{\epsilon}$
& 
$\frac{(\clientvar + S \centvar) D^2}{\localStep S \epsilon^2} + \frac{\beta D^2}{\epsilon}$
\\
\small{\textsc{Nonconvex}}&
$\frac{(\clientvar + S \centvar) \beta F}{\localStep S \epsilon^2} + \frac{G \sqrt{\beta}}{\epsilon^{\frac{3}{2}}} + \frac{B^2 \beta F}{\epsilon}$
&
$\frac{(\clientvar + \localStep S \centvar) \beta F}{\localStep S \epsilon^2} + \frac{\beta F}{\epsilon}$
&
$\frac{(\clientvar + S \centvar) \beta F}{\localStep S \epsilon^2} + \frac{\beta F}{\epsilon}$
\\
\bottomrule
\end{tabular}
\endgroup
\end{center}
\end{table*}

Analyzing Table \ref{tab.conv_summary}, there are several implications to be drawn.

\paragraph{Significant ($G$,$B$)-BGD impedes \pt} The convergence bounds for \pt show a dependence on the $G$ and $B$ parameters from Definition \ref{def.bgd}. If a mixed FL problem involves a large amount of dissimilarity between the federated and centralized gradients (i.e., if $G \gg 0$ or $B \gg 1$), then \pt will be slower to converge than alternatives.

\paragraph{Significant $\centvar$ impedes \owgt} \owgt is more sensitive to central variance $\centvar$. Unlike the other algorithms, the impact of $\centvar$ on convergence scales with the number of steps $\localStep$. \owgt requires a central batch size $\left|\activeBatch_c \right|$ that is $\localStep$ times larger to achieve the same impact on convergence. Intuitively, this makes sense; in a round, \pt and \twgt sample $\localStep$ fresh batches during centralized optimization, while \owgt only samples a single central batch.

\paragraph{\twgt should always converge at least as well as others} The convergence bound for \twgt is unaffected by gradient dissimilarity (i.e., $G \gg 0$ or $B \gg 1$), unlike \pt. Also, the bound for \twgt is less sensitive to $\centvar$ than the bound for \owgt (as described above).

\subsection{Metrics}
\label{subsec.conv_metrics}

\pt has a convergence bound substantially different than the \gt algorithms; the dependence on the BGD parameters $G$ and $B$ indicates there are mixed FL problems where \pt is slower to converge than \gt (in either form). How can we know if a particular problem is one where \pt will have slower convergence? It would be useful to know $G$ and $B$, but they cannot be exactly measured. $G$ and $B$ (Definition \ref{def.bgd}) are upper bounds holding $\forall \vx$, and the entire space of $\vx$ cannot realistically be checked. Instead, we introduce sampled approximations to empirically estimate these upper bounds.

Let $\vx^{(t)}$ be the global model at start of round $t$. Let $\tilde{\nabla \fedloss}_{t}$, $\tilde{\nabla \centloss}_{t}$, $\tilde{\nabla \loss}_{t}$ be approximations of federated, centralized, total gradients at round $t$. Considering Definition \ref{def.bgd}, we define $\tilde{G}_t$ as a sampled approximation of $G$ assuming $B=1$, and $\tilde{B}_t$ as a sampled approximation of $B$ assuming $G=0$:

\begin{small}
\begin{equation}
\begin{split}
\tilde{\nabla \fedloss}_{t} = \frac{1}{S} \sum_{i \in {\mathcal S}} \left( \sgrad_i(\vx^{(t)}) \right),\quad
\tilde{\nabla \centloss}_{t} &= \centsgrad(\vx^{(t)}),\quad
\tilde{\nabla \loss}_{t} = \tilde{\nabla \fedloss}_{t} + \tilde{\nabla \centloss} _{t}\\
\tilde{G}_{t}^2 = \frac{1}{w_\fed} \left\| \tilde{\nabla \fedloss}_{t} \right\|^2 + \frac{1}{w_\cent} \left\| \tilde{\nabla \centloss}_{t} \right\|^2 - \left\| \tilde{\nabla \loss}_{t} \right\|^2,&\quad
\tilde{B}_{t}^2 = \left(\frac{1}{w_\fed} \left\| \tilde{\nabla \fedloss}_{t} \right\|^2 + \frac{1}{w_\cent} \left\| \tilde{\nabla \centloss}_{t} \right\|^2\right) / \left\| \tilde{\nabla \loss}_{t} \right\|^2
\label{eqn.bgd_approx}
\end{split}
\end{equation}
\end{small}

These are used to predict relative convergence performance on several mixed FL problems, next.

\section{Experiments}
\label{sec.exp}

We now present experiments on three tasks, showing the range of problems where mixed FL is useful and demonstrating how each algorithm is differently suited depending on properties of the problem.

\begin{figure} 
    \centering
    \subfloat[\centering Smile Classifier: Eval.~AUC (ROC) vs.~Round.]{{\includegraphics[width=6.8cm,trim={0.65cm 0 0.5375cm 0},clip]{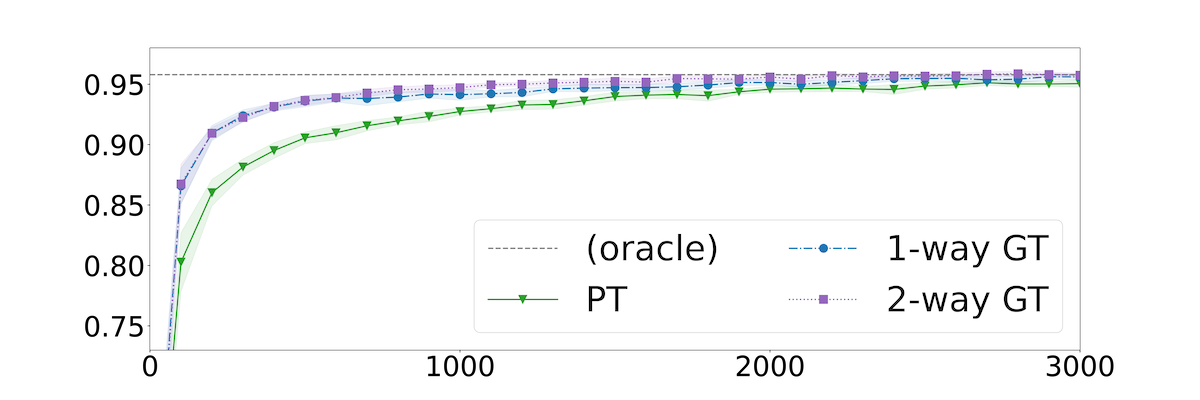} }}\hfill%
    \subfloat[\centering Language Model: Evaluation Accuracy vs.~Round.]{{\includegraphics[width=6.8cm,trim={0.65cm 0 0.5375cm 0},clip]{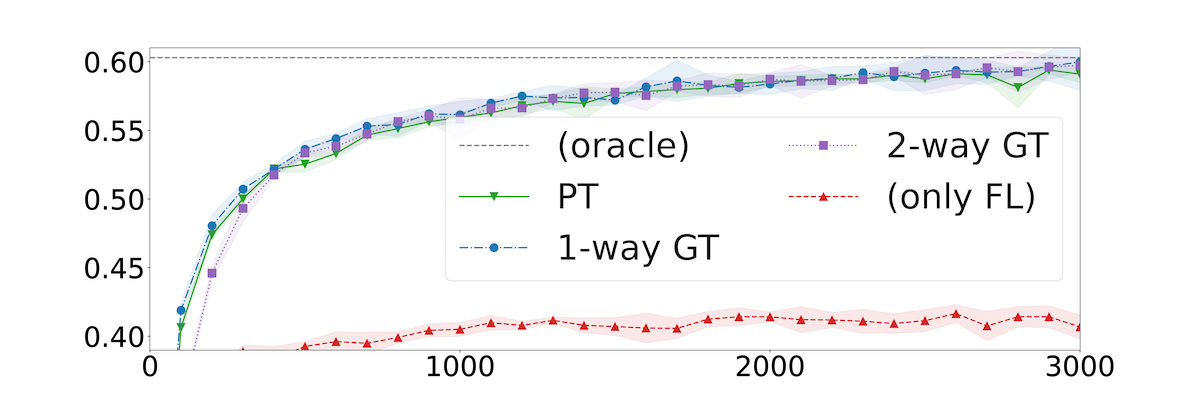} }}\vskip0.1pt
    \caption{Mixed FL resolves distribution shift, enabling accuracy equal to if data were colocated and centrally trained (`oracle'). The binary smile classifier reaches an oracle's evaluation AUC of ROC of over 0.95. The language model reaches an oracle's evaluation accuracy of over 0.6. Evaluation is over \emph{all} data (i.e., smiling \emph{and} non-smiling faces; Stack Overflow \emph{and} Wikipedia).}%
    \label{fig.celeba_and_ncp_acc}%
\end{figure}

\begin{figure} 
    \centering
    \subfloat[\centering Smile Classifier: Eval.~Loss vs.~Round.]{{\includegraphics[width=6.8cm,trim={0.65cm 0 0.5375cm 0},clip]{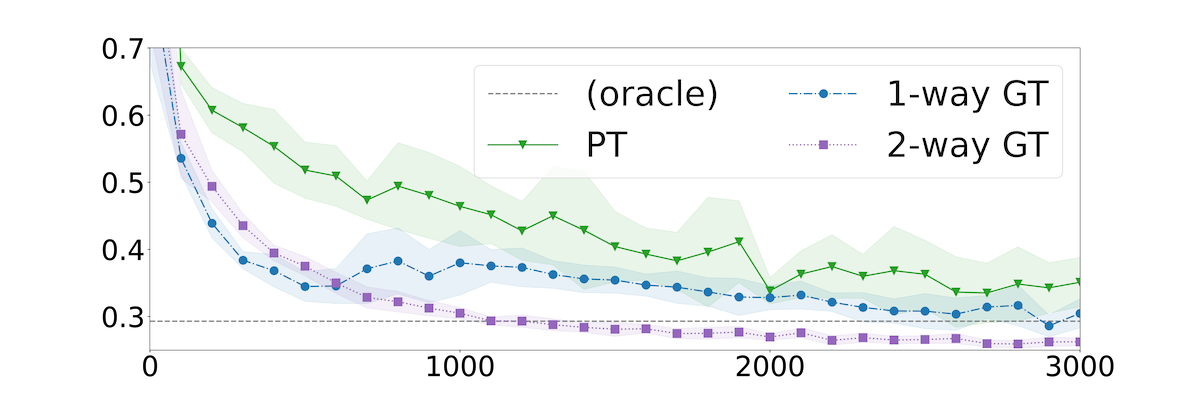} }}\hfill%
    \subfloat[\centering Language Model: Eval.~Loss vs.~Round.]{{\includegraphics[width=6.8cm,trim={0.65cm 0 0.5375cm 0},clip]{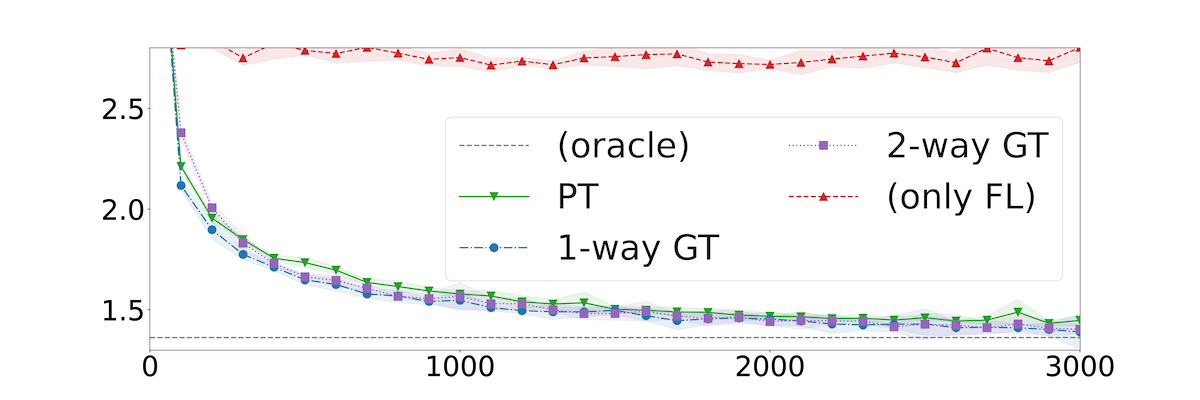} }}%
    \caption{Comparative convergence depends on $G,B$. For smile classifier, $\tilde{G}_t \gg 0$ (Table \ref{tab.metrics_summary}, left col.), and (a) shows \ptabbr converges worse than \owgtabbr or \twgtabbr. For language model, $\tilde{G}_t \approx 0$ and $\tilde{B}_t \approx 1$ (Table \ref{tab.metrics_summary}, center col.), and (b) shows all algorithms converge the same. Plots show 95\% conf.}%
    \label{fig.celeba_and_ncp_loss}%
\vskip -0.1in
\end{figure}

\begin{table} 
\caption{Experiments summary, with sampled approximations of ($G$,$B$)-BGD. (See Appendix \ref{subsec.metrics_summary})}
\label{tab.metrics_summary}
\begin{small}
\begin{center}
\begin{sc}
\begingroup
\setlength{\tabcolsep}{4pt}
\begin{tabular}{llll}
\toprule
 & Smile Classification & Language Modeling & Movie Recommend. \\
\midrule
Model Arch. Type & Fully-Connected & RNN & Dual Encoder \\
\midrule
Federated Data & CelebA (Smiling) & Stack Overflow & MovieLens \\
Federated Loss & Binary C.E. & Categorical C.E. & Hinge \\
Fed.~Weight (\textup{$w_\fed$}) & 0.5 & 0.73 & 0.5 \\
\midrule
Centralized Data & CelebA (Non-smiling) & Wikipedia & - \\
Centralized Loss & Binary C.E. & Categorical C.E. & Spreadout (Reg.) \\
Cent.~Weight (\textup{$w_\cent$}) & 0.5 & 0.27 & 0.5 \\
\midrule
\textup{$\max_{10\leq t\leq100} \tilde{G}_{t}^2$} & $49.04$ & $0.03$ & $0.03$ \\
\textup{$\max_{10\leq t\leq100} \tilde{B}_{t}^2$} & $1.26$ & $1.34$ & $1.50$ \\
\bottomrule
\end{tabular}
\endgroup
\end{sc}
\end{center}
\end{small}
\vskip -0.1in
\end{table}

\subsection{Addressing Label Imbalance in Training Data, for Smile Classification (CelebA)}

Earlier work \citep{augenstein2021mixedfl} motivated mixed FL with the example problem of training a `smiling'-vs.-`unsmiling' classifier via FL with mobile phones, with the challenge that the phones' camera application (by the nature of its usage) tends to only persist images of smiling faces. The solution for this severe label imbalance was to apply mixed FL, utilizing an additional datacenter dataset of unsmiling faces to train a capable classifier. To experiment, CelebA data\footnotemark~\citep{liu2015faceattributes, caldas2018leaf} was split into a federated `smiling' dataset and centralized `unsmiling' dataset.

\footnotetext{CelebA federated data available via open source FL software~\citep{tff2022celeba}.}

In that work, it was empirically observed that \owgt converged faster than \pt. Figures \ref{fig.celeba_and_ncp_acc}a and \ref{fig.celeba_and_ncp_loss}a show the AUC and loss convergence, adding in \twgt (first introduced in this paper)\footnotemark. Note that \twgt performs as good or better than the other algorithms. The analysis of Section \ref{sec.conv} provides the explanation for the empirical observation that \gt converges faster than \pt. As discussed, \pt is at a disadvantage when $G \gg 0$ or $B \gg 1$, and Table \ref{tab.metrics_summary} (left column) shows that $\tilde{G}_t$ is significantly large in this problem.

\footnotetext{For training hyperparameters, additional experiments, and other details, see Appendix \ref{app.added_exp}.}

\subsection{Mitigating Bias in Training Data, for Language Modeling (Stack Overflow, Wikipedia)}
\label{subsec.exp_mitigate_bias}

We now study a case where the comparative behavior of the mixed FL algorithms is different. Consider the problem of learning a language model like a RNN-based next character prediction model, used to make typing suggestions to a user in a mobile keyboard application. Because the ultimate inference application is on mobile phones, it is natural to train this model via FL, leveraging cached SMS text content highly reflective of inference time usage (at least for some users).

However, the mobile phones participating in the federated learning of the model might be only a subset of the mobile phones for which we desire to deploy for inference. Higher-end mobile phones can disproportionately participate in FL, as their larger memory and faster processors allow them to complete client training faster. But to do well at inference, a model should make accurate predictions for users of lower-end phones as well. A purely FL approach can do an inadequate job of learning these users' usage patterns. (See \citet{kairouz2019advances} for more on aspects of fairness and bias in FL.)

Mixed FL overcomes this problem, by training a model jointly on federated data (representative of users of higher-end phones) and a datacenter dataset (representative of users of lower-end phones). We simulate this scenario using two large public datasets: the Stack Overflow dataset\footnote{Stack Overflow federated data available via~\citep{tff2022stackoverflow}; see link for license.}~\citep{stackoverflow} for federated data, and the Wikipedia dataset\footnote{Wikipedia data available via~\citep{tf2022wikipedia}; see link for license.}~\citep{wikipedia} for datacenter data. Figure \ref{fig.celeba_and_ncp_acc}b shows results. The `only FL' scenario learns Stack Overflow (but \emph{not} Wikipedia) patterns of character usage, and so has limited accuracy ($<0.45$) when evaluated on examples from both datasets. The mixed FL algorithms demonstrate learning both: they all achieve an evaluation accuracy ($\sim0.60$) comparable to an imagined `oracle' that could centrally train on the combination of datasets. For training hyperparameters, additional experiments, and other details, see Appendix \ref{app.added_exp}.

Table \ref{tab.metrics_summary} (center column) shows $\tilde{G}_t$ and $\tilde{B}_t$ for this problem. Unlike smile classification, here gradient dissimilarity is trivial: $\tilde{G}_t \approx 0$ and $\tilde{B}_t \approx 1$. This should mean \pt is competitive. Figure \ref{fig.celeba_and_ncp_loss}b empirically confirms this to be true; the algorithms converge roughly equivalently.

\subsection{Regularizing Embeddings at Server, for Movie Recommendation (MovieLens)}
\label{subsec.exp_regularization}

The third task we study is movie recommendation with an embedding regularization term, as described in Section \ref{sec.intro}. A key difference from the previous two scenarios is that here we perform mixed FL by mixing different loss functions instead of mixing datacenter and client datasets. We study this scenario by training a dual encoder representation learning model \citep{covington2016deep} for next movie prediction on the MovieLens dataset \citep{harper2015movielens, movielens1M}.

As described in Section \ref{sec.intro}, limited negative examples can degrade representation learning performance. Previous work \citep{ning2021noniid} proposed using losses insensitive to local client negatives to improve federated model performance. They observed significantly improved performance by using a two-part loss: (1) a hinge loss to pull embeddings for similar movies together, and (2) a spreadout regularization \citep{zhang2017learning} to push embeddings for unrelated movies apart. For clients to calculate (2), the server must communicate all movie embeddings to each client, and clients must perform a matrix multiplication over the entire embedding table. This introduces enormous computation and communication overhead when the number of movies is large. 

Mixed FL can alleviate this communication and computation overhead. Instead of computing both loss terms on clients, clients calculate only the hinge loss and the server calculates the expensive regularization term, avoiding costly computation on each client. Also, since computing the hinge loss term only requires movie embeddings corresponding to movies in a client's local dataset, only those embeddings are sent to that client, saving communication and on-client memory.

Experiments show that all mixed FL algorithms achieve model performance (around 0.1 for recall@10) comparable to the baseline scenario where everything is computed on the clients. Moreover, mixed FL eliminates more than 99.9\% of client computation and more than 93.9\% of communication (see Table \ref{tab.nmp-results}). For training hyperparameters, computation and communication savings analysis, additional experiments, and other task details, see Appendix \ref{app.added_exp}. Note that a real-world model can be much larger than this movie recommendation model\footnote{E.g., for a next URL prediction task with millions of URLs the embedding table size can reach gigabytes.}. Without mixed FL, communicating such large models to clients and computing the regularization term would be impractical in large-scale settings.   

Figure \ref{fig.nmp} shows that \pt converges slightly slower than either \gt algorithm but reaches the same evaluation loss at around 1500 rounds. The approximated gradient dissimilarity metrics for this task are presented in the last column of Table \ref{tab.metrics_summary}.

\begin{figure}
\begin{minipage}{0.49\textwidth}
\centering
\includegraphics[width=6.8cm,trim={1cm 0 1.2cm 0},clip]{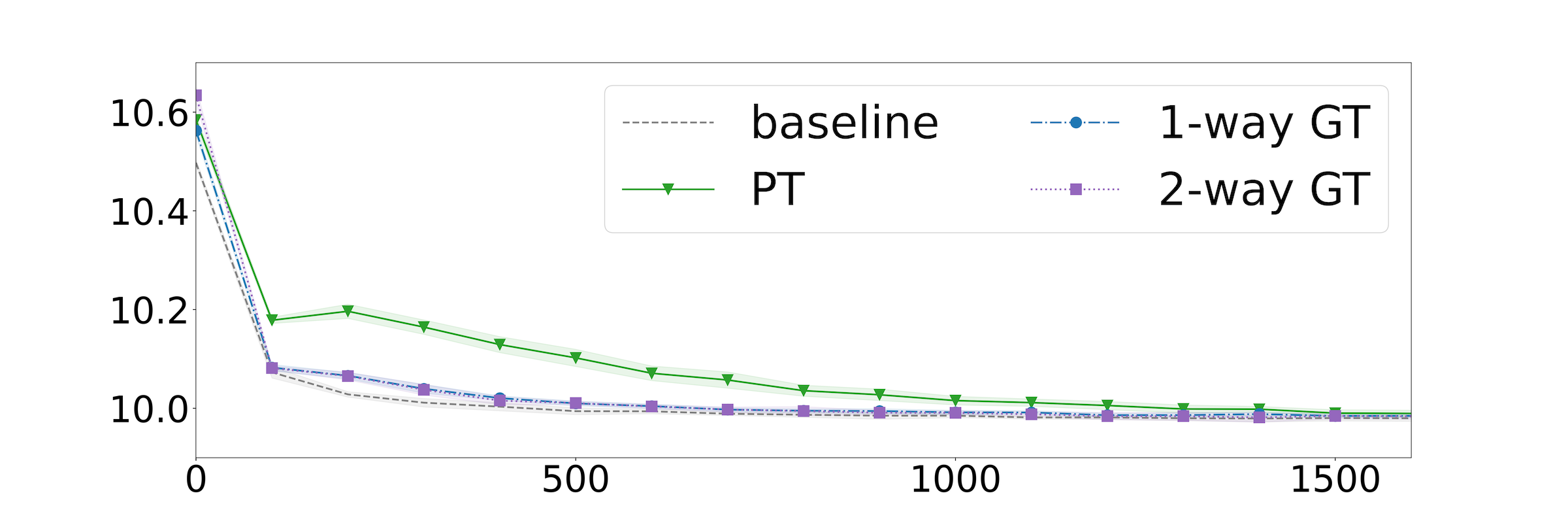}
\captionof{figure}{Next movie prediction performance (evaluation loss vs.~training round). We see that mixed FL results in similar loss as the more expensive baseline scenario (see Table \ref{tab.nmp-results}).}
\label{fig.nmp}
\end{minipage}
\hfill%
\begin{minipage}{0.49\textwidth}
\captionof{table}{Movie recommendation: computation (\textsc{Comp.}) and communication (\textsc{Comm.}) overhead per client. The baseline scenario computes everything clients. See Appendix \ref{subsec.comp_and_comm_math} for analysis.}
\label{tab.nmp-results}
\begin{center}
\begin{small}
\begin{sc}
\begin{tabular}{lll}
\toprule
 & Comp. (Mflop) & Comm. (KB) \\
\midrule
Baseline & 125.16 & 494 \\
\ptabbr & 0.025 & 20 \\
\owgtabbr & 0.025 & 30 \\
\twgtabbr & 0.025 & 30 \\
\bottomrule
\end{tabular}
\end{sc}
\end{small}
\end{center}
\vskip -0.1in
\end{minipage}
\end{figure}

\section{Conclusion}
\label{sec.conc}

This paper has introduced mixed FL, including motivation, algorithms and their convergence properties, and intuition for when a given algorithm will be useful for a given problem. Our experiments indicate mixed FL can improve accuracy and reduce communication and computation across tasks.

This work focused on jointly learning from a single decentralized client population and a centralized entity, as it illuminates the key aspects of the mixed FL problem. Note that mixed FL and the associated properties we define in this paper (like mixed FL ($G,B$)-BGD) are easily expanded to work with multiple ($>1$) distinct client populations participating. E.g., a population of mobile phones and a separate population of smart speakers, or mobile phones separated into populations with distinct capabilities/usage (high-end vs.~low-end, or by country/language). Also, there need not be a centralized entity; mixing can be solely between distinct federated datasets.

It is interesting to reflect on the bounds of Table \ref{tab.conv_summary}, and what they indicate about the benefits of separating a single decentralized client population into multiple populations for mixed FL purposes. The bounds are in terms of $\clientvar$ (representing within population `variability') and $G$ and $B$ (representing cross-population `variability'). Splitting a population based on traits will likely decrease $\clientvar$ (each population is now more homogeneous) but introduce or increase $G$ and $B$ (populations are now distinctive). This might indicate scenarios where \gt methods (only bounded by $\clientvar$) become more useful and \pt (also bound by $G$ and $B$) becomes less useful.

The limits of our convergence bounds should be noted. First, they are `loose'; practical performance in particular algorithmic scenarios could be better, and thus comparisons between algorithms could differ. Second, our bounds assume IID federated data, which is invalid in practice; convergence properties differ on non-IID data. While our analysis, extended to handle non-IID data, shows that the bounds do not materially change, it is still a place where theory and practice slightly diverge. 

Adaptive optimization \citep{reddi2020fedadam} with mixed FL has not been explored adequately. Preliminary results with \textsc{FedADAM} are given (Appendix \ref{subsubsec.owgt_fedadam}), but further study is required. Application of adaptivity could positively impact practical convergence experience.

In principle, mixed FL techniques are expected to have positive societal impacts insofar as they further develop the toolkit for FL (which has security and privacy benefits to users) and improve accuracy on final inference distributions. Also, we've shown (Section \ref{subsec.exp_mitigate_bias}) how mixed FL can address participation biases that arise in FL. However, the addition of server-based data to federated optimization raises the possibility that biases in large public corpora find their way into more applications of FL.

\section*{Acknowledgements}
The authors wish to thank Zachary Charles, Keith Rush, Brendan McMahan, Om Thakkar, and Ananda Theertha Suresh for useful discussions and suggestions.


\bibliographystyle{plainnat}
\bibliography{grad_transfer}

\newpage

\newpage
\appendix
\onecolumn

\section{Practical Implementation Details}
\label{app.impl}

\subsection{Download Size}

\gt (either \ow or \tw) requires sending additional data as part of the communication from server to clients at the start of a federated round. Apart from the usual model checkpoint weights, with \gt we must now also transmit gradients of the model weights w.r.t~centralized data as well. Naively, this doubles the download size as the gradient is the same size as the model. However, the centralized gradients should be amenable to compression, e.g. using an approach such as \citet{mitchell2022gradcompress}.

\subsection{Upload Size}

With \pt and \owgt, no client gradient information is used outside of the clients themselves, so there is no additional information (apart from the model deltas and aggregation weights) to upload to the server. With \twgt, client gradient information is used as part of centralized training, and thus needs to be conveyed back to the server somehow.

When the FL client optimization is SGD, the average client gradient in a round (over all clients participating, over all steps) can be determined from the model deltas and aggregation weights that are already being sent back to the server, meaning no additional upload bandwidth is necessary. The algorithm to do this is as follows.

Each client $i$ transmits back to the server a local model change $\localChange_i^{(t)}$ and an aggregation weight $p_i$ that is typically related to number of steps taken $\localStep_i$. The average \emph{total} gradient applied at client $i$ during round $t$ is:

\begin{equation}
\bar{\sgrad}^{(t)} = -\frac{1}{\lr \localStep_i} \localChange_i^{(t)}
\end{equation}

The average \emph{client} gradient (i.e., w.r.t.~just client data) at client $i$ is:

\begin{equation}
\bar{\sgrad}_i^{(t)} = -\frac{1}{\lr \localStep_i} \localChange_i^{(t)} - \centaug^{(t)}
\end{equation}

where $\centaug^{(t)}$ is the augmenting centralized gradient that was calculated from centralized data and used in round $t$. The average (across the cohort) of average client gradients, weighted by $\localStep_i$, is:

\begin{equation}
\fedsgradavg^{(t)} = -\frac{1}{\lr \sum_i \localStep_i} \sum_i \localChange_i^{(t)} - \centaug^{(t)}
\end{equation}

This average client gradient $\fedsgradavg^{(t)}$ is in the spirit of SCAFFOLD \citep{karimireddy2019scaffold} Equation 4, Option II. It will be used as the augmenting federated gradient $\fedaug^{(t+1)}$ in the subsequent round $t+1$, to augment centralized optimization. See Algorithm \ref{algo.pt_and_twgt}.

\subsection{Debugging and Hyperparameter Intuition via $\localStep=1$}

As these algorithms each involve different hyperparameters, validating that software implementations are behaving as expected is non-trivial. Something that proved useful for debugging purposes, as well as provided practical experience in understanding equivalences between the algorithms, was to perform test cases with the number of local steps $\localStep$ set to 1. In this setting, the three mixed FL algorithms are effectively identical and should make equivalent progress during training. 


Note that the convergence bounds of Table \ref{tab.conv_summary} hold for $\localStep \geq 2$, so this takes us outside the operating regime where the bounds predict performance. It also takes us outside an operating regime that is typically useful (FL use cases generally find multiple steps per round to be beneficial). But it does serve a purpose when debugging. 

\newpage

\section{Convergence Theorems}
\label{app.convergence_theorems}

The three subsections that follow state theorems for convergence (to an error smaller than $\epsilon$) for the respective mixed FL algorithms. The convergence bounds are summarized in Table \ref{tab.conv_summary} in Section \ref{sec.conv}. Tables \ref{tab.lr_limit_summary} and \ref{tab.slr_assumptions} convey some supporting aspects of the convergence bounds, about limits on effective step size ($\tilde{\lr} = \lr \slr \localStep$) and assumptions on learning rates.

\begin{table*}
\caption{Maximum effective federated step size, $\tilde{\lr} = \lr \slr \localStep$, for convergence bounds in Appendix \ref{app.convergence_theorems} and Table \ref{tab.conv_summary}. When applicable (\ptabbr, \twgtabbr) the effective centralized step size, $\clr \localStep$, shares the same maximum (and assume that merging learning rate $\mlr$ is $1$). $\beta$ is the smoothness bound (Def. \ref{def.beta}).}
\label{tab.lr_limit_summary}
\begin{center}
\begin{sc}
\begin{tabular}{lccc}
\toprule
 & \ptabbr & \owgtabbr & \twgtabbr \\
\midrule
$\mu$-Convex& 
$\frac{1}{6 \left( 1 + B^2\right) \beta}$&
$\frac{1}{8 \beta}$& 
$\min\left(\frac{1}{81 \beta}, \frac{1}{15 \mu}\right)$\\
Convex& 
$\frac{1}{6 \left( 1 + B^2\right) \beta}$& 
$\frac{1}{8 \beta}$& 
$\frac{1}{81 \beta}$\\
\small{Nonconvex}&
$\frac{1}{6 \left( 1 + B^2\right) \beta}$&
$\frac{1}{18 \beta}$&
$\frac{1}{24 \beta}$\\
\bottomrule
\end{tabular}
\end{sc}
\end{center}
\vskip -0.1in
\end{table*}

\begin{table*}
\caption{Assumptions on merging or server learning rates, for convergence bounds in Appendix \ref{app.convergence_theorems} and Table \ref{tab.conv_summary}.}
\label{tab.slr_assumptions}
\begin{center}
\begin{tabular}{llll}
\toprule
 & \ptabbr & \owgtabbr & \twgtabbr \\
\midrule
\small{\textsc{(Assumes)}}&$\mlr\geq1$&$\slr\geq\sqrt{S}$&$\mlr\geq1$
\\
\bottomrule
\end{tabular}
\end{center}
\vskip -0.1in
\end{table*}

\subsection{\pt}
\label{subsec.pt}

Given Assumption \ref{assump.IID}, one can view \pt as a `meta-\fedavg' involving two `meta-clients'. One meta-client is the population of IID federated clients (collectively having loss $\fedloss$), and the other meta-client is the centralized data at the datacenter (having loss $\centloss$). As such, we can take the convergence theorem for \fedavg derived in \citet{karimireddy2019scaffold} (Section 3, Theorem I) and observe that it applies to the number of rounds $\totRounds$ to reach convergence in the \pt scenario.

\begin{theorem}
For \pt, where the federated data is IID (Assumption \ref{assump.IID}), for $\beta$-smooth functions $\fedloss$ and $\centloss$ which satisfy Definition \ref{def.bgd}, the number of rounds $\totRounds$ to reach an expected error smaller than $\epsilon$ is:

\begin{small}
\begin{center}
\begin{tabular}{l l}
$\mu$-Strongly convex: & 
$\totRounds = \tilde{\mathcal{O}}\left( \frac{(\clientvar + S \centvar)}{\localStep S \mu \epsilon} + \frac{G\sqrt{\beta}}{\mu\sqrt{\epsilon}} + \frac{B^2 \beta}{\mu}\log(\frac{1}{\epsilon}) \right)$ \\
General convex: & 
$\totRounds = \mathcal{O}\left( \frac{(\clientvar + S \centvar) D^2}{\localStep S \epsilon^2} + \frac{G \sqrt{\beta}}{\epsilon^{\frac{3}{2}}} + \frac{B^2 \beta D^2}{\epsilon} \right)$ \\
Non-convex: &
$\totRounds = \mathcal{O}\left( \frac{(\clientvar + S \centvar) \beta F}{\localStep S \epsilon^2} + \frac{G \sqrt{\beta}}{\epsilon^{\frac{3}{2}}} + \frac{B^2 \beta F}{\epsilon} \right)$ \\
\end{tabular}
\end{center}
where $F = \loss(\vx^{(0)}) - \loss(\vx^*)$, $D^2 = \left\| \vx^{(0)} - \vx^* \right\|^2$. Conditions for above: $\mlr \geq 1;  \clr, \lr\slr \leq \frac{1}{6 \left( 1 + B^2\right) \beta \localStep \mlr}$.
\end{small}

\label{theorem.pt}
\end{theorem}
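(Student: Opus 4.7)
The plan is to recognize \pt as a ``meta-\fedavg'' with exactly two meta-clients and then appeal to the \fedavg convergence theorem of \citet{karimireddy2019scaffold} (Section 3, Theorem I) as a black box. The first meta-client is the entire federated population, with local loss $\fedloss$; the second meta-client is the datacenter, with local loss $\centloss$. Both meta-clients are always sampled; they perform $\localStep$ local steps in parallel (the \clientupdate loop and the central loop of Algorithm~\ref{algo.pt_and_twgt}), and their updates are combined by \mergeopt, which plays the role of \serveropt in the meta-instance. The mixed loss $\loss = \fedloss + \centloss$ of \eqref{eqn.overall_loss} is, up to a factor of two, the uniform average of the two meta-client losses, so any convergence guarantee established for the meta-loss transfers to a guarantee on $\loss$.

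Once the reduction is set up, each hypothesis required by the black-box theorem must be verified. Smoothness and (when applicable) $\mu$-convexity of each meta-client loss follow directly from the assumptions in Section~\ref{subsec.conv_prelims}. The bounded gradient dissimilarity condition in SCAFFOLD's Definition~A1 is exactly Definition~\ref{def.bgd} at $w_\fed = w_\cent = \nicefrac{1}{2}$. The per-meta-client stochastic-gradient variance is $\clientvar/S$ for the federated meta-client by \eqref{eqn.fed_stoch_grad_var} and $\centvar$ for the central meta-client by \eqref{eqn.cent_stoch_grad_var}; combining these with a meta-cohort of size two produces the composite variance $\clientvar/S + \centvar$, which is precisely what is needed to yield the leading $(\clientvar + S\centvar)/(\localStep S)$ factor in each row of Table~\ref{tab.conv_summary}. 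The step-size hierarchy also maps cleanly onto SCAFFOLD's: $\clr$ and $\lr\slr$ serve as the meta-client learning rate, while $\mlr$ serves as the meta-server rate, and the imposed condition $\clr, \lr\slr \leq 1/(6(1+B^2)\beta\localStep\mlr)$ together with $\mlr \geq 1$ (Table~\ref{tab.slr_assumptions}) reproduces SCAFFOLD's effective-step-size precondition.

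With the reduction established and the hypotheses verified, the three displayed bounds in the $\mu$-strongly convex, general convex, and non-convex regimes follow by plugging the meta-instance parameters ($\beta$, $\mu$, $G$, $B$, composite variance, $\localStep$, and meta-cohort size two) into the corresponding parts of SCAFFOLD's Theorem~I. The quantities $F = \loss(\vx^{(0)}) - \loss(\vx^*)$ and $D^2 = \lVert \vx^{(0)} - \vx^* \rVert^2$ are inherited unchanged from the meta-setting since the reduction preserves the iterate sequence and its optimum.

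The main obstacle I anticipate is the careful bookkeeping of constant factors introduced by the ``sum versus average'' rescaling between $\loss$ and the uniform-average meta-loss: the factor of two has to be tracked through $G$, $B$, the composite variance, and the smoothness and convexity constants so that the bound is ultimately stated in terms of the original $\loss$ rather than the rescaled meta-loss. Beyond this accounting, no new analytical ideas are required; everything reduces to substitution into a known theorem, which is why the \pt bounds are essentially identical in form to vanilla \fedavg with $N = 2$ and $S = 2$.
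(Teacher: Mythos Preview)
Your proposal is correct and takes essentially the same approach as the paper: view \pt as a meta-\fedavg with two meta-clients (the federated population with variance $\clientvar/S$ and the datacenter with variance $\centvar$), verify that Definition~\ref{def.bgd} instantiates the $(G,B)$-BGD assumption, and invoke Theorem~I of \citet{karimireddy2019scaffold} with the identified parameters. The paper's proof sketch is in fact briefer than yours and omits the constant-factor bookkeeping you flag as the main obstacle, simply noting that the analysis ``follows in a straightforward manner by accounting for the variance in appropriate places.''
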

\begin{proof} The analysis is exactly along the lines of the analysis in \citet{karimireddy2019scaffold}, Appendix D.2, in the context of \fedavg. Effectively, the analysis applies to the `meta-\fedavg' problem of \pt, with two `meta-clients', one being the central loss/data (with stochastic gradients with variance of $\centvar$) and the other being the federated loss/data. The homogeneity of the clients and the averaging over the sampled clients effectively reduces the variance of the stochastic gradients to $\nicefrac{\clientvar}{S}$. The analysis follows in a straightforward manner by accounting for the variance in appropriate places. We omit the details for brevity.
\end{proof}

\subsection{\owgt}
\label{subsec.owgt}

We now provide convergence bounds for the \owgt scenario. Unlike \pt, which could be thought of as a `meta' version of an existing FL algorithm (\fedavg), \owgt is an entirely new FL algorithm. As such, we must formulate a novel proof (Appendix \ref{app.proof_owgt}) of its convergence bounds.

Given Assumption \ref{assump.IID}, the following Theorem gives the number of rounds to reach a given expected error. 

\begin{theorem}
For \owgt, where the federated data is IID (Assumption \ref{assump.IID}), for $\beta$-smooth functions $\loss_i$ and $\centloss$, the number of rounds $\totRounds$ to reach an expected error smaller than $\epsilon$ is:

\begin{small}
\begin{center}
\begin{tabular}{l l l}
$\mu$-Strongly convex: & 
$\totRounds = \tilde{\mathcal{O}} \left( \frac{(\clientvar + \localStep S \centvar)}{\localStep S \mu \epsilon} + \frac{\beta}{\mu}\log(\frac{1}{\epsilon})\right)$ & 
when $\slr > \sqrt{\frac{5}{8}S} , \lr \leq \frac{1}{8 \beta \localStep \slr}$ \\
General convex: &
$\totRounds = \mathcal{O} \left( \frac{(\clientvar + \localStep S \centvar) D^2}{\localStep S \epsilon^2} + \frac{\beta D^2}{\epsilon} \right)$ &
when $\slr > \sqrt{\frac{5}{8}S} , \lr \leq \frac{1}{8 \beta \localStep \slr}$ \\
Non-convex: &
$\totRounds = \mathcal{O} \left( \frac{(\clientvar + \localStep S \centvar) \beta F}{\localStep S \epsilon^2} + \frac{\beta F}{\epsilon} \right)$ &
when $\slr \geq \sqrt{S} , \lr \leq \frac{1}{18 \beta \localStep \slr}$ \\
\end{tabular}
\end{center}
where $F = \loss(\vx^{(0)}) - \loss(\vx^*)$, $D^2 = \left\| \vx^{(0)} - \vx^* \right\|^2$.
\end{small}

\label{theorem.owgt}
\end{theorem}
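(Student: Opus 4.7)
The plan is to treat \owgt as single-level stochastic descent on the mixed loss $\loss$, in which the round-$t$ global update is $\vx^{(t+1)} = \vx^{(t)} - \tilde{\lr}\, U^{(t)}$ with effective step size $\tilde{\lr} := \slr \lr \localStep$ and stochastic direction
\[
U^{(t)} \;=\; \centsgrad(\vx^{(t)}) \;+\; \frac{1}{\localStep\, |\activeClients^{(t)}|}\sum_{i\in\activeClients^{(t)}}\sum_{k=0}^{\localStep-1}\sgrad_i(\vx_i^{(t,k)}).
\]
The feature that distinguishes \owgtabbr from \ptabbr and \twgtabbr, and that drives the entire analysis, is that $\centsgrad(\vx^{(t)})$ is \emph{frozen} throughout the round: $\mathbb{E}[U^{(t)}\mid\vx^{(t)}]$ equals $\nabla\loss(\vx^{(t)})$ only up to a drift-induced bias on the federated terms, and the noise in $U^{(t)}$ does not average out the central variance.

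First I would prove a client-drift lemma bounding $e_{i,k} := \mathbb{E}\|\vx_i^{(t,k)} - \vx^{(t)}\|^2$. Unrolling the local-SGD recursion with AM--GM and using $\beta$-smoothness via $\|\nabla\fedloss(\vx_i^{(t,k)}) + \nabla\centloss(\vx^{(t)})\|^2 \lesssim \|\nabla\loss(\vx^{(t)})\|^2 + \beta^2 e_{i,k}$ yields a Gr\"onwall-type inequality whose constants are controlled by the step-size constraint $\lr \leq 1/(8\beta\localStep\slr)$, giving $\tfrac{1}{\localStep}\sum_k e_{i,k} \lesssim \lr^2\localStep\bigl(\clientvar + \centvar + \|\nabla\loss(\vx^{(t)})\|^2\bigr)$. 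Next I would decompose the variance of $U^{(t)}$ into a federated piece of order $\clientvar/(\localStep S)$, obtained by viewing the $\localStep S$ stochastic terms as a martingale-difference average under Assumption~\ref{assump.IID}, and a central piece of order $\centvar$ that does \emph{not} shrink in $\localStep$ or $S$ because only one centralized batch is drawn per round; this is precisely what produces the $(\clientvar + \localStep S\,\centvar)/(\localStep S)$ factor in the theorem statement.

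I would then apply the $\beta$-smoothness descent inequality
\[
\mathbb{E}\loss(\vx^{(t+1)}) \leq \loss(\vx^{(t)}) - \tilde{\lr}\bigl\langle\nabla\loss(\vx^{(t)}),\,\mathbb{E}U^{(t)}\bigr\rangle + \tfrac{\beta\tilde{\lr}^2}{2}\mathbb{E}\|U^{(t)}\|^2.
\]
Writing $\mathbb{E}U^{(t)} = \nabla\loss(\vx^{(t)}) + b^{(t)}$ with $\|b^{(t)}\|^2 \leq \tfrac{\beta^2}{\localStep S}\sum_{i,k} e_{i,k}$, using Cauchy--Schwarz and AM--GM to absorb the cross term into a half-step of $\|\nabla\loss(\vx^{(t)})\|^2$, and substituting the variance decomposition gives a per-round recursion of the form
\[
\mathbb{E}\loss(\vx^{(t+1)}) - \loss(\vx^{(t)}) \leq -\tfrac{\tilde{\lr}}{4}\|\nabla\loss(\vx^{(t)})\|^2 + c\,\tilde{\lr}^2\beta\Bigl(\tfrac{\clientvar}{\localStep S} + \centvar\Bigr).
\]
From here the three stated rates follow by standard tuning of $\tilde{\lr}$: for the non-convex case, telescope over $t$, divide by $\totRounds\tilde{\lr}$, and minimize the resulting upper bound on $\min_t \mathbb{E}\|\nabla\loss(\vx^{(t)})\|^2$ over $\tilde{\lr}\leq 1/(18\beta)$; for the general and strongly convex cases, feed the recursion into the standard ``SGD with bounded variance'' rate lemma (in the style of \citet{karimireddy2019scaffold}, Lemmas~1--2), which turns a per-round bound of this shape into the claimed $\mathcal{O}(1/\epsilon)+\mathcal{O}(\log(1/\epsilon))$-type expressions.

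The most delicate step is identifying where the lower bound $\slr \gtrsim \sqrt{S}$ comes from. I expect it to surface when closing the loop between the drift lemma and the descent inequality: the federated component of $\|b^{(t)}\|^2$ contributes a residual of order $\tilde{\lr}\cdot\beta^2\lr^2\localStep\,\clientvar$ that must be dominated by the main variance term $\tilde{\lr}^2\beta\,\clientvar/(\localStep S)$. Substituting $\lr = \tilde{\lr}/(\slr\localStep)$ and using $\tilde{\lr}\lesssim 1/\beta$ reduces this requirement to $\slr^2 \gtrsim S$, matching the stated $5S/8$ up to bookkeeping. Once that threshold is pinned down, propagating the slightly tighter constants needed for the non-convex case ($\slr\geq\sqrt{S}$, $\lr\leq 1/(18\beta\localStep\slr)$) is a routine exercise in constant-chasing.
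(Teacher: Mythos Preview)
Your proposal is essentially the paper's proof for the \emph{non-convex} case: the paper also writes the round update as $-\tilde{\lr}\,U^{(t)}$, proves a variance bound on the server update separating the $\clientvar/(\localStep S)$ and $\centvar$ pieces, bounds the drift $\mathcal{E}^{(t)}$ by unrolling a one-step recursion, plugs both into the smoothness descent inequality, and telescopes. Your explanation of where $\slr \gtrsim \sqrt{S}$ enters (balancing the drift-induced $\clientvar$ residual against the main $\tilde{\lr}^2\beta\,\clientvar/(\localStep S)$ variance term) is exactly how the paper obtains it. One small slip: your drift bound should read $\tfrac{1}{\localStep}\sum_k e_{i,k} \lesssim \lr^2\localStep\,\clientvar + \lr^2\localStep^2\bigl(\centvar + \|\nabla\loss(\vx^{(t)})\|^2\bigr)$, with an extra factor $\localStep$ on the centralized and full-gradient pieces, since the frozen $\centsgrad(\vx^{(t)})$ is applied at every local step; this does not change the final rate once $\slr^2 \geq S$ absorbs it.

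For the two convex cases, however, the paper does \emph{not} use the smoothness descent inequality. It tracks $\|\vx^{(t)} - \vx^*\|^2$ directly, bounding the cross term $\langle \nabla\fedloss(\vx_i^{(t,k)}), \vx^* - \vx^{(t)}\rangle$ via a perturbed-strong-convexity lemma, converting the residual $\|\nabla\loss(\vx^{(t)})\|^2$ in the variance bound to $4\beta(\loss(\vx^{(t)}) - \loss(\vx^*))$, and using a contractive-mapping argument for the drift. This matters for the general convex ($\mu=0$) rate: your function-value recursion only yields a bound on $\min_t\|\nabla\loss(\vx^{(t)})\|^2$, and the sublinear rate lemma you cite (Lemma~2 of Karimireddy et al.) requires a recursion of the shape $r_{t+1} \leq r_t - c\,\tilde{\lr}\,s_t + c'\tilde{\lr}^2$ with $r_t = \|\vx^{(t)}-\vx^*\|^2$ and $s_t = \loss(\vx^{(t)}) - \loss(\vx^*)$ to produce the stated $D^2$ dependence. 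You cannot obtain that from a descent-in-$\loss$ recursion alone. For the strongly convex case your approach can be salvaged by the PL inequality $\|\nabla\loss(\vx)\|^2 \geq 2\mu(\loss(\vx) - \loss(\vx^*))$, which turns your recursion into a contraction in function values and recovers the claimed rate; but for $\mu=0$ you need to add the distance-to-optimum argument the paper carries out.
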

\begin{proof} 
Detailed proof given in Appendix \ref{app.proof_owgt}.
\end{proof}

\subsection{\twgt}
\label{subsec.twgt}

Given Assumption \ref{assump.IID}, one can view \twgt as a `meta-SCAFFOLD' involving two `meta-clients' (analogous to the view of \pt as `meta-\fedavg' in Subsection \ref{subsec.pt}). As such, we can take the convergence theorem for SCAFFOLD derived in \citet{karimireddy2019scaffold} (Section 5, Theorem III) and observe that it applies to the number of rounds $\totRounds$ to reach convergence in the \twgt scenario.

\begin{theorem}
For \twgt, where the federated data is IID (Assumption \ref{assump.IID}), for $\beta$-smooth functions $\fedloss$ and $\centloss$, the number of rounds $\totRounds$ to reach an expected error smaller than $\epsilon$ is:

\begin{small}
\begin{center}
\begin{tabular}{l l l}
$\mu$-Strongly convex: &
$\totRounds = \tilde{\mathcal{O}}\left( \frac{(\clientvar + S \centvar)}{\localStep S \mu \epsilon} + \frac{\beta}{\mu}\log(\frac{1}{\epsilon}) \right)$ &
when $\mlr \geq 1; \clr, \lr\slr \leq \min\left(\frac{1}{81 \beta \localStep \mlr}, \frac{1}{15 \mu \localStep \mlr}\right)$ \\
General convex: &
$\totRounds = \mathcal{O}\left( \frac{(\clientvar + S \centvar) D^2}{\localStep S \epsilon^2} + \frac{\beta D^2}{\epsilon} \right)$ &
when $\mlr \geq 1; \clr, \lr\slr \leq \frac{1}{81 \beta \localStep \mlr}$ \\   
Non-convex: &
$\totRounds = \mathcal{O}\left( \frac{(\clientvar + S \centvar) \beta F}{\localStep S \epsilon^2} + \frac{\beta F}{\epsilon} \right)$ &
when $\mlr \geq 1; \clr, \lr\slr \leq \frac{1}{24 \beta \localStep \mlr}$ \\
\end{tabular}
\end{center}
where $F = \loss(\vx^{(0)}) - \loss(\vx^*)$, $D^2 = \left\| \vx^{(0)} - \vx^* \right\|^2$.
\end{small}

\label{theorem.twgt}
\end{theorem}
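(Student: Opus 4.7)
The plan is to mirror the argument used for \pt in Subsection \ref{subsec.pt}, but recognizing \twgt as a ``meta-SCAFFOLD'' rather than a ``meta-\fedavg''. I would set up two meta-clients: one corresponding to the centralized loss $\centloss$ with stochastic gradient $\centsgrad$ having variance bounded by $\centvar$, and one corresponding to the federated loss $\fedloss$ with aggregated cohort gradient $\fedsgrad$ whose variance is bounded by $\nicefrac{\clientvar}{S}$ by Assumption \ref{assump.IID} and Equation \ref{eqn.fed_stoch_grad_var}. Both meta-objectives are $\beta$-smooth (and $\mu$-convex when applicable), inherited from the corresponding properties of $\loss_i$ and $\centloss$. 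Because SCAFFOLD was designed precisely to eliminate the drift term arising from cross-client gradient dissimilarity, no $(G,B)$-BGD assumption is needed, which explains the absence of $G$ and $B$ from the bound relative to \pt.

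The core bookkeeping step is to verify that Algorithm \ref{algo.pt_and_twgt} (restricted to the two meta-clients) implements SCAFFOLD with Option II control variates. The augmenting gradients $\centaug^{(t)}$ and $\fedaug^{(t)}$ play the role of SCAFFOLD's control variates: during local optimization each meta-client corrects its own stochastic gradient by adding the other meta-client's control variate (the central inner loop uses $\centsgrad + \fedaug$, and \clientupdate uses $\sgrad_i + \centaug$). The bookkeeping update rules
\[
\centaug^{(t+1)} = -\mergedChange_\cent^{(t)}/(\lr \localStep) - \fedaug^{(t)}, \qquad \fedaug^{(t+1)} = -\sum_{i \in \activeClients^{(t)}} \localChange_i^{(t)} \big/ \bigl(\lr \sum_{i} \localStep_i\bigr) - \centaug^{(t)}
\]
then coincide with SCAFFOLD's Option II update in Equation (4) of \citet{karimireddy2019scaffold}: each new control variate equals the realized average of the meta-client's own stochastic gradients across the round, since the additive $\fedaug^{(t)}$ (resp.~$\centaug^{(t)}$) term is subtracted back out after converting the model delta into an average gradient. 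The outer $\mergeopt$/$\serveropt$ wrapping with $\mlr \geq 1$ plays the role of SCAFFOLD's server step, absorbed into the effective server learning rate.

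With the equivalence in hand, the bound follows by direct invocation of Theorem III of \citet{karimireddy2019scaffold} applied to the two meta-clients with per-meta-client variances $\centvar$ and $\nicefrac{\clientvar}{S}$. Averaging these variances and clearing constants produces the leading term $(\clientvar + S \centvar)/(\localStep S \mu \epsilon)$ in the strongly convex case, with analogous expressions $(\clientvar + S \centvar) D^2/(\localStep S \epsilon^2)$ and $(\clientvar + S \centvar) \beta F/(\localStep S \epsilon^2)$ in the general-convex and non-convex cases, while the optimization terms $\frac{\beta}{\mu}\log(1/\epsilon)$, $\frac{\beta D^2}{\epsilon}$, and $\frac{\beta F}{\epsilon}$ are inherited unchanged from SCAFFOLD's analysis. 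The stated learning-rate conditions are exactly those under which SCAFFOLD's drift bound goes through, rescaled by $\mlr$ and $\localStep$.

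The main obstacle is the control-variate identification in the second step: reconciling the sign conventions, the $1/(\lr \localStep)$ normalizers, and the $\mergeopt$/$\serveropt$ wrappers with the canonical SCAFFOLD updates requires careful but routine algebra. Once verified, no new technical content is needed beyond quoting \citet{karimireddy2019scaffold}, Appendix E; the analysis is genuinely the ``meta'' version of that argument, with homogeneous federated clients collapsing into a single meta-client of reduced variance $\nicefrac{\clientvar}{S}$.
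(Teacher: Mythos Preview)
Your proposal is correct and follows essentially the same approach as the paper: the paper's proof likewise casts \twgt as a ``meta-SCAFFOLD'' with two meta-clients (central loss with variance $\centvar$, federated loss with variance $\nicefrac{\clientvar}{S}$ via Assumption \ref{assump.IID}) and then invokes the analysis in \citet{karimireddy2019scaffold}, Appendix E, omitting the detailed bookkeeping. If anything, your control-variate identification is more explicit than what the paper writes out.
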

\begin{proof} 
The analysis is exactly along the lines of the analysis in \citet{karimireddy2019scaffold}, Appendix E, in the context of SCAFFOLD. Effectively, the analysis applies to the `meta-SCAFFOLD' problem of \twgt, with two `meta-clients', one being the central loss/data (with stochastic gradients with variance of $\centvar$) and the other being the federated loss/data. The homogeneity of the clients and the averaging over the sampled clients effectively reduces the variance of the stochastic gradients to $\nicefrac{\clientvar}{S}$. The analysis follows in a straightforward manner by accounting for the variance in appropriate places. We omit the details for brevity.
\end{proof}

\newpage

\section{Experiments: Additional Information and Results}
\label{app.added_exp}

\subsection{$\tilde{G}_t$ and $\tilde{B}_t$ Metrics Plots}
\label{subsec.metrics_summary}

Table \ref{tab.metrics_summary} is an informative comparison of the mixed FL optimization landscape of the three respective experiments conducted in Section \ref{sec.exp}. It includes maximum values for the metrics $\tilde{G}_t$ and $\tilde{B}_t$ (the sampled approximations of the parameters defined in ($G,B$)-BGD (Definition \ref{def.bgd}). Here we provide some additional information. 

Figure \ref{fig.metrics_summary} plots these sampled approximation metrics over the first 100 rounds of training. We ran 5 simulations per experiment and took the maximum at each round across simulations. We used the same hyperparameters as described below (in Subsection \ref{subsec.experiment_configs}), except taking only a single step per round ($\localStep=1$).

\begin{figure}[t]
    \centering
    \subfloat[\centering $\tilde{G}_t^2$ vs.~Round $t$.]{{\includegraphics[width=6.8cm,trim={0.54cm 0 0.5375cm 0},clip]{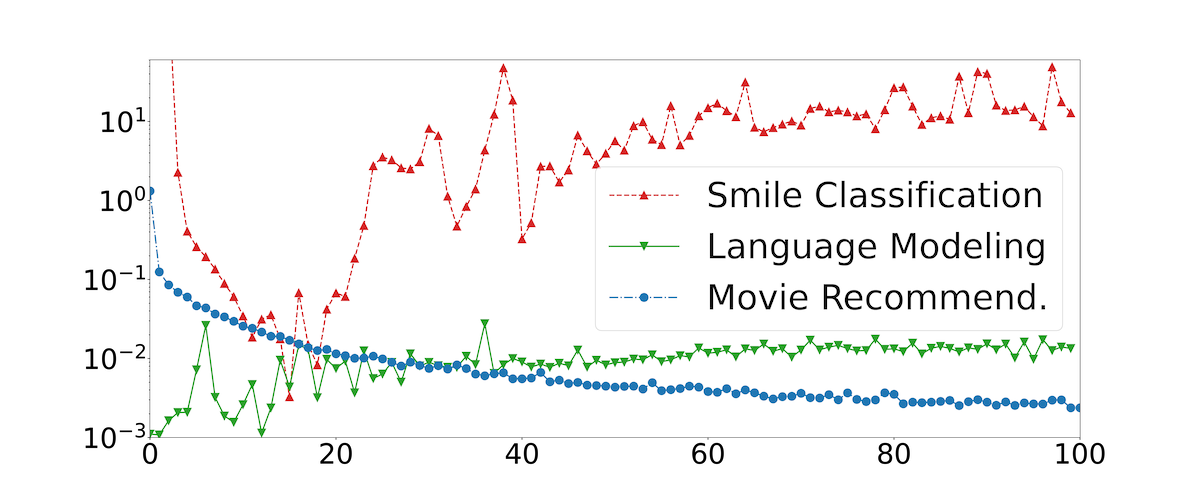} }}\hfill%
    \subfloat[\centering $\tilde{B}_t^2$ vs.~Round $t$.]{{\includegraphics[width=6.8cm,trim={0.54cm 0 0.5375cm 0},clip]{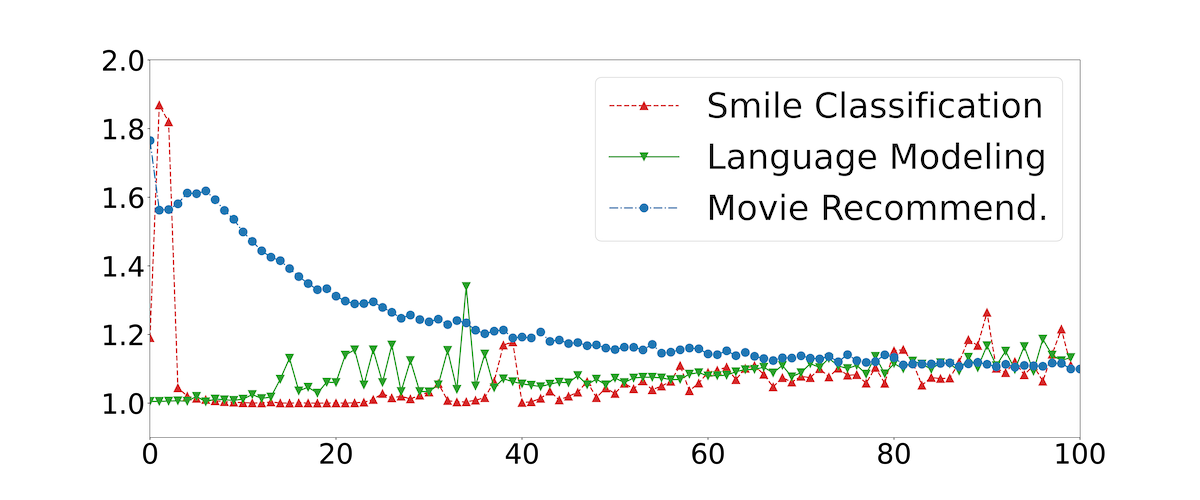} }}\vskip0.1pt
    \caption{Sampled approximations of mixed FL ($G,B$)-BGD, for the three experiments in Section \ref{sec.exp}.}%
    \label{fig.metrics_summary}%
\end{figure}

\subsection{Additional Details for Experiments in Section \ref{sec.exp}}
\label{subsec.experiment_configs}

\paragraph{General Notes on Hyperparameter Selection} For the various experiments in Section \ref{sec.exp}, we empirically determined good hyperparameter settings (as documented in Tables \ref{tab.celeba_fed_hparams}-\ref{tab.movielens_cent_hparams}). Our general approach for each task was to leave server learning rate $\slr$ at 1, select a number of steps $\localStep$ that made the most use of the examples in each client's cache, and then do a sweep of client learning rates $\lr$ to determine a setting that was fast but didn't diverge. For \pt and \twgt, which involve central optimization and merging, we set the merging learning rate $\mlr$ to be 1, and set the central learning rate $\clr$ as the product of client and server learning rates: $\clr = \lr \slr$ (and since $\slr=1$, this meant client and central learning rates were equal). 
\paragraph{General Notes on Comparing Algorithms} We generally kept hyperparameters equivalent when comparing the algorithms. For example, we aimed to set batch sizes for all algorithms such that central and client gradient variances $\clientvar$ and $\centvar$ have equivalent impact on convergence (meaning $|\centactiveBatch| = S |\activeBatch_i|$ for \ptabbr and \twgtabbr, and $|\centactiveBatch| = \localStep S |\activeBatch_i|$ for \owgtabbr). In the case of language model training with \owgt, following this rubric would have meant a central batch size $|\centactiveBatch|$ of 12800; we reduced this in half for practical computation reasons. For a given task, we also generally kept learning rates the same for all algorithms. Interestingly, we observed that as $\lr$ (and $\clr$, if applicable) is increased for a given task, the $\twgt$ algorithm is the first of the three to diverge, and so we had to adjust, e.g., in the language modeling experiment we used a lower $\lr$ for $\twgtabbr$ than for $\ptabbr$ and $\owgtabbr$.

\begin{figure}
\begin{minipage}{0.4\textwidth}
\captionof{table}{Smile classifier training, \newline federated hyperparameters.}
\label{tab.celeba_fed_hparams}
\begin{center}
\begin{tabular}{ccccc}
\toprule
\multicolumn{5}{c}{(all)} \\
$S$ & $\left|\activeBatch_i\right|$ & $\localStep$ & $\lr$ & $\slr$ \\
\midrule
$100$ & $5$ & $2$ & $0.01$ & $1.0$ \\
\bottomrule
\end{tabular}
\end{center}
\end{minipage}
\hfill%
\begin{minipage}{0.55\textwidth}
\captionof{table}{Smile classifier training, \newline centralized and overall hyperparameters.}
\label{tab.celeba_cent_hparams}
\begin{center}
\begin{tabular}{c|cccc|cc}
\toprule
(\owgtabbr) & \multicolumn{4}{c|}{(\ptabbr and \twgtabbr)} & \multicolumn{2}{c}{(all)} \\
$\left|\activeBatch_c\right|$ & $\left|\activeBatch_c\right|$ & $K$ & $\clr$ & $\mlr$ & $w_\fed$ & $w_\cent$ \\
\midrule
$1000$ & $500$ & $2$ & $=\lr$ & $1.0$ & $0.5$ & $0.5$ \\
\bottomrule
\end{tabular}
\end{center}
\end{minipage}

\vspace{0.7cm}
\begin{minipage}{0.4\textwidth}
\captionof{table}{Language model training, \newline federated hyperparameters.}
\label{tab.ncp_fed_hparams}
\begin{center}
\begingroup
\setlength{\tabcolsep}{4pt}
\begin{tabular}{ccccc}
\toprule
\multicolumn{5}{c}{(all)} \\
$S$ & $\left|\activeBatch_i\right|$ & $\localStep$ & $\lr$ & $\slr$ \\
\midrule
$100$ & $8$ & $16$ & $2.0$ {\scriptsize (\twgtabbr: $1.0$)} & $1.0$ \\
\bottomrule
\end{tabular}
\endgroup
\end{center}
\end{minipage}
\hfill%
\begin{minipage}{0.55\textwidth}
\captionof{table}{Language model training, \newline centralized and overall hyperparameters.}
\label{tab.ncp_cent_hparams}
\begin{center}
\begin{tabular}{c|cccc|cc}
\toprule
(\owgtabbr) & \multicolumn{4}{c|}{(\ptabbr and \twgtabbr)} & \multicolumn{2}{c}{(all)} \\
$\left|\activeBatch_c\right|$ & $\left|\activeBatch_c\right|$ & $K$ & $\clr$ & $\mlr$ & $w_\fed$ & $w_\cent$ \\
\midrule
$6400$ & $800$ & $16$ & $=\lr$ & $1.0$ & $0.73$ & $0.27$ \\
\bottomrule
\end{tabular}
\end{center}
\end{minipage}

\vspace{0.7cm}
\begin{minipage}{0.4\textwidth}
\captionof{table}{Movie recommender training, \newline federated hyperparameters.}
\label{tab.movielens_fed_hparams}
\begin{center}
\begin{tabular}{ccccc}
\toprule
\multicolumn{5}{c}{(all)} \\
$S$ & $\left|\activeBatch_i\right|$ & $\localStep$ & $\lr$ & $\slr$ \\
\midrule
$100$ & $16$ & $10$ & $0.5$ & $1.0$ \\
\bottomrule
\end{tabular}
\end{center}
\end{minipage}
\hfill%
\begin{minipage}{0.55\textwidth}
\captionof{table}{Movie recommender training, \newline centralized and overall hyperparameters.}
\label{tab.movielens_cent_hparams}
\begin{center}
\begin{tabular}{c|cccc|cc}
\toprule
(\owgtabbr) & \multicolumn{4}{c|}{(\ptabbr and \twgtabbr)} & \multicolumn{2}{c}{(all)} \\
$\left|\activeBatch_c\right|$ & $\left|\activeBatch_c\right|$ & $K$ & $\clr$ & $\mlr$ & $w_\fed$ & $w_\cent$ \\
\midrule
$-$ & $-$ & $10$ & $=\lr$ & $1.0$ & $0.5$ & $0.5$ \\
\bottomrule
\end{tabular}
\end{center}
\end{minipage}
\end{figure}

\subsubsection{CelebA Smile Classification}

\paragraph{Datasets} The CelebA federated dataset consists of 9,343 raw clients, which can be broken into train/evaluation splits of 8,408/935 clients, respectively \citep{tff2022celeba}. The raw clients have average cache size of $\sim21$ face images. The images are about equally split between smiling and unsmiling faces. In order to enlarge cache size, we group three raw clients together into one composite client, so our federated training data involves 2,802 clients with caches of (on average) $\sim63$ face images (and about half that when we limit the clients to only have smiling faces).

Our evaluation data consists of both smiling and unsmiling faces, and is meant to stand in for the inference distribution (where accurate classification of both smiling and unsmiling inputs is necessary). Note that as CelebA contains smiling and unsmiling faces in nearly equal amounts, a high evaluation accuracy cannot come at the expense of one particular label being poorly classified.

\paragraph{Model Architecture} The architecture used is a very basic fully-connected neural network\footnotemark~with a single hidden layer of 64 neurons with ReLU activations.

\footnotetext{Adapted from an online tutorial involving CelebA binary attribute classification: \href{https://www.tensorflow.org/responsible_ai/fairness_indicators/tutorials/Fairness_Indicators_TFCO_CelebA_Case_Study}{\textit{``TensorFlow Constrained Optimization Example Using CelebA Dataset''}}.}

\paragraph{Hyperparameter Settings} The settings used in mixed FL training are shown in Tables \ref{tab.celeba_fed_hparams} and \ref{tab.celeba_cent_hparams}.

\subsubsection{Stack Overflow/Wikipedia Language Modeling}

\paragraph{Datasets} The Stack Overflow dataset is a large-scale federated dataset, consisting of 342,477 training clients and 204,088 evaluation clients \citep{tff2022stackoverflow}. The training clients have average cache size of $\sim400$ examples, and evaluation clients have average cache size of $\sim80$ examples. The Wikipedia dataset (\texttt{wikipedia/20201201.en}) consists of 6,210,110 examples \citep{tf2022wikipedia}. The raw text data is processed into sequences of 100 characters. 

Our evaluation data is a combined dataset consisting of randomly shuffled examples drawn from the Stack Overflow evaluation clients and the Wikipedia dataset.

\paragraph{Model Architecture} The architecture used is a recurrent neural network (RNN)\footnotemark~with an embedding dimension of 256 and 1024 GRU units.

\footnotetext{Adapted from an online tutorial involving next character prediction: \href{https://www.tensorflow.org/text/tutorials/text_generation}{\textit{``Text generation with an RNN''}}.}

\paragraph{Hyperparameter Settings} The settings used in mixed FL training are shown in Tables \ref{tab.ncp_fed_hparams} and \ref{tab.ncp_cent_hparams}.

\paragraph{Evaluation Accuracy on Individual Data Splits} Figure \ref{fig.ncp_acc_on_splits} shows the accuracy of models (trained either via mixed FL or pure FL) when evaluated on only federated data (Stack Overflow) or only centralized data (Wikipedia) individually. Confirming what we expect, the various mixed FL algorithms do a good job of achieving accuracy on both datasets. But if we train only via FL (without mixing), then we do a good job of learning the federated data (Stack Overflow) character sequences, but aren't nearly as accurate at predicting the next character in centralized data (Wikipedia) sequences. 

\begin{figure}
    \centering
    \subfloat[\centering Evaluation Accuracy on Stack Overflow vs.~Round.]{{\includegraphics[width=6.8cm,trim={0.65cm 0 0.5375cm 0},clip]{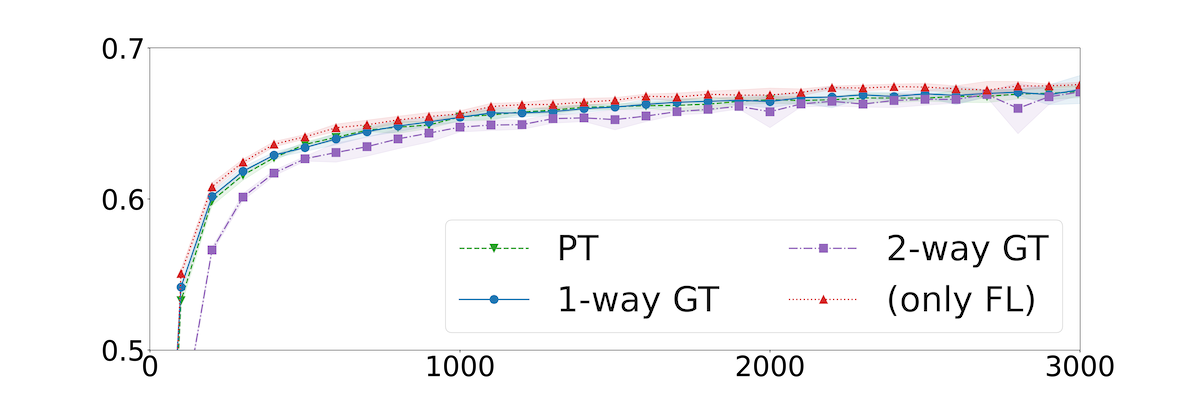} }}\hfill%
    \subfloat[\centering Evaluation Accuracy on Wikipedia vs.~Round.]{{\includegraphics[width=6.8cm,trim={0.65cm 0 0.5375cm 0},clip]{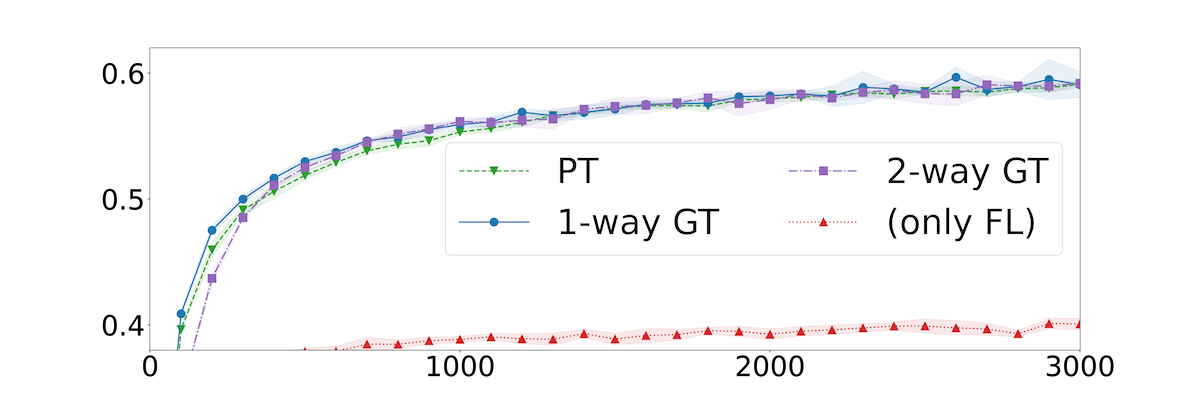} }}\vskip0.1pt
    \caption{Language model accuracy, evaluated on only decentralized (left) or centralized (right) data.}%
    \label{fig.ncp_acc_on_splits}%
\end{figure}

\subsubsection{MovieLens Movie Recommendation}
\label{app.movielens_setup}

\paragraph{Dataset} The MovieLens 1M dataset \citep{movielens1M} contains approximately 1 million ratings from 6,040 users on 3,952 movies. Examples are grouped by user, forming a natural data partitioning across clients. For all mixed FL algorithms we study, we keep examples from 20\% of users (randomly selecting 20\% of users and shuffling the examples of these users) as the datacenter data, and use examples from the remaining 80\% users as client data. With this data splitting strategy, the server data won’t include the same individual client distributions but it will still be sampled from the same meta distribution of clients. We then split the clients data into the train and test sets, resulting in 3,865 train, 483 validation, and 483 test users. The average cache size of each client is $\sim160$ examples. 

\paragraph{Model Architecture} The architecture used is the same as \cite{ning2021noniid}, a dual encoder representation learning model with a bag-of-word encoder for the left tower (which takes in a list of movies a user has seen) and a simple embedding lookup encoder for the right tower (which takes in the next movie a user sees).

\paragraph{Hyperparameter Settings} The settings used in mixed FL training are shown in Tables \ref{tab.movielens_fed_hparams} and \ref{tab.movielens_cent_hparams}.

\begin{figure}
\centering
\includegraphics[width=11.8cm]{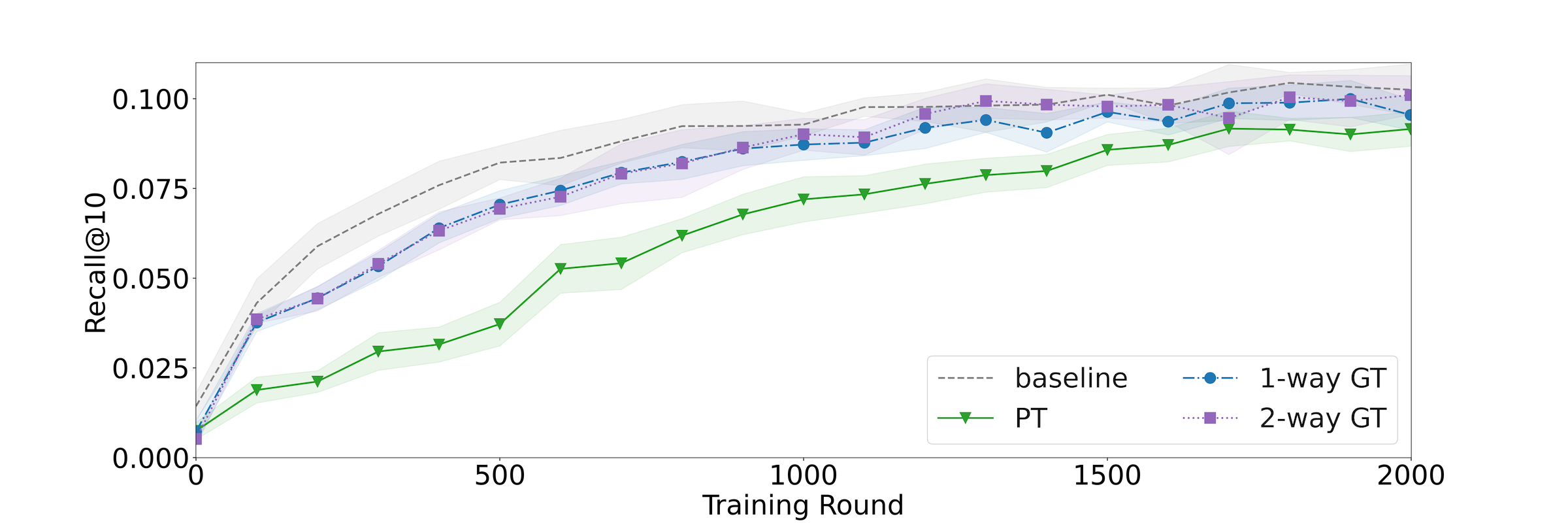}
\caption{Next  movie  prediction  performance (recall@10). }
\label{fig.nmp-sgd-recall}
\end{figure}

\paragraph{Recall@10} As mentioned in Subsection \ref{subsec.exp_regularization}, all mixed FL algorithms achieved similar global recall@10 compared to the baseline. Figure \ref{fig.nmp-sgd-recall} shows evaluation recall@10 over 2000 training rounds.

\subsection{Computation and Communication Savings for Movie Recommendation}
\label{subsec.comp_and_comm_math}

This section provides a detailed analysis of the computation and communication savings brought by mixed FL in the movie recommendation task.

For movie recommendation, both the input feature and the label are movie IDs with a vocabulary size of $N$. They share the same embedding table, with an embedding dimension of $d$. The input features and label embedding layers account for most of the model parameters in a dual encoder. Therefore, we use the total size of the feature and label embeddings to approximate the model size: $M=(N+N)\cdot d$. Batch size is $\activeBatch_i$ and local steps per round is $\localStep$. Let the averaged number of movies in each client's local dataset for each training round be $n$, smaller than $\activeBatch_i \cdot \localStep$.

\begin{table*}
\caption{Computation and communication overheads for Movie Recommendation task.}
\label{tab.nmp-saving-analysis}
\begin{center}
\begin{tabular}{lcccc}
\toprule
 & baseline & \pt & \owgtabbr & \twgtabbr \\
\midrule
Comp. & $(\localStep \cdot N^2 \cdot d)/2$ & $(N^2\cdot d)/2$ & $(N^2\cdot d)/2$ & $(N^2\cdot d)/2$ \\
Comm. & $2 \cdot N \cdot d$ & $ 2 \cdot n \cdot d$ & $3 \cdot n \cdot d$ & $3 \cdot n \cdot d$ \\
\bottomrule
\end{tabular}
\end{center}
\end{table*}

\paragraph{Computation} As shown in the second row of Table \ref{tab.nmp-saving-analysis}, the amount of computation for regularization term is $(\localStep \cdot N^2 \cdot d)/2$ if calculating on-device (baseline). When computing the regularization term on the server (mixed FL), the complexity is $(N^2 \cdot d)/2$. The total computation saving with mixed FL is $((\localStep-1) \cdot N^2 \cdot d)/2$. We use $(N^2 \cdot d)/2$ instead of $N^2 \cdot d$ for regularization term computation which is more accurate for an optimized implementation.

The total computation complexity of the forward pass is $O(\activeBatch_i d + \activeBatch_i d^2 + \activeBatch_i^2 d)$, where the three items are for the bag-of-word encoder, the context hidden layer, and similarity calculation. The hinge loss and spreadout computation is $O(\activeBatch_i)+O(0.5N^2d)$. The gradient computation is $O(2\activeBatch_i d^2+2\activeBatch_i^2d)$ for network backward pass and $O(\activeBatch_i)+O(Nd)$ for hinge and spreadout. Therefore, when computing the regularization term on the server with mixed FL, the computation savings for each client is $1-(\activeBatch_i d+3\activeBatch_i d^2+3\activeBatch_i^2d+2\activeBatch_i)/(\activeBatch_i d+3\activeBatch_i d^2+3\activeBatch_i^2d+2\activeBatch_i+0.5N^2d+Nd)$, which is 99.98\% for all mixed FL algorithms.

\paragraph{Communication} The communication overheads of each algorithm are presented in the last row of Table \ref{tab.nmp-saving-analysis}. For the baseline, the server and each client need to communicate the full embedding table and the gradients, so the communication overhead is $2 \cdot N \cdot d$ or 494KB. With \pt, the server and each client only communicate movie embeddings and the gradients corresponding to movies in that client's local datasets. Thus the communication traffic is reduced to $2 \cdot n \cdot d$ or 20KB. \gt requires the server to send both the movie embeddings and gradients to each client. The communication overhead then becomes $3 \cdot n \cdot d$ or 30KB. Overall, mixed FL can save more than 93.9\% communication overhead than the baseline.

\subsection{Additional Observations and Experiments}

\subsubsection{Effect of $\centvar$ on convergence}

Table \ref{tab.conv_summary} shows that the theoretical bounds on rounds to convergence are directly proportional to the client variance bound $\clientvar$ and central variance bound $\centvar$.  Also, as discussed in Subsection \ref{subsec.conv_bounds}, \owgt is more sensitive to high central variance than the other two algorithms. Whereas in the other algorithms the impact of $\centvar$ on convergence scales with cohort size $S$, in \owgt it scales with cohort size $S$ \emph{and} steps taken per round $\localStep$.

To observe the effect of $\centvar$ in practice, and compare its effect on \owgt vs.~\twgt, we ran sweeps of CelebA smile classification training, varying the central batch size $|\centactiveBatch|$. The plots of evaluation loss and evaluation AUC of ROC are shown in Figures \ref{fig.celeba_owgt_sweep_central_batch} (\owgtabbr) and \ref{fig.celeba_twgt_sweep_central_batch} (\twgtabbr). For each central batch size setting, we ran 10 trials; the plots show the means of each setting's trials, with corresponding 95\% confidence bounds.

Figure \ref{fig.celeba_owgt_sweep_central_batch} confirms the sensitivity of \owgt to central variance, with experiments using larger central batches $\centactiveBatch$ converging faster than experiments using smaller central batches. However, at least in the case of this task, the benefits of lower variance disappear quickly. The convergence of AUC of ROC did not appreciably improve for central batch sizes larger than 25. Presumably there is little effect at these larger central batch sizes because in these cases the convergence is now dominated by \emph{client} variance (i.e., further convergence improvements would come from increasing client batch size $|\activeBatch_i|$).

Comparing Figure \ref{fig.celeba_twgt_sweep_central_batch} with Figure \ref{fig.celeba_owgt_sweep_central_batch}, we empirically observe that \twgt has lower sensitivity than \owgt to central batch size/central variance.



\begin{figure}
    \centering
    \subfloat[\centering Eval.~Loss vs.~Round.]{{\includegraphics[width=6.8cm,trim={0.65cm 0 0.5375cm 0},clip]{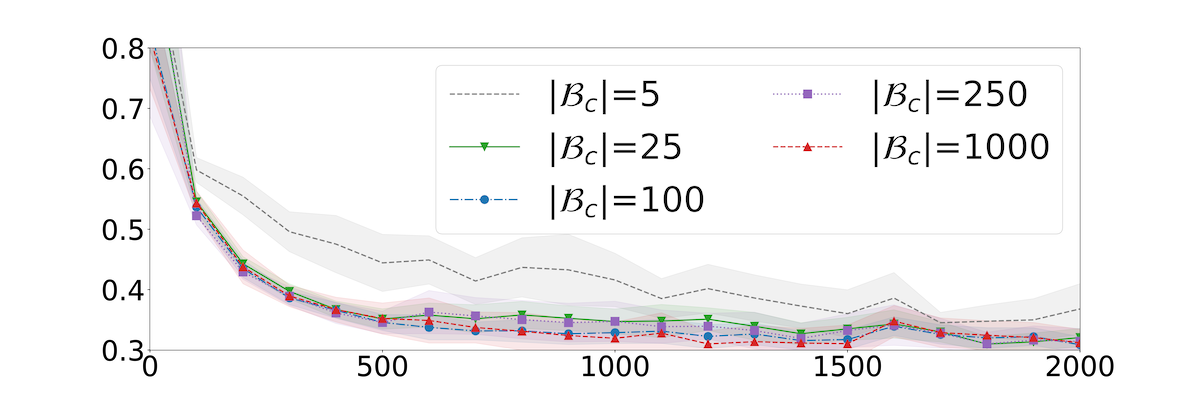} }}\hfill%
    \subfloat[\centering Eval.~AUC (ROC) vs.~Round.]{{\includegraphics[width=6.8cm,trim={0.65cm 0 0.5375cm 0},clip]{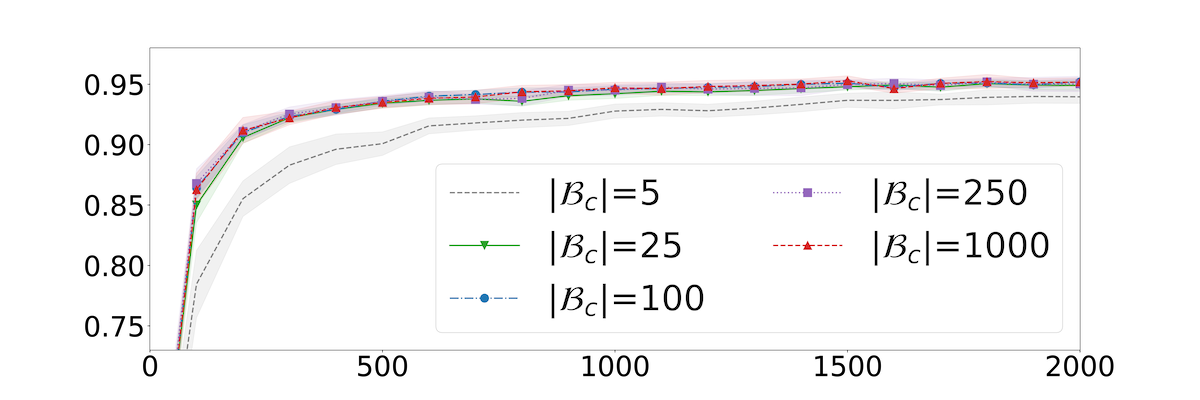} }}%
    \caption{Smile classifier training, \owgtabbr with various central batch sizes $|\centactiveBatch|$.}%
    \label{fig.celeba_owgt_sweep_central_batch}%
\vskip -0.1in
\end{figure}

\begin{figure}
    \centering
    \subfloat[\centering Eval.~Loss vs.~Round.]{{\includegraphics[width=6.8cm,trim={0.65cm 0 0.5375cm 0},clip]{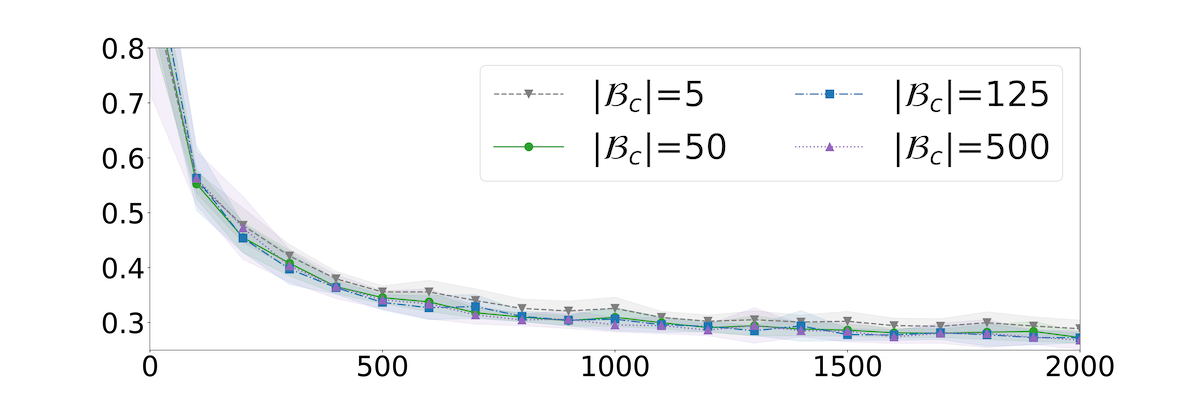} }}\hfill%
    \subfloat[\centering Eval.~AUC (ROC) vs.~Round.]{{\includegraphics[width=6.8cm,trim={0.65cm 0 0.5375cm 0},clip]{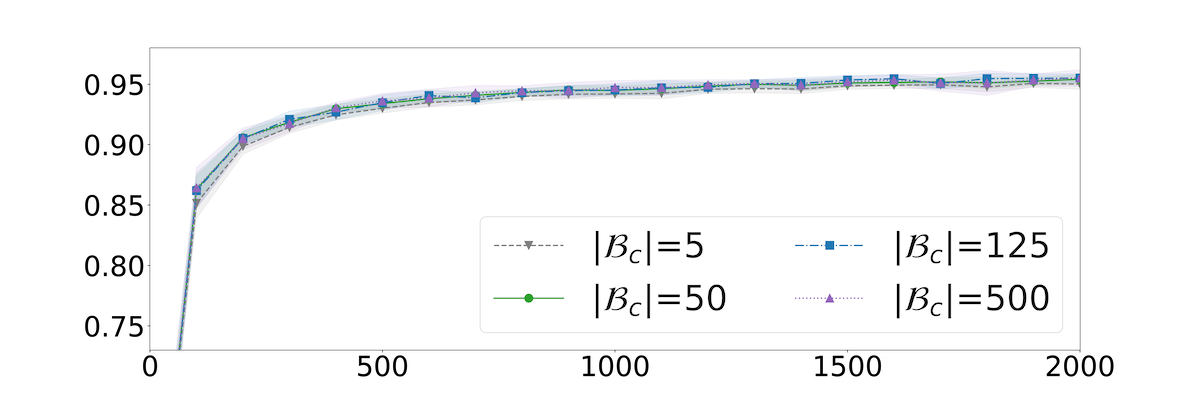} }}%
    \caption{Smile classifier training, \twgtabbr with various central batch sizes $|\centactiveBatch|$.}%
    \label{fig.celeba_twgt_sweep_central_batch}%
\vskip -0.1in
\end{figure}

\subsubsection{Trading $\lr$ for $\localStep$}

The convergence bounds of Table \ref{tab.conv_summary} have an additional implication, in regards to the trade off between client learning rate $\lr$ (and central learning rate $\clr$) and number of local steps taken $\localStep$. 

\paragraph{It's better to reduce $\lr$ and $\clr$ and increase $\localStep$, but there are limits} The convergence bounds are not related to client or central learning rate ($\lr$ or $\clr$), but are inversely related to local steps $\localStep$. In general, it's best to take as many steps as possible, and if necessary reduce learning rates accordingly. But there are limits to how large $\localStep$ can be. First, clients have finite caches of data, and $\localStep$ will always be limited by cache size divided by batch size. Second, in the case of \owgt, any increase in $\localStep$ means that central variance $\centvar$ must be proportionally reduced (as mentioned above), necessitating even larger central batch sizes (which at some point is infeasible).

We empirically observed this relationship by running smile classification (Figure \ref{fig.celeba_lr_vs_K}) and language modeling (Figure \ref{fig.ncp_lr_vs_K}) experiments where client learning rate $\lr$ (and central learning rate $\clr$) are inversely proportionally varied with $\localStep$. For each hyperparameter configuration we ran 5 trials; the figures include 95\% confidence intervals. The results confirm that reducing these learning rates, and making a corresponding increase in the number of steps, is beneficial. It never hurts convergence, and often helps.

\begin{figure}
    \centering
    \subfloat[\centering Eval.~Loss vs.~Round (\ptabbr).]{{\includegraphics[width=6.8cm,trim={0.65cm 0 0.5375cm 0},clip]{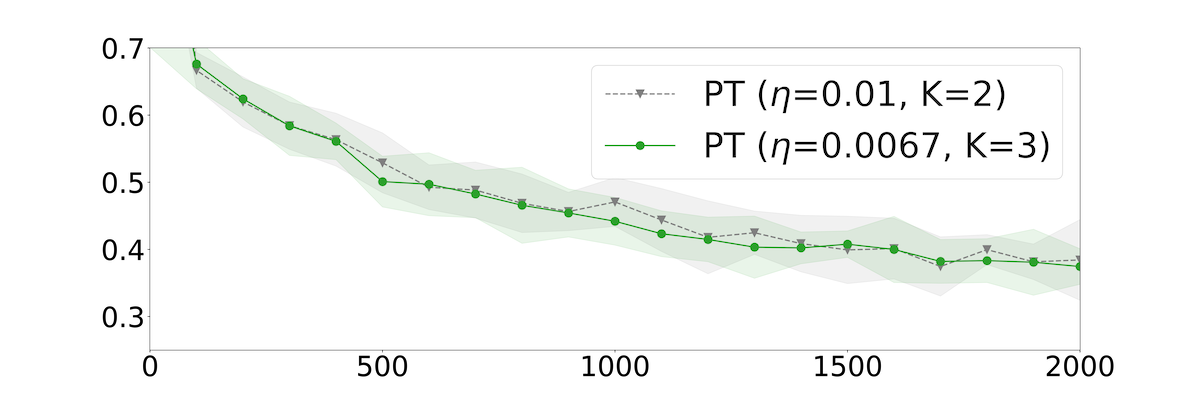} }}\hfill%
    \subfloat[\centering Eval.~AUC (ROC) vs.~Round (\ptabbr).]{{\includegraphics[width=6.8cm,trim={0.65cm 0 0.5375cm 0},clip]{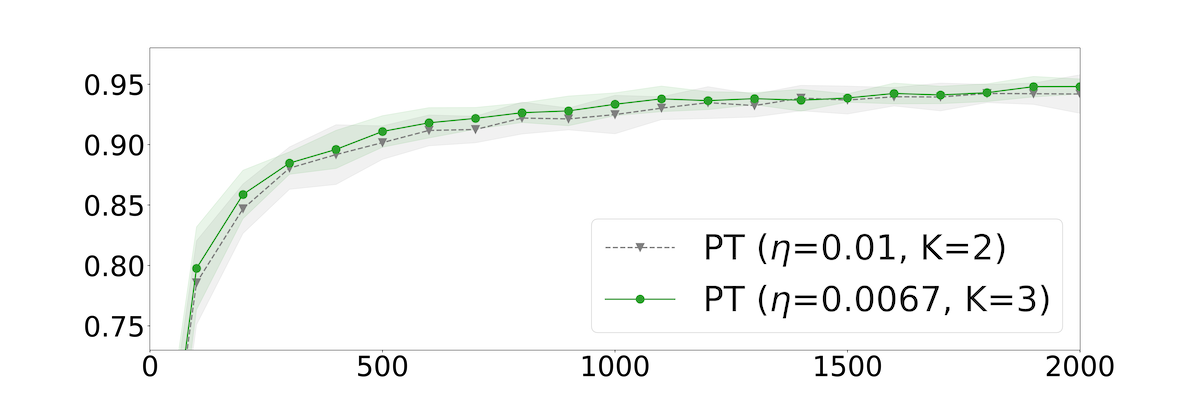} }}\vfill%
    \subfloat[\centering Eval.~Loss vs.~Round (\owgtabbr).]{{\includegraphics[width=6.8cm,trim={0.65cm 0 0.5375cm 0},clip]{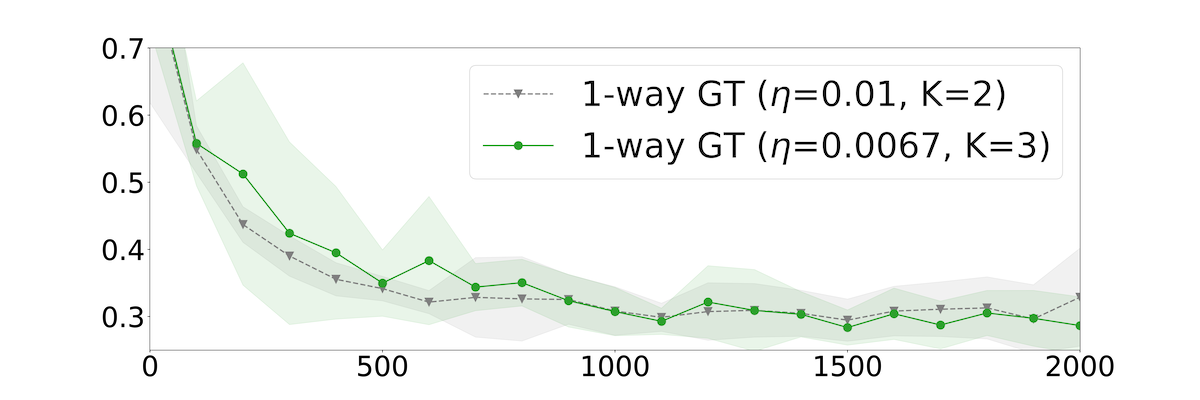} }}\hfill%
    \subfloat[\centering Eval.~AUC (ROC) vs.~Round (\owgtabbr).]{{\includegraphics[width=6.8cm,trim={0.65cm 0 0.5375cm 0},clip]{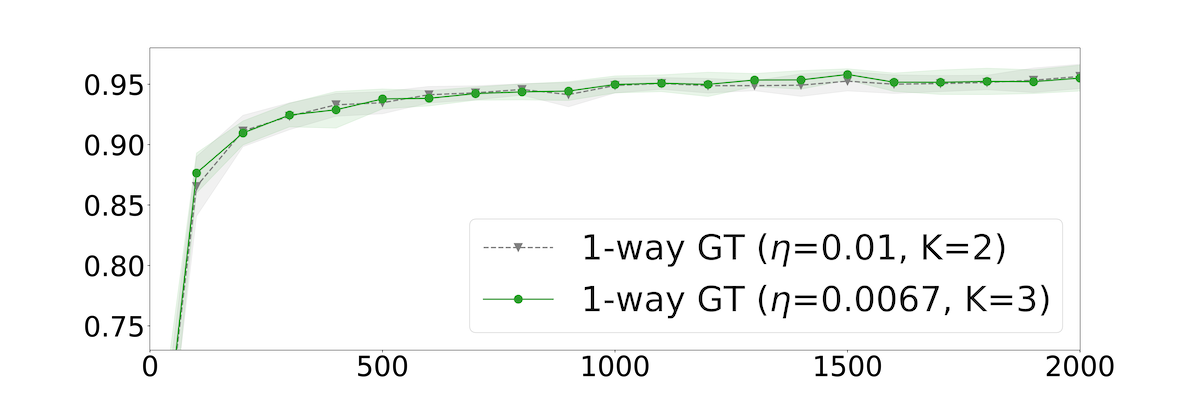} }}\vfill%
    \subfloat[\centering Eval.~Loss vs.~Round (\twgtabbr).]{{\includegraphics[width=6.8cm,trim={0.65cm 0 0.5375cm 0},clip]{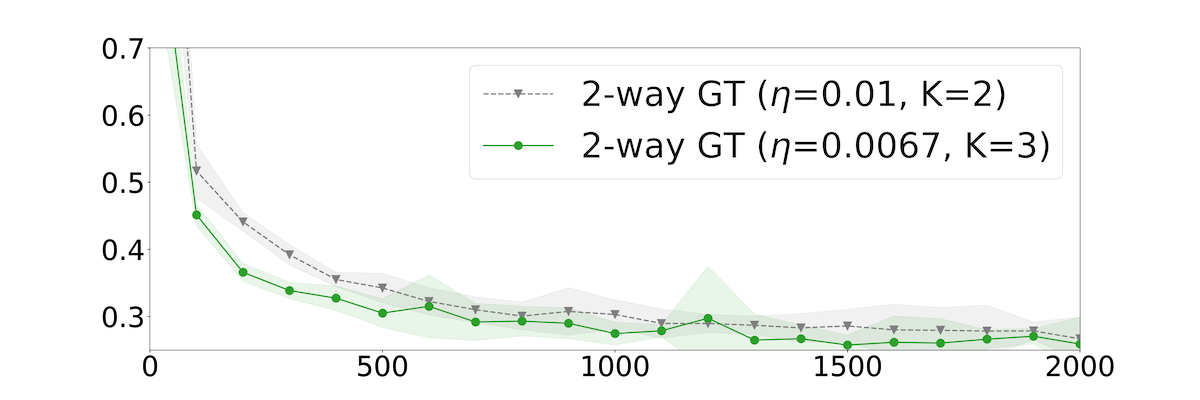} }}\hfill%
    \subfloat[\centering Eval.~AUC (ROC) vs.~Round (\twgtabbr).]{{\includegraphics[width=6.8cm,trim={0.65cm 0 0.5375cm 0},clip]{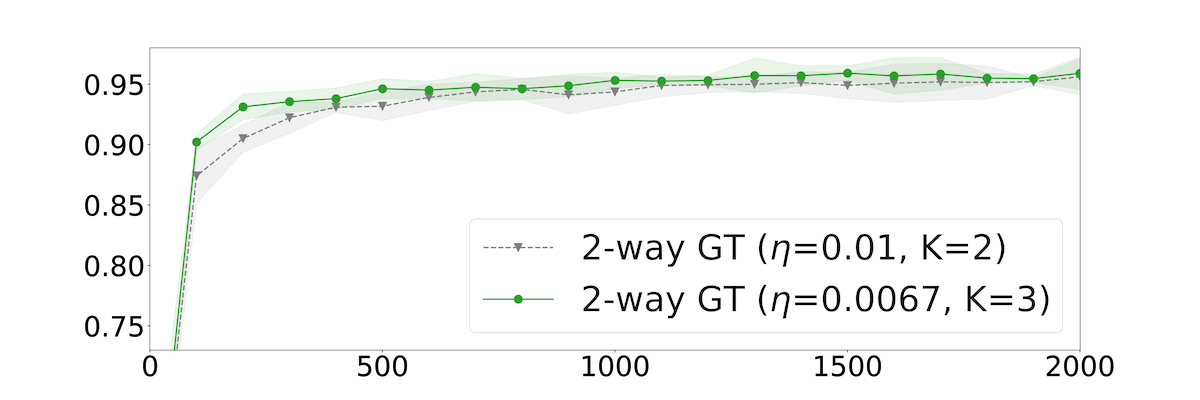} }}%
    \caption{Smile classifier training with different $\lr$ and $\localStep$ settings (for each mixed FL algorithm).}%
    \label{fig.celeba_lr_vs_K}%
\vskip -0.1in
\end{figure}

\begin{figure}
    \centering
    \subfloat[\centering Eval.~Loss vs.~Round (\ptabbr).]{{\includegraphics[width=6.8cm,trim={0.65cm 0 0.5375cm 0},clip]{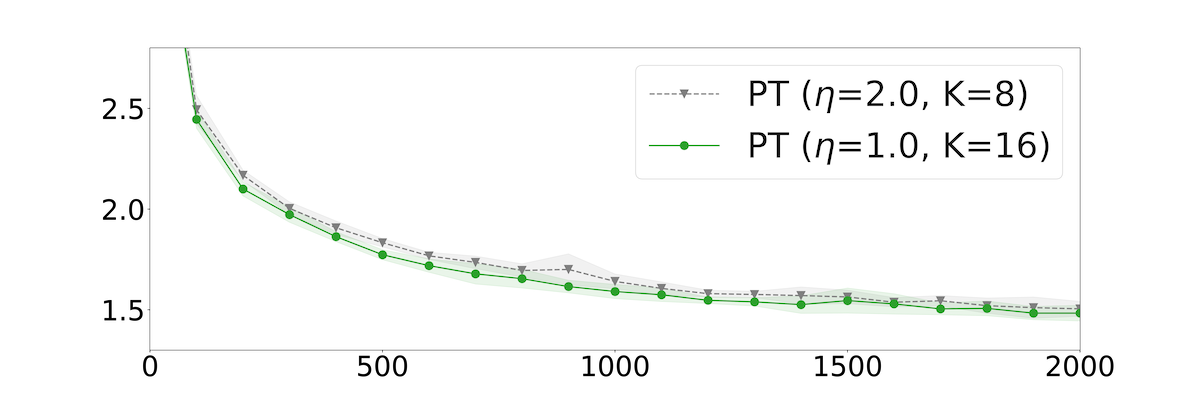} }}\hfill%
    \subfloat[\centering Eval.~Accuracy vs.~Round (\ptabbr).]{{\includegraphics[width=6.8cm,trim={0.65cm 0 0.5375cm 0},clip]{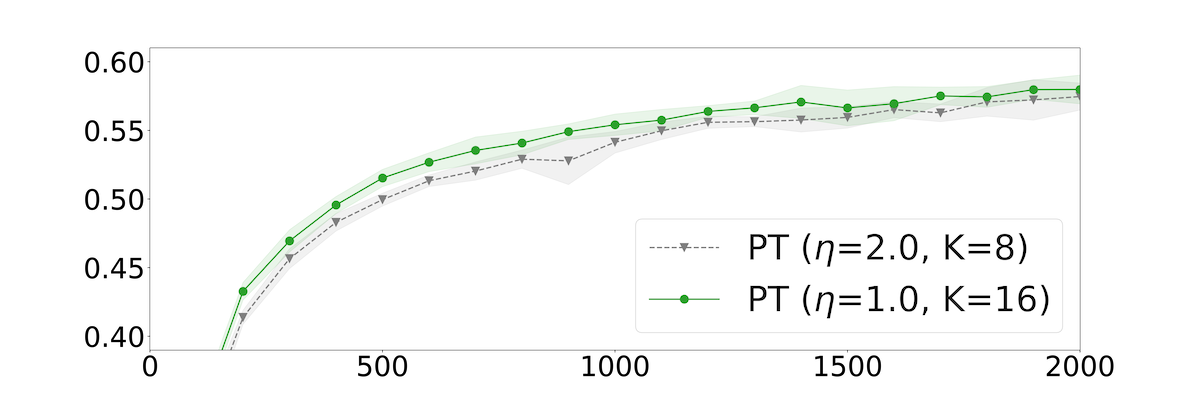} }}\vfill%
    \subfloat[\centering Eval.~Loss vs.~Round (\owgtabbr).]{{\includegraphics[width=6.8cm,trim={0.65cm 0 0.5375cm 0},clip]{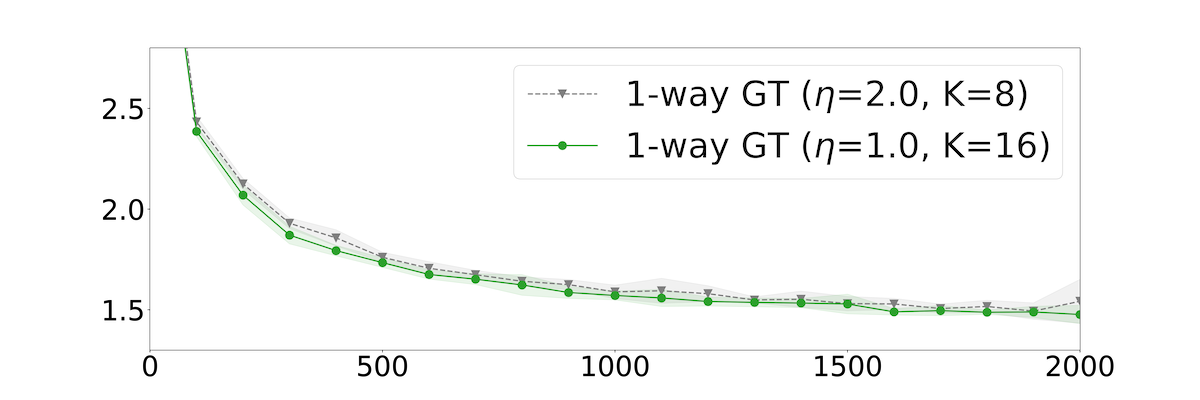} }}\hfill%
    \subfloat[\centering Eval.~Accuracy vs.~Round (\owgtabbr).]{{\includegraphics[width=6.8cm,trim={0.65cm 0 0.5375cm 0},clip]{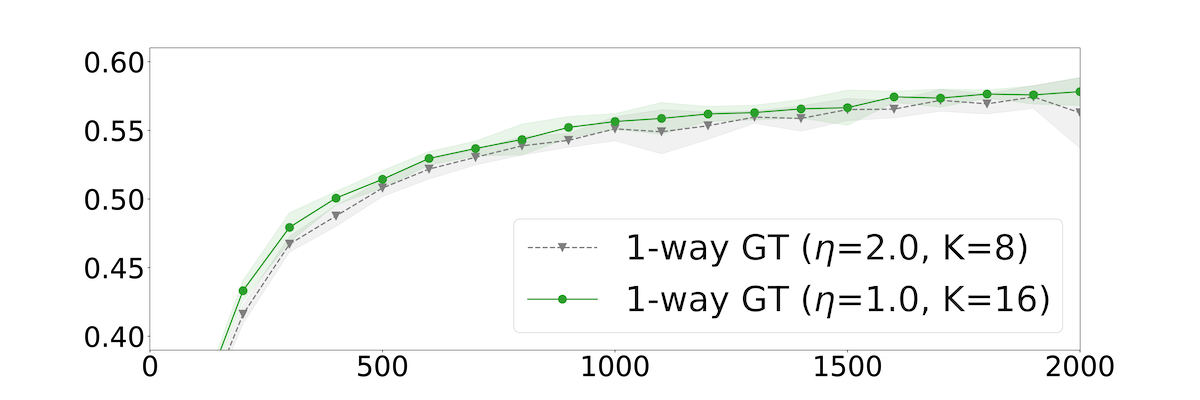} }}\vfill%
    \subfloat[\centering Eval.~Loss vs.~Round (\twgtabbr).]{{\includegraphics[width=6.8cm,trim={0.65cm 0 0.5375cm 0},clip]{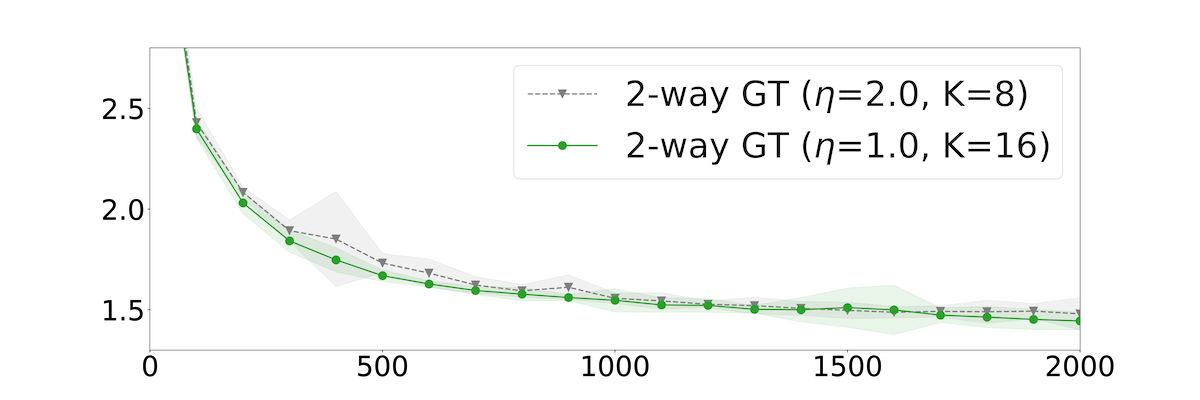} }}\hfill%
    \subfloat[\centering Eval.~Accuracy vs.~Round (\twgtabbr).]{{\includegraphics[width=6.8cm,trim={0.65cm 0 0.5375cm 0},clip]{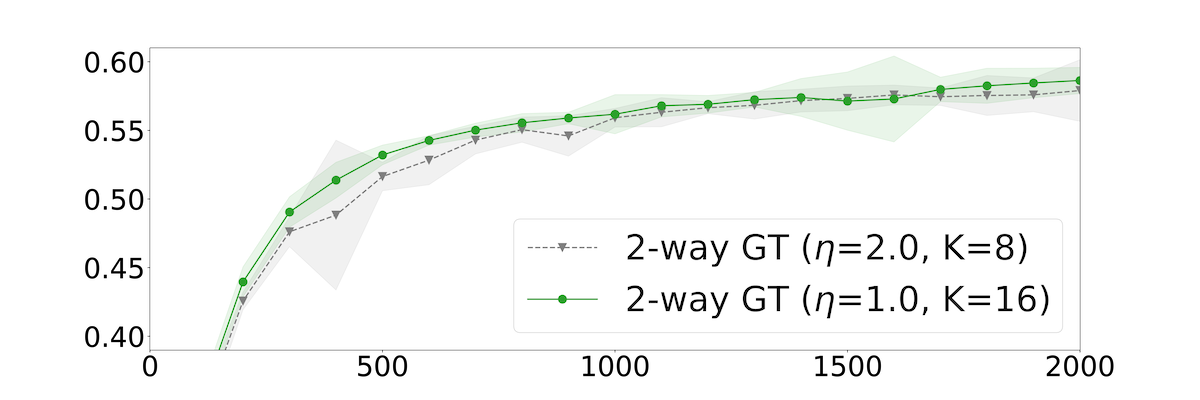} }}%
    \caption{Language model training with different $\lr$ and $\localStep$ settings (for each mixed FL algorithm).}%
    \label{fig.ncp_lr_vs_K}%
\vskip -0.1in
\end{figure}

\subsubsection{Differences in effective step size}

Table \ref{tab.lr_limit_summary} in Appendix \ref{app.convergence_theorems} shows that in order to yield the convergence bounds stated in this paper, each algorithm makes different assumptions of maximum effective step size. From this we draw one final implication in regards to comparing the mixed FL algorithms.

\paragraph{For given $\lr$, maximum $\localStep$ varies by algorithm, \emph{or}, for given $\localStep$, maximum $\lr$ varies by algorithm} Consider just effective federated step size $\tilde{\lr}=\lr \slr \localStep$ for the moment. Assume that server learning rate $\slr$ is held constant. Then each mixed FL algorithm has a different theoretical upper bound on the product of client learning rate $\lr$ and local steps per round $\localStep$. If using a common $\lr$, the theoretical upper limit on $\localStep$ varies by mixed FL algorithm. Alternatively if using a common $\localStep$, the theoretical upper limit on $\lr$ varies by mixed FL algorithm.

The maximum effective step sizes of Table \ref{tab.lr_limit_summary} imply that \twgt has narrower limits than \owgt on the allowable ranges of $\lr$ and $\localStep$. It also indicates that for \pt the allowable range of $\lr$, $\clr$, and $\localStep$ depends on the $B$ parameter from mixed FL ($G,B$)-BGD (Definition \ref{def.bgd}).

Some of this behavior has been observed empirically, when hyperparameter tuning our experiments (discussed in Subsection \ref{subsec.experiment_configs}). For example, for the language modeling experiment, assuming a constant number of steps of $\localStep=16$, \twgt tends to diverge when learning rate $\lr$ was increased beyond 1.0, whereas \owgt is observed to converge even with learning rate $\lr$ of 5.0. (\pt is in-between; it still converges with learning rate $\lr$ of 3.0, but diverges when learning rate $\lr$ is 5.0.) An interesting characteristic to note is that using different $\lr$ in different algorithms does not really impact comparative convergence. Figure \ref{fig.ncp_compare_max_lrs} shows convergence in the language modeling experiment, when \twgt uses $\lr=1.0$ and \owgt and \pt both use $\lr=3.0$ (in all cases, with $\localStep=16$). The higher learning rate of \owgtabbr and \ptabbr helps a little early, but does not impact the number of rounds to convergence. This holds with the theoretical convergence bounds of Table \ref{tab.conv_summary}, which show a relationship with steps $\localStep$ but not learning rates (as also discussed above).

\begin{figure}
    \centering
    \subfloat[\centering Eval.~Loss vs.~Round.]{{\includegraphics[width=6.8cm,trim={0.65cm 0 0.5375cm 0},clip]{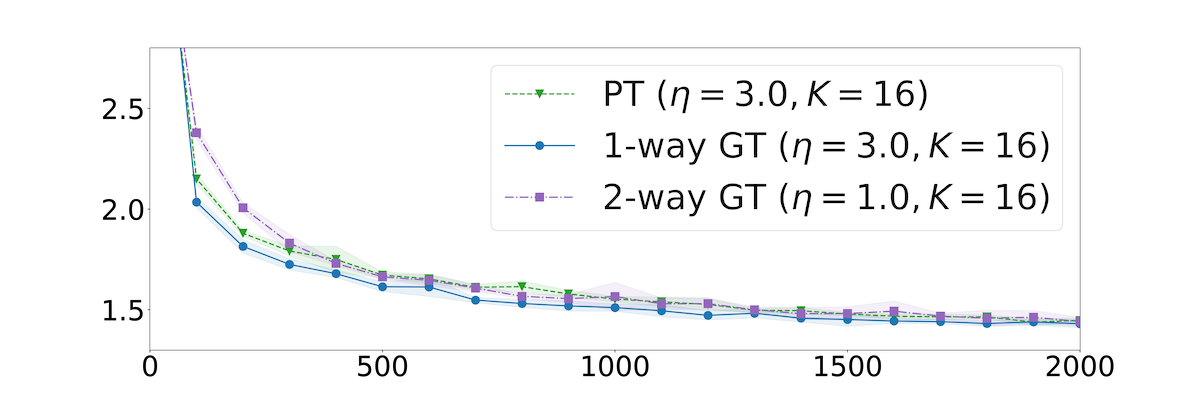} }}\hfill%
    \subfloat[\centering Eval.~Accuracy vs.~Round.]{{\includegraphics[width=6.8cm,trim={0.65cm 0 0.5375cm 0},clip]{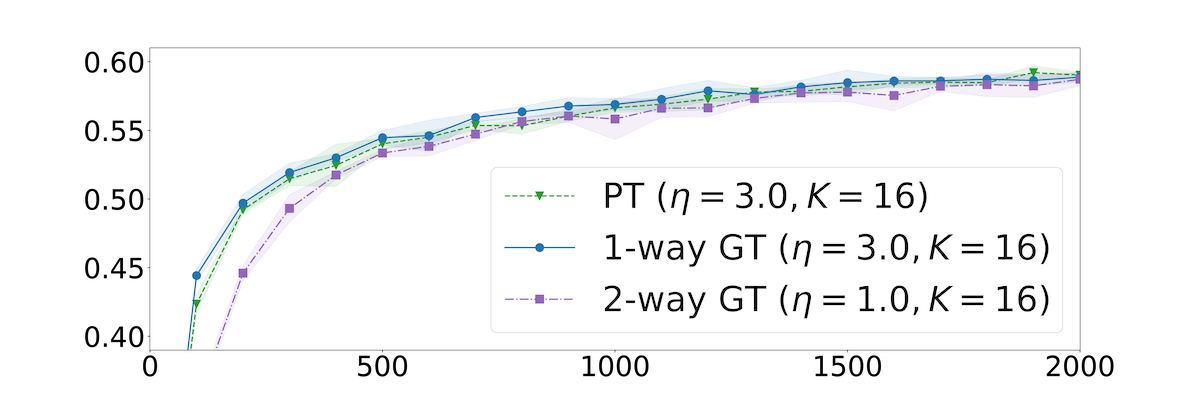} }}%
    \caption{Language model training, with higher $\lr$ for \ptabbr and \owgtabbr ($\localStep=16$ for all). The increased learning rate boosts progress on evaluation loss and accuracy early in optimization, but does not change the number of rounds ultimately required for convergence. All three algorithms have reached similar loss and accuracy by round 2000, and are still converging.}%
    \label{fig.ncp_compare_max_lrs}%
\vskip -0.1in
\end{figure}

\subsubsection{\owgtabbr with adaptive optimization}
\label{subsubsec.owgt_fedadam}

\begin{figure}
\centering
\begin{tabular}{c}
\includegraphics[width=11.8cm]{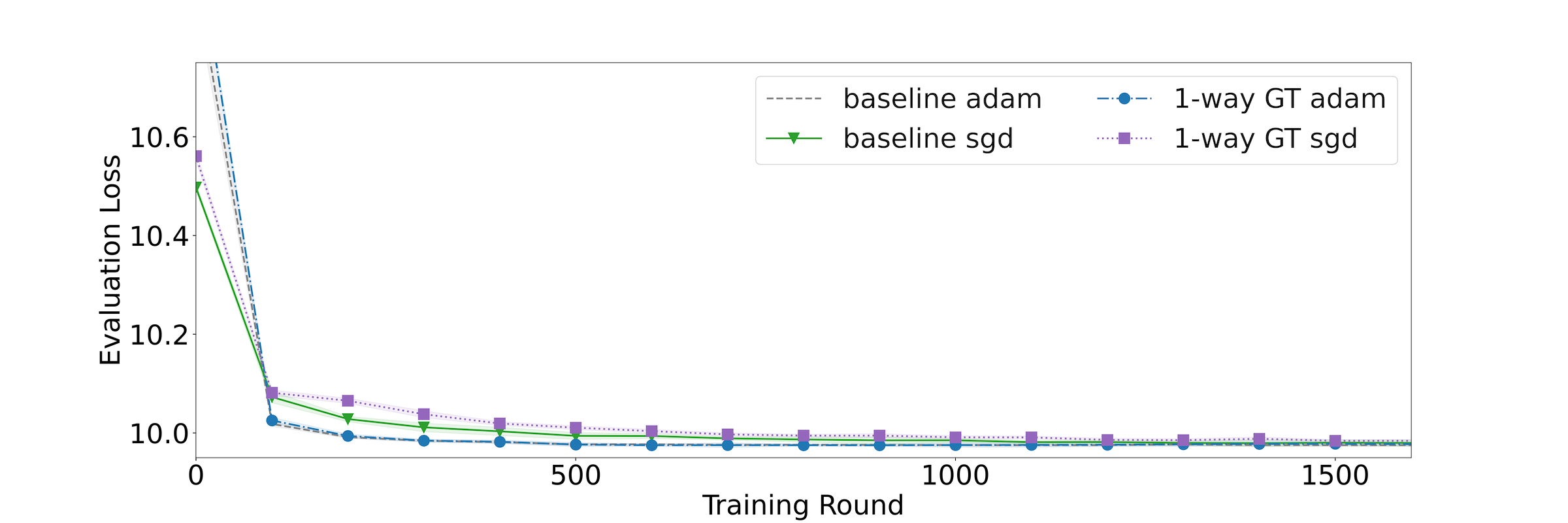} \\
\includegraphics[width=11.8cm]{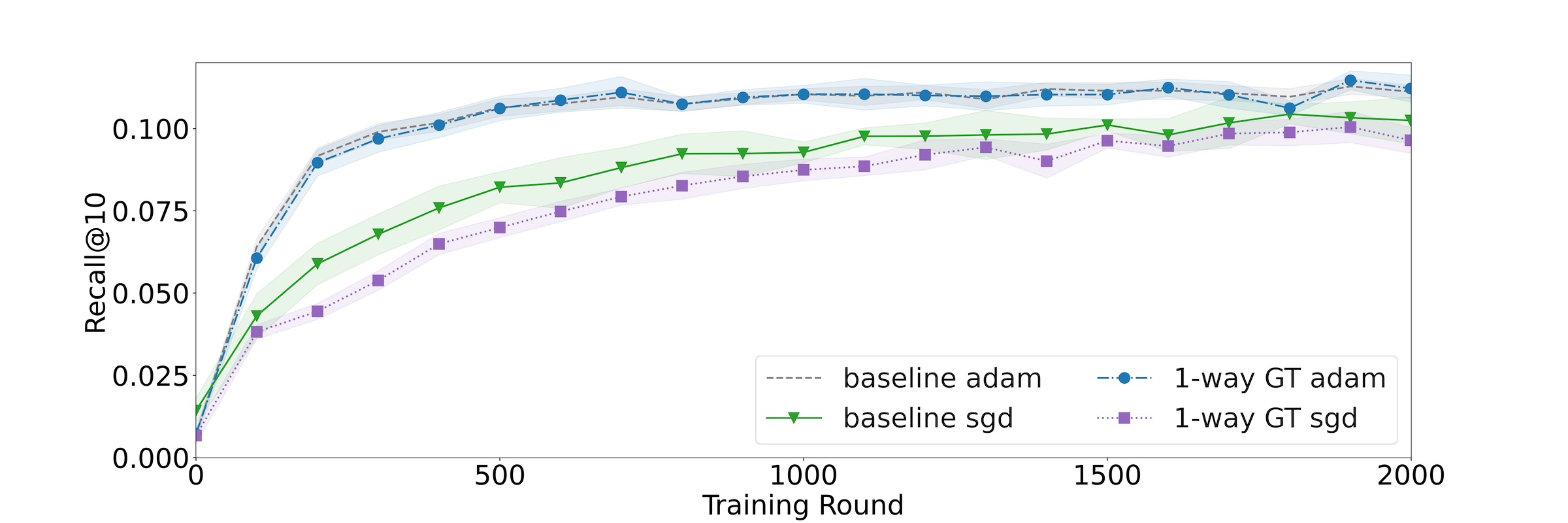} \\
\end{tabular}
\caption{Next movie prediction performance when training with adaptive optimizer (client optimizer: SGD, server optimizer: ADAM).}
\label{fig.nmp-adam}
\end{figure}

We briefly studied the performance of \owgt when using \textsc{ADAM} in place of SGD as the server optimizer, i.e., \textsc{FedADAM} \citep{reddi2020fedadam}. Note that the server adaptive optimizer requires a smaller learning rate to perform well. Figure \ref{fig.nmp-adam} reports the results of using \textsc{ADAM} as the server optimizer with a server learning rate of 0.01. All the other hyperparameters are the same as in Tables \ref{tab.movielens_fed_hparams} and \ref{tab.movielens_cent_hparams}. We observe that (1) ADAM works better than \textsc{SGD}, leading to better convergence, and (2) \owgt performs almost the same as the baseline when using \textsc{ADAM}. We will extend our investigation of mixed FL with adaptive optimization in the future. This will include studying methods for applying adaptive optimization to \pt and \twgt; these algorithms are more complicated since they involve additional optimizers (\centralopt and \mergeopt).

\newpage

\section{Convergence Proofs for \owgt}
\label{app.proof_owgt}

We will prove the convergence rate of \owgt for 3 different cases: Strongly convex, general convex, and non-convex. We will first state a number of definitions and lemmas in Subsection \ref{subsec.defs_and_lemmas} that are needed in proving convergence rate of \owgt, before proceeding to the actual proofs in Subsection \ref{subsec.proof_owgt}. 

\subsection{Additional Definitions and Lemmas}
\label{subsec.defs_and_lemmas}

Note that some of the lemmas below are restatements of lemmas given in \citet{karimireddy2019scaffold}. We opt to restate here (versus referencing the relevant lemma in \citet{karimireddy2019scaffold} each time) due to the volume of usage of the lemmas, to ease the burden on the reader.

We will first present the subset of definitions and lemmas which don't make any assumptions of convexity (Subsection \ref{subsubsec.defs_and_lemmas_general}), followed by the subset that assume convexity (Subsection \ref{subsubsec.defs_and_lemmas_convex})

\subsubsection{General Definitions and Lemmas}
\label{subsubsec.defs_and_lemmas_general}

\begin{definition}[$\beta$-Smoothness]
A function $h$ is \textbf{$\beta$-smooth} if it satisfies:
\begin{equation*}
\left\| \nabla h(\vx) - \nabla h(\vy) \right\| \leq \beta \left\| \vx - \vy \right\|,\text{ for any } \vx,\vy
\end{equation*}
This implies the following quadratic upper bound on $h$:
\begin{equation*}
\left\langle \nabla h(\vx), \vy - \vx \right\rangle \geq -\left( h(\vx) - h(\vy) + \frac{\beta}{2}\left\|\vx-\vy\right\|^2 \right),\text{ for any } \vx,\vy
\end{equation*}
\label{def.beta}
\end{definition}

\begin{lemma}[Relaxed triangle inequality]
Let $\left\{v_1,\ldots,v_\tau\right\}$ be $\tau$ vectors in $\mathbb{R}^d$. Then for any $a > 0$:
\begin{equation*}
\left\| v_i + v_j \right\|^2 \leq \left(1 + a\right) \left\| v_i \right\|^2 + \left(1 + \frac{1}{a}\right) \left\| v_j \right\|^2
\end{equation*}
Also:
\begin{equation*}
\left\| \sum_{i=1}^\tau v_i \right\|^2 \leq \tau \sum_{i=1}^\tau \left\| v_i \right\|^2
\end{equation*}
\label{lemma.tri}
\end{lemma}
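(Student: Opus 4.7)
My plan is to treat this as two independent standard facts, each provable by elementary arguments in a few lines; the main task is just getting the bookkeeping of constants exactly right and confirming the inequality holds for \emph{every} $a > 0$ rather than some special value.

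For the first inequality, I would expand $\|v_i + v_j\|^2 = \|v_i\|^2 + 2\langle v_i, v_j\rangle + \|v_j\|^2$ and then bound the cross term using a Young-type (``Peter-Paul'') inequality. Specifically, for any $a > 0$, the trivial non-negativity $\|\sqrt{a}\,v_i - \tfrac{1}{\sqrt{a}}\,v_j\|^2 \geq 0$ expands (via the same inner-product identity) to $a\|v_i\|^2 + \tfrac{1}{a}\|v_j\|^2 \geq 2\langle v_i, v_j\rangle$. Substituting this upper bound on the cross term into the original expansion yields $\|v_i + v_j\|^2 \leq (1+a)\|v_i\|^2 + (1+\tfrac{1}{a})\|v_j\|^2$, exactly as stated.

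For the second inequality, the cleanest approach is to invoke convexity of the map $v \mapsto \|v\|^2$. Jensen's inequality applied to the uniform distribution over $\{v_1,\ldots,v_\tau\}$ gives $\|\tfrac{1}{\tau}\sum_{i=1}^\tau v_i\|^2 \leq \tfrac{1}{\tau}\sum_{i=1}^\tau \|v_i\|^2$, and multiplying both sides by $\tau^2$ yields exactly $\|\sum_{i=1}^\tau v_i\|^2 \leq \tau \sum_{i=1}^\tau \|v_i\|^2$. As a sanity check, one could alternatively derive it from Cauchy--Schwarz applied to $(\|v_i\|)_{i=1}^\tau$ against the all-ones vector in $\mathbb{R}^\tau$, or by induction on $\tau$ using the first part of the lemma with a suitably chosen $a$ (e.g.\ $a = i-1$) at the $i$-th step.

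There is no real obstacle to either part; both are textbook inequalities admitting essentially one-line proofs, and the only care needed is to avoid sign errors when expanding $\|\sqrt{a}\,v_i - \tfrac{1}{\sqrt{a}}\,v_j\|^2$ and to note that the bound in the first part is a family of bounds parametrized by $a$, so the proof must establish it for an arbitrary but fixed $a > 0$ rather than optimizing over $a$.
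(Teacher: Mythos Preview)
Your proposal is correct and essentially the same as the paper's proof: both parts are handled identically, with the first inequality obtained by expanding and dropping the non-negative term $\|\sqrt{a}\,v_i - \tfrac{1}{\sqrt{a}}\,v_j\|^2$ (the paper writes this as an identity, with a sign typo in the squared term), and the second inequality via Jensen applied to the convex map $v \mapsto \|v\|^2$.
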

\begin{proof}
The first statement for any $a > 0$ follows from the identity:
\begin{equation*}
\left\| v_i + v_j \right\|^2 = \left( 1 + a \right) \left\| v_i \right\|^2 + \left(1 + \frac{1}{a}\right) \left\| v_j \right\|^2 - \left\| \sqrt{a} v_i + \frac{1}{\sqrt{a}} v_j \right\|^2
\end{equation*}
The second statement follows from the convexity of $v \rightarrow \left\| v \right\|^2$ and Jensen's inequality:
\begin{equation*}
\left\| \frac{1}{\tau} \sum_{i=1}^\tau v_i \right\|^2 \leq \frac{1}{\tau} \sum_{i=1}^\tau \left\| v_i \right\|^2
\end{equation*}
\end{proof}

\begin{lemma}[Separating mean and variance]
Let $\left\{\Xi_1,\ldots,\Xi_\tau\right\}$ be $\tau$ random variables in $\mathbb{R}^d$ which are not necessarily independent. First suppose that their mean is $\mathbb{E}[\Xi_i] = \xi_i$ and variance is bounded as $\mathbb{E}[\left\|\Xi_i - \xi_i\right\|^2] \leq \clientvar$. Then:
\begin{equation*}
\mathbb{E} \left[ \left\| \sum_{i=1}^{\tau} \Xi_i \right\|^2 \right] \leq \left\| \sum_{i=1}^{\tau} \xi_i \right\|^2 + \tau^2 \clientvar
\end{equation*}
Now instead suppose that their \emph{conditional} mean is $\mathbb{E}[\Xi_i | \Xi_{i-1}, \ldots, \Xi_1] = \xi_i$, i.e.~the variables $\left\{ \Xi_i - \xi_i \right\}$ form a martingale difference sequence, and the variance is bounded same as above. Then we can show the tighter bound:
\begin{equation*}
\mathbb{E} \left[ \left\| \sum_{i=1}^{\tau} \Xi_i \right\|^2 \right] \leq 2 \mathbb{E} \left[ \left\| \sum_{i=1}^{\tau} \xi_i \right\|^2 \right]+ 2 \tau \clientvar
\end{equation*}
\label{lemma.mean_var}
\end{lemma}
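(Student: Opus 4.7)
The lemma has two parts: an unconditional version giving the weaker $\tau^2\clientvar$ bound, and a martingale version where the variance accumulates only linearly at the cost of a factor of $2$ on the mean term. In both cases the natural decomposition is $\Xi_i = \xi_i + (\Xi_i - \xi_i)$, and I would prove each bound by controlling $\sum_i \xi_i$ and $\sum_i (\Xi_i - \xi_i)$ separately, using whatever cancellation the hypotheses allow for the cross terms.

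For the first bound, I would expand $\| \sum_i \Xi_i \|^2 = \| \sum_i \xi_i \|^2 + 2 \langle \sum_i \xi_i,\, \sum_j (\Xi_j - \xi_j) \rangle + \| \sum_i (\Xi_i - \xi_i) \|^2$ and take expectations. The cross term vanishes because $\mathbb{E}[\Xi_j - \xi_j] = 0$ for each $j$, while the $\xi_i$ are deterministic (unconditional mean assumption). For the residual term, since we make no independence assumption, I would invoke the second part of Lemma~\ref{lemma.tri} applied to the $\tau$ vectors $\Xi_i - \xi_i$, yielding $\| \sum_i (\Xi_i - \xi_i) \|^2 \leq \tau \sum_i \| \Xi_i - \xi_i \|^2$. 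Taking expectation and using the per-term variance bound $\mathbb{E}\| \Xi_i - \xi_i \|^2 \leq \clientvar$ contributes $\tau \cdot \tau \clientvar = \tau^2 \clientvar$. The loose $\tau^2$ factor is exactly the cost of not having orthogonality across the noise terms.

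For the second (martingale) bound, the $\xi_i$ may now be random (measurable with respect to $\Xi_1, \ldots, \Xi_{i-1}$), so the cross term above need not vanish pointwise, and the direct decomposition from part (1) is no longer clean. Instead I would apply the relaxed triangle inequality from Lemma~\ref{lemma.tri} with $a = 1$ to get $\| \sum_i \Xi_i \|^2 \leq 2 \| \sum_i \xi_i \|^2 + 2 \| \sum_i (\Xi_i - \xi_i) \|^2$, take expectations, and then exploit the martingale difference structure to bound the noise term by $\tau \clientvar$. Concretely, $\mathbb{E} \| \sum_i (\Xi_i - \xi_i) \|^2 = \sum_i \mathbb{E} \| \Xi_i - \xi_i \|^2 + 2 \sum_{i<j} \mathbb{E} \langle \Xi_i - \xi_i,\, \Xi_j - \xi_j \rangle$; by the tower property, conditioning each $i<j$ cross term on the $\sigma$-algebra generated by $\Xi_1, \ldots, \Xi_{j-1}$ makes $\Xi_i - \xi_i$ measurable and leaves $\mathbb{E}[\Xi_j - \xi_j \mid \cdot] = 0$, killing the term. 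Combining the two pieces yields the stated $2 \mathbb{E}\| \sum_i \xi_i \|^2 + 2 \tau \clientvar$ bound.

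The only real subtlety, and the point I would be most careful about, is the measurability of $\xi_i$ in the martingale case: $\xi_i$ is \emph{not} deterministic, which is why the final bound keeps an expectation around $\| \sum_i \xi_i \|^2$ and why the simple orthogonality argument from part (1) cannot be reused directly. Everything else is bookkeeping with the relaxed triangle inequality and the tower property.
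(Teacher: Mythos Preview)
Your proposal is correct and matches the paper's proof essentially line for line: the first part uses the bias--variance decomposition (your explicit square expansion is the same identity the paper invokes) followed by Lemma~\ref{lemma.tri} on the noise sum, and the second part uses the $a=1$ relaxed triangle inequality followed by the martingale orthogonality of the cross terms. Your remark about why the $\xi_i$ are random in the martingale case, forcing the cruder triangle inequality rather than the exact decomposition, is also the paper's stated rationale.
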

\begin{proof}
For any random variable $X$, $\mathbb{E}[X^2] = \left(\mathbb{E}[X - \mathbb{E}[X]]\right)^2 + \left(\mathbb{E}[X]\right)^2$ implying:
\begin{equation*}
\mathbb{E} \left[ \left\| \sum_{i=1}^{\tau} \Xi_i \right\|^2 \right] = \left\| \sum_{i=1}^{\tau} \xi_i \right\|^2 + \mathbb{E} \left[ \left\| \sum_{i=1}^\tau \left( \Xi_i - \xi_i \right) \right\|^2 \right]
\end{equation*}
Expanding the last term of the above expression using relaxed triangle inequality (Lemma \ref{lemma.tri}) proves the first claim:
\begin{equation*}
\mathbb{E} \left[ \left\| \sum_{i=1}^\tau \left( \Xi_i - \xi_i \right) \right\|^2 \right] \leq \tau \sum_{i=1}^\tau \mathbb{E} \left[ \left\|  \Xi_i - \xi_i \right\|^2 \right] \leq \tau^2 \clientvar
\end{equation*}
For the second statement, $\xi_i$ is not deterministic and depends on $\Xi_{i-1}, \ldots, \Xi_1$. Hence we have to resort to the cruder relaxed triangle inequality to claim:
\begin{equation*}
\mathbb{E} \left[ \left\| \sum_{i=1}^{\tau} \Xi_i \right\|^2 \right] \leq 2 \mathbb{E} \left[\left\| \sum_{i=1}^{\tau} \xi_i \right\|^2 \right] + 2 \mathbb{E} \left[ \left\| \sum_{i=1}^\tau \left( \Xi_i - \xi_i \right) \right\|^2 \right]
\end{equation*}
Then we use the tighter expansion of the second term:
\begin{equation*}
\mathbb{E} \left[ \left\| \sum_{i=1}^\tau \left( \Xi_i - \xi_i \right) \right\|^2 \right] = \sum_{i,j} \mathbb{E} \left[ \left\langle \Xi_i - \xi_i, \Xi_j - \xi_j \right\rangle \right] = \sum_i \mathbb{E} \left[ \left\|  \Xi_i - \xi_i \right\|^2 \right] \leq \tau \clientvar
\end{equation*}
The cross terms in the above expression have zero mean since $\left\{ \Xi_i - \xi_i \right\}$ form a martingale difference sequence.
\end{proof}

\subsubsection{Definitions and Lemmas Assuming Convexity}
\label{subsubsec.defs_and_lemmas_convex}

\begin{definition}[$\mu$-Convexity]
A function $h$ is \textbf{$\mu$-convex} for $\mu \geq 0$ if it satisfies:
\begin{equation*}
\left\langle \nabla h(\vx), \vy - \vx \right\rangle \leq -\left( h(\vx) - h(\vy) + \frac{\mu}{2}\left\|\vx-\vy\right\|^2 \right),\text{ for any } \vx,\vy
\end{equation*}
When $\mu > 0$, we have strong convexity, a quadratic lower bound on $h$.
\label{def.mu}
\end{definition}

\begin{proposition}[Convexity and smoothness]
If client losses $\loss_i$ and centralized loss $\centloss$ are each $\beta$-smooth (Definition \ref{def.beta}), and $\vx^*$ is an optimum of the overall loss $\loss$ (as defined in Equation \ref{eqn.overall_loss}), then the following holds true:
\begin{equation*}
\frac{1}{2 \beta} \left( \frac{1}{N} \sum_{i=1}^N \left\| \nabla \loss_i(\vx) - \nabla \loss_i(\vx^*) \right\|^2 + \left\| \nabla \centloss(\vx) - \nabla \centloss(\vx^*) \right\|^2 \right) \leq \loss(\vx) - \loss(\vx^*)
\end{equation*}
\label{prop.mu_and_beta}
\end{proposition}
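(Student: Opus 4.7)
The plan is to apply the standard smooth-plus-convex ``co-coercivity''/Bregman lower bound to each component loss separately and then aggregate. The subsection context (``Definitions and Lemmas Assuming Convexity'') indicates convexity is implicitly in force, so I may invoke the classical inequality
\begin{equation*}
h(\vx) - h(\vy) \geq \langle \nabla h(\vy), \vx - \vy \rangle + \frac{1}{2\beta} \left\| \nabla h(\vx) - \nabla h(\vy) \right\|^2,
\end{equation*}
which holds for any $\beta$-smooth convex $h$. The usual derivation forms $\varphi(\vz) = h(\vz) - \langle \nabla h(\vy), \vz \rangle$, which is again $\beta$-smooth and convex with minimizer $\vy$, and then applies the descent lemma to $\varphi$ at the gradient step $\vx - \tfrac{1}{\beta} \nabla \varphi(\vx)$.

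First I would apply this inequality to each client loss $\loss_i$ at the pair $(\vx, \vx^*)$ and average over $i \in \{1,\ldots,N\}$. Using the definition $\fedloss = \tfrac{1}{N}\sum_i \loss_i$ and linearity of the gradient, the averaged inequality gives a bound relating $\fedloss(\vx) - \fedloss(\vx^*)$ to the mean squared gradient-difference $\tfrac{1}{N} \sum_i \left\| \nabla \loss_i(\vx) - \nabla \loss_i(\vx^*) \right\|^2$, plus an inner-product term involving $\nabla \fedloss(\vx^*)$. Then I would apply the same inequality to $\centloss$ at $(\vx, \vx^*)$, obtaining the analogous statement featuring $\left\| \nabla \centloss(\vx) - \nabla \centloss(\vx^*) \right\|^2$.

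Adding the two resulting inequalities, the left-hand sides combine to $\loss(\vx) - \loss(\vx^*)$, the quadratic terms combine to the exact expression appearing in the claim (divided by $2\beta$), and the linear terms combine to $\langle \nabla \loss(\vx^*), \vx - \vx^* \rangle$. The final step uses that $\vx^*$ is an (unconstrained) minimizer of $\loss$, so first-order optimality forces $\nabla \loss(\vx^*) = 0$ and the inner product vanishes, delivering exactly the claimed bound.

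I do not anticipate a real obstacle; the only subtlety to watch is that the starting co-coercivity inequality genuinely requires convexity of the individual losses, since $\vx^*$ is a minimizer of the \emph{sum} but not of $\loss_i$ or $\centloss$ individually, so there is no way to derive the result from smoothness alone by the standard ``gradient step from $\vx$'' trick. Confirming that the subsection's convexity hypothesis is intended to apply to this proposition is therefore the one thing worth flagging before writing out the calculation.
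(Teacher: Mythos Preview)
Your proposal is correct and follows essentially the same approach as the paper. The paper's proof defines the shifted functions $\tilde{\loss}_i(\vx) := \loss_i(\vx) - \langle \nabla \loss_i(\vx^*), \vx\rangle$ and $\tilde{\centloss}$ analogously---precisely the $\varphi$ construction you describe---observes that $\vx^*$ minimizes each of them, applies the smooth-convex lower bound $\tfrac{1}{2\beta}\|\nabla \tilde{\loss}_i(\vx)\|^2 \le \tilde{\loss}_i(\vx) - \tilde{\loss}_i(\vx^*)$, and then sums, using $\nabla \loss(\vx^*)=0$ to make the linear shifts cancel so that $\tfrac{1}{N}\sum_i \tilde{\loss}_i + \tilde{\centloss} = \loss$.
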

\begin{proof}
Define the functions $\tilde{\loss}_i(\vx) := \loss_i(\vx) - \langle \nabla \loss_i(\vx^*), \vx\rangle$, for all clients $i$, and the function $\tilde{\centloss}(\vx) := \centloss(\vx) - \langle \nabla \centloss(\vx^*), \vx\rangle$. Since $\loss_i$ and $\centloss$ are convex and $\beta$-smooth, so are $\tilde{\loss}_i$ and $\tilde{\centloss}$, and furthermore their gradients vanish at $\vx^*$; hence, $\vx^*$ is a common minimizer for $\tilde{\loss}_i$, $\tilde{\centloss}$ and $\loss$. Using the $\beta$-smoothness of $\tilde{\loss}_i$ and $\centloss$, we have
\[\frac{1}{2\beta} \|\nabla\tilde{\loss}_i(\vx)\|^2 \leq \tilde{\loss}_i(\vx) - \tilde{\loss}_i(\vx) \quad \text{ and } \quad  \frac{1}{2\beta} \|\nabla\tilde{\centloss}(\vx)\|^2 \leq \tilde{\centloss}(\vx) - \tilde{\centloss}(\vx).\]
Note that $\frac{1}{N}\sum_{i=1}^N \tilde{\loss}_i + \tilde{\centloss} = \loss$ since $\frac{1}{N}\sum_{i=1}^N \nabla \loss_i(\vx^*) + \nabla \centloss(\vx^*) = \nabla \loss(\vx^*) = 0$. The claimed bound then follows from the above two facts.
\end{proof}

\begin{proposition}[Convex bound on gradient of overall loss]
If client losses $\loss_i$ and centralized loss $\centloss$ are each $\mu$-convex (Definition \ref{def.mu}) and $\beta$-smooth (Definition \ref{def.beta}), and $\vx^*$ is an optimum of the overall loss $\loss$ (as defined in Equation \ref{eqn.overall_loss}), then the expected norm of the gradient of overall loss is bounded as:
\begin{equation*}
\mathbb{E} \left\| \nabla \loss(\vx) \right\|^2 \leq 4 \beta \mathbb{E} \left[ \loss(\vx) - \loss(\vx^*) \right]
\end{equation*}
\label{prop.bound_grad_of_loss}
\end{proposition}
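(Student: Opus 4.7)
The plan is to reduce the claim to Proposition~\ref{prop.mu_and_beta} by rewriting $\nabla\loss(\vx)$ as a deviation from $\nabla\loss(\vx^*)=0$ and then applying the relaxed triangle inequality (Lemma~\ref{lemma.tri}) together with Jensen's inequality. The key observation is that since $\vx^*$ is an optimum of $\loss=\tfrac{1}{N}\sum_i \loss_i + \centloss$, we have
\[
\tfrac{1}{N}\sum_{i=1}^N \nabla \loss_i(\vx^*) + \nabla \centloss(\vx^*) \;=\; \nabla\loss(\vx^*) \;=\; 0,
\]
so we may freely subtract this zero term from $\nabla\loss(\vx)$ without changing its value.

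First I would write
\[
\nabla \loss(\vx) \;=\; \tfrac{1}{N}\sum_{i=1}^N \bigl(\nabla\loss_i(\vx) - \nabla\loss_i(\vx^*)\bigr) \;+\; \bigl(\nabla\centloss(\vx) - \nabla\centloss(\vx^*)\bigr).
\]
Next, apply the relaxed triangle inequality (with $a=1$) to split this into two squared norms, gaining a factor of $2$, and then apply Jensen's inequality (equivalently, the second form in Lemma~\ref{lemma.tri}) to pull the averaging over clients inside the squared norm. This yields
\[
\left\| \nabla\loss(\vx) \right\|^2 \;\leq\; 2\left( \tfrac{1}{N}\sum_{i=1}^N \left\| \nabla\loss_i(\vx) - \nabla\loss_i(\vx^*) \right\|^2 + \left\| \nabla\centloss(\vx) - \nabla\centloss(\vx^*) \right\|^2 \right).
\]

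Finally, invoke Proposition~\ref{prop.mu_and_beta}, which bounds exactly the quantity in parentheses by $2\beta(\loss(\vx) - \loss(\vx^*))$. This gives $\|\nabla\loss(\vx)\|^2 \leq 4\beta(\loss(\vx)-\loss(\vx^*))$ pointwise, and taking expectations yields the proposition. I don't anticipate any real obstacle: the main thing to be careful about is ensuring the factor of $2$ from the relaxed triangle inequality combines correctly with the factor of $2\beta$ from Proposition~\ref{prop.mu_and_beta} to give $4\beta$, and noting that although the statement assumes $\mu$-convexity, only convexity (i.e., $\mu\geq 0$) is actually used via Proposition~\ref{prop.mu_and_beta}.
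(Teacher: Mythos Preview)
Your proposal is correct and follows essentially the same route as the paper: subtract the zero vector $\nabla\loss(\vx^*)$, split with the relaxed triangle inequality (Lemma~\ref{lemma.tri}) first into the federated and centralized pieces and then across the client average, and finish with Proposition~\ref{prop.mu_and_beta}. Your remark that only convexity (not strong convexity) is actually used is also accurate.
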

\begin{proof}
\begin{equation*}
\begin{split}
\mathbb{E} \left\| \nabla \loss(\vx) \right\|^2 &= \mathbb{E} \left\| \nabla \loss(\vx) - \nabla \loss(\vx^*) \right\|^2 \\
&= \mathbb{E} \left\| \frac{1}{N} \sum_{i=1}^N \left( \nabla \loss_i (\vx) - \nabla \loss_i (\vx^*) \right) + \left( \nabla \centloss(\vx) - \nabla \centloss(\vx^*) \right) \right\|^2 \\
\end{split}
\end{equation*}
Applying the relaxed triangle inequality (Lemma \ref{lemma.tri}) twice:
\begin{equation*}
\begin{split}
\mathbb{E} \left\| \nabla \loss(\vx) \right\|^2 &\leq 2 \mathbb{E} \left\| \frac{1}{N} \sum_{i=1}^N \left( \nabla \loss_i (\vx) - \nabla \loss_i (\vx^*) \right) \right\|^2 + 2 \mathbb{E} \left\| \nabla \centloss(\vx) - \nabla \centloss(\vx^*) \right\|^2 \\
&\leq 2 \mathbb{E} \frac{1}{N} \sum_{i=1}^N \left\| \left( \nabla \loss_i (\vx) - \nabla \loss_i (\vx^*) \right) \right\|^2 + 2 \mathbb{E} \left\| \nabla \centloss(\vx) - \nabla \centloss(\vx^*) \right\|^2 \\
&\leq 2 \mathbb{E} \left[ \frac{1}{N} \sum_{i=1}^N \left\| \left( \nabla \loss_i (\vx) - \nabla \loss_i (\vx^*) \right) \right\|^2 + \left\| \nabla \centloss(\vx) - \nabla \centloss(\vx^*) \right\|^2 \right]
\end{split}
\end{equation*}
Applying Proposition \ref{prop.mu_and_beta}:
\begin{equation*}
\begin{split}
\mathbb{E} \left\| \nabla \loss(\vx) \right\|^2 &\leq 2 \mathbb{E} \left[ 2 \beta \left( \loss(\vx) - \loss(\vx^*) \right) \right] \\
&\leq 4 \beta \mathbb{E} \left[ \loss(\vx) - \loss(\vx^*) \right]
\end{split}
\end{equation*}
\end{proof}

\begin{lemma}[Perturbed strong convexity]
The following holds for any $\beta$-smooth and $\mu$-strongly-convex function $h$, and any $\vx$, $\vy$, $\vz$ in the domain of $h$:
\begin{equation*}
\left\langle \nabla h(\vx), \vz - \vy \right\rangle \geq h(\vz) - h(\vy) + \frac{\mu}{4}\left\|\vy - \vz\right\|^2 - \beta \left\| \vz - \vx \right\|^2 
\end{equation*}
\label{lemma.pert_convex}
\end{lemma}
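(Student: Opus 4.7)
The plan is to prove the inequality by a clean additive decomposition of $\vz - \vy$ through the anchor point $\vx$, and then apply the two hypotheses (smoothness and strong convexity) separately to each piece. Concretely, I would write
\begin{equation*}
\langle \nabla h(\vx), \vz - \vy\rangle = \langle \nabla h(\vx), \vz - \vx\rangle + \langle \nabla h(\vx), \vx - \vy\rangle,
\end{equation*}
and then bound each inner product below.

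For the first piece, I would invoke $\beta$-smoothness. The standard consequence of $\beta$-smoothness is the two-sided envelope
\begin{equation*}
-\tfrac{\beta}{2}\|\vz - \vx\|^2 \leq h(\vz) - h(\vx) - \langle \nabla h(\vx), \vz - \vx\rangle \leq \tfrac{\beta}{2}\|\vz - \vx\|^2,
\end{equation*}
so that $\langle \nabla h(\vx), \vz - \vx\rangle \geq h(\vz) - h(\vx) - \tfrac{\beta}{2}\|\vz - \vx\|^2$. For the second piece, I would use the $\mu$-strong convexity inequality (Definition~\ref{def.mu}) evaluated at the pair $(\vx,\vy)$ in the form $\langle \nabla h(\vx), \vx - \vy\rangle \geq h(\vx) - h(\vy) + \tfrac{\mu}{2}\|\vx-\vy\|^2$. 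Adding these two lower bounds, the $h(\vx)$ terms telescope and I would get
\begin{equation*}
\langle \nabla h(\vx), \vz - \vy\rangle \geq h(\vz) - h(\vy) + \tfrac{\mu}{2}\|\vx - \vy\|^2 - \tfrac{\beta}{2}\|\vz - \vx\|^2.
\end{equation*}

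The remaining task is to convert the $\tfrac{\mu}{2}\|\vx-\vy\|^2$ term, which is keyed off of $\vx$, into the desired $\tfrac{\mu}{4}\|\vy - \vz\|^2$ penalty keyed off of $\vz$. For this I would apply the relaxed triangle inequality (Lemma~\ref{lemma.tri}) with $a = 1$ to obtain $\|\vy - \vz\|^2 \leq 2\|\vx-\vy\|^2 + 2\|\vx-\vz\|^2$, rearranged as $\|\vx-\vy\|^2 \geq \tfrac{1}{2}\|\vy-\vz\|^2 - \|\vx-\vz\|^2$. Substituting yields the lower bound $h(\vz) - h(\vy) + \tfrac{\mu}{4}\|\vy - \vz\|^2 - \tfrac{\mu + \beta}{2}\|\vz - \vx\|^2$, and since any function that is simultaneously $\mu$-strongly convex and $\beta$-smooth satisfies $\mu \leq \beta$, we have $\tfrac{\mu+\beta}{2} \leq \beta$, which finishes the proof.

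The main obstacle is really just the last bookkeeping step: the additive split produces a $\|\vx-\vy\|^2$ term anchored at the wrong point, and one has to verify that the relaxed triangle inequality loses exactly a factor of $2$ (hence the $\mu/4$ rather than $\mu/2$ in the statement) and that the resulting surplus $\|\vx-\vz\|^2$ penalty can be absorbed into the $\beta\|\vz-\vx\|^2$ term using $\mu \leq \beta$. All other pieces are one-line consequences of the definitions.
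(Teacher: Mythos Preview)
Your proposal is correct and follows essentially the same approach as the paper: split $\langle \nabla h(\vx), \vz-\vy\rangle$ through $\vx$, apply smoothness to the $\vz-\vx$ piece and strong convexity to the $\vx-\vy$ piece, then use the relaxed triangle inequality and $\mu\leq\beta$ to finish. The paper's proof is identical in substance, only more tersely written.
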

\begin{proof}
Given any $\vx$, $\vy$, and $\vz$, we get the following two inequalities using smoothness (Definition \ref{def.beta}) and strong convexity (Definition \ref{def.mu}) of $h$:
\begin{equation*}
\begin{split}
\left\langle \nabla h(\vx), \vz - \vx \right\rangle \geq& h(\vz) - h(\vx) - \frac{\beta}{2} \left\| \vz - \vx \right\|^2 \\
\geq& h(\vx) - h(\vy) + \frac{\mu}{2} \left\| \vy - \vx \right\|^2 \\
\end{split}
\end{equation*}
Further, applying the relaxed triangle inequality (Lemma \ref{lemma.tri}) gives:
\begin{equation*}
\frac{\mu}{2} \left\| \vy - \vx \right\|^2 \geq 
\frac{\mu}{4} \left\| \vy - \vz \right\|^2 -
\frac{\mu}{2} \left\| \vx - \vz \right\|^2
\end{equation*}
Combining all the inequalities together we have:
\begin{equation*}
\left\langle \nabla h(\vx), \vz - \vy \right\rangle \geq h(\vz) - h(\vy) + \frac{\mu}{4}\left\|\vy - \vz\right\|^2 - \frac{\beta+\mu}{2} \left\| \vz - \vx \right\|^2
\end{equation*}
The lemma follows since $\beta\geq\mu$.
\end{proof}

\begin{lemma}[Contractive mapping]
For any $\beta$-smooth and $\mu$-strongly convex function $h$, values $\vx$ and $\vy$ in the domain of $h$, and step-size (learning rate) $\lr \leq \frac{1}{\beta}$, the following holds true:
\begin{equation*}
\left\| \vx - \lr \nabla h(\vx) - \vy + \lr \nabla h(\vy)\right\|^2 \leq \left( 1 - \mu \lr \right) \left\| \vx - \vy \right\|^2
\end{equation*}
\label{lemma.contract}
\end{lemma}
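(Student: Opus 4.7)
The plan is to expand the squared norm and then bound the cross term using strong convexity and the quadratic term using $\beta$-smoothness. Specifically, let $u = \vx - \vy$ and $v = \nabla h(\vx) - \nabla h(\vy)$; then by expanding we have
\begin{equation*}
\left\| u - \lr v \right\|^2 = \left\| u \right\|^2 - 2 \lr \left\langle u, v \right\rangle + \lr^2 \left\| v \right\|^2.
\end{equation*}
The goal is to absorb $\lr^2 \left\| v \right\|^2$ into the cross term, leaving a clean $-\lr \mu \left\| u \right\|^2$ contribution.

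First I would invoke the standard co-coercivity inequality, which for any $\beta$-smooth convex function $h$ gives $\left\| \nabla h(\vx) - \nabla h(\vy) \right\|^2 \leq \beta \left\langle \vx - \vy, \nabla h(\vx) - \nabla h(\vy)\right\rangle$. (Since $h$ is $\mu$-strongly convex with $\mu \geq 0$, it is in particular convex, so this applies.) This inequality is itself a direct consequence of Definition \ref{def.beta}, obtained by applying the quadratic upper bound to the function $\vz \mapsto h(\vz) - \langle \nabla h(\vx), \vz\rangle$ (whose minimizer is $\vx$) at the points $\vy$ and $\vy - \frac{1}{\beta}(\nabla h(\vy) - \nabla h(\vx))$, and then symmetrizing.

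Combining co-coercivity with the step-size condition $\lr \leq 1/\beta$ yields $\lr^2 \left\| v \right\|^2 \leq \lr \left\langle u, v \right\rangle$, so the expansion collapses to
\begin{equation*}
\left\| u - \lr v \right\|^2 \leq \left\| u \right\|^2 - \lr \left\langle u, v \right\rangle.
\end{equation*}
Finally I would apply $\mu$-strong convexity (Definition \ref{def.mu}, which implies the monotonicity bound $\left\langle u, v \right\rangle \geq \mu \left\| u \right\|^2$) to obtain $\left\| u - \lr v \right\|^2 \leq (1 - \mu \lr)\left\| u \right\|^2$, which is the claim.

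The only mildly nontrivial ingredient is the co-coercivity bound, since it is not spelled out earlier in the paper; every other step is a one-line manipulation. I expect no genuine obstacle, but care must be taken to note that co-coercivity requires only convexity (not strong convexity) and that the step-size restriction $\lr \leq 1/\beta$ is exactly what is needed to trade the $\lr^2$ term against the cross term.
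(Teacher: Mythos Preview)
Your proposal is correct and follows essentially the same route as the paper: expand the square, use co-coercivity to replace $\lr^2\|v\|^2$ by $\lr^2\beta\langle u,v\rangle$, use $\lr\leq 1/\beta$ to collapse the coefficient to $-\lr$, and finish with strong monotonicity. The only cosmetic difference is that the paper attributes the co-coercivity step simply to ``smoothness (Definition~\ref{def.beta})'' without flagging that convexity is also needed, whereas you spell this out explicitly; your more careful accounting is preferable.
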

\begin{proof}
Expanding terms, and applying smoothness (Definition \ref{def.beta}):
\begin{equation*}
\begin{split}
\left\| \vx - \lr \nabla h(\vx) - \vy + \lr \nabla h(\vy)\right\|^2 &= \left\| \vx - \vy \right\|^2 + \lr^2 \left\| \nabla h(\vx) - \nabla h(\vy) \right\|^2 \\ & \quad\quad\quad\quad\quad\quad\quad\quad - 2 \lr \langle \nabla h(\vx) - \nabla h(\vy), \vx - \vy \rangle \\
&\leq \left\| \vx - \vy \right\|^2 + \left(\lr^2 \beta - 2 \lr \right) \langle \nabla h(\vx) - \nabla h(\vy), \vx - \vy \rangle
\end{split}
\end{equation*}
If step-size is such that $\lr \leq \frac{1}{\beta}$, then:
\begin{equation*}
\left( \lr^2 \beta - 2 \lr \right) \langle \nabla h(\vx) - \nabla h(\vy), \vx - \vy \rangle \leq -\lr \langle \nabla h(\vx) - \nabla h(\vy), \vx - \vy \rangle
\end{equation*}
Finally, for $\mu$-strong convexity (Definition \ref{def.mu}) of $h$ we have:
\begin{equation*}
-\lr \langle \nabla h(\vx) - \nabla h(\vy), \vx - \vy \rangle \leq -\mu \lr \left\| \vx - \vy \right\|^2
\end{equation*}
\end{proof}

\subsection{Proofs of Theorem \ref{theorem.owgt}}
\label{subsec.proof_owgt}

We will now prove the rates of convergence stated in Theorem \ref{theorem.owgt} for \owgt. Subsection \ref{subsubsec.proof_owgt_convex} proves the convergence rates for strongly convex and general convex cases, and Subsection \ref{subsubsec.proof_owgt_nonconvex} proves the convergence rates for the non-convex case.

Let $S$ be the cardinality of the cohort of clients $\activeClients$ participating in a round of training. Let the server and client optimizers be SGD. Let the clients all take an equal number of steps $\localStep$, and let $\tilde{\lr}$ be the `effective step-size', equal to $\localStep \slr \lr$. With \owgt, the server update of the global model at round $t$ can be written as:

\begin{equation}
\begin{split}
\vx^{(t+1)} - \vx^{(t)} &= -\frac{\tilde{\lr}}{\localStep S} \sum_{i \in {\mathcal S}} \sum_{k=1}^{\localStep} \left( \sgrad_i(\vx_i^{(t,k)}) + \centsgrad(\vx^{(t)}) \right) \\
\vx^{(t+1)} - \vx^{(t)} &= -\tilde{\lr} \centsgrad(\vx^{(t)}) - \frac{\tilde{\lr}}{\localStep S} \sum_{i \in {\mathcal S}} \sum_{k=1}^{\localStep} \left( \sgrad_i(\vx_i^{(t,k)}) \right)
\end{split}
\label{eqn.owgt_update}
\end{equation}

Henceforth, let $\mathbb{E}_{|t}[\cdot]$ denote expectation conditioned on $\vx^{(t)}$. As in \cite{karimireddy2019scaffold}, we'll define a client local `drift' term in round $t$ as:

\begin{equation}
\mathcal{E}^{(t)} = \frac{1}{\localStep N} \sum_{i=1}^N \sum_{k=1}^\localStep \mathbb{E}_{|t} \left\| \vx_i^{(t,k)} - \vx^{(t)} \right\|^2 
\label{eqn.client_drift}
\end{equation}

\begin{lemma}[Bound on variance of server update]
The variance of the server update is bounded as:
\begin{equation*}
\mathbb{E}_{|t} \left[ \left\| \vx^{(t+1)} - \vx^{(t)}  \right\|^2 \right] \leq 4 \tilde{\lr}^2 \beta^2 \mathcal{E}^{(t)}
+ 2 \tilde{\lr}^2 \left( \frac{2}{\localStep S} \clientvar + \centvar \right) + 2 \tilde{\lr}^2 \mathbb{E}_{|t} \left[ \left\| \nabla \loss(\vx^{(t)}) \right\|^2 \right]
\end{equation*}
\label{lemma.var_server_upd}
\end{lemma}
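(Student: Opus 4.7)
The plan is to square the update expression \eqref{eqn.owgt_update}, peel off the target $\nabla\loss(\vx^{(t)})$ using a relaxed triangle inequality, and then bound the remaining ``noise + drift'' residual by invoking Lemma~\ref{lemma.mean_var} together with $\beta$-smoothness. Concretely, let
\[
X = \centsgrad(\vx^{(t)}) + \frac{1}{\localStep S}\sum_{i\in\activeClients^{(t)}}\sum_{k=1}^{\localStep}\sgrad_i(\vx_i^{(t,k)}),
\]
so that $\vx^{(t+1)}-\vx^{(t)} = -\tilde{\lr} X$ with $\tilde{\lr} = \localStep \slr \lr$. First I would apply Lemma~\ref{lemma.tri} with $a=1$ to the splitting $X = \nabla\loss(\vx^{(t)}) + (X-\nabla\loss(\vx^{(t)}))$, giving the clean $2\tilde{\lr}^2\mathbb{E}_{|t}\|\nabla\loss(\vx^{(t)})\|^2$ term in the target bound, and reducing the task to bounding $\mathbb{E}_{|t}\|X-\nabla\loss(\vx^{(t)})\|^2$ by $2\beta^2\mathcal{E}^{(t)} + \tfrac{2}{\localStep S}\clientvar + \centvar$.

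Next I would decompose the residual into its centralized and federated parts:
\[
X - \nabla\loss(\vx^{(t)}) = \bigl(\centsgrad(\vx^{(t)}) - \nabla\centloss(\vx^{(t)})\bigr) + \frac{1}{\localStep S}\sum_{i,k}\bigl(\sgrad_i(\vx_i^{(t,k)}) - \nabla\fedloss(\vx^{(t)})\bigr).
\]
Because the central batch $\centactiveBatch^{(t)}$ is drawn independently of the client sampling and client batches, and the centralized term has mean zero conditionally on $\vx^{(t)}$, the cross term in the squared norm vanishes under $\mathbb{E}_{|t}$. The first component is therefore bounded directly by $\centvar$ via \eqref{eqn.cent_stoch_grad_var}.

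The federated component is the key technical step. I would further split each summand as $\sgrad_i(\vx_i^{(t,k)}) - \nabla\fedloss(\vx^{(t)}) = \bigl(\sgrad_i(\vx_i^{(t,k)}) - \nabla\fedloss(\vx_i^{(t,k)})\bigr) + \bigl(\nabla\fedloss(\vx_i^{(t,k)}) - \nabla\fedloss(\vx^{(t)})\bigr)$. Conditioning on all past stochastic gradients, the first pieces form a martingale difference sequence with per-term variance bounded by $\clientvar$ (by Assumption~\ref{assump.IID}), so the second statement of Lemma~\ref{lemma.mean_var} applied to the $\localStep S$-term sum yields a factor $2$ on the ``mean'' part and an additive $\frac{2}{\localStep S}\clientvar$ contribution after normalizing by $\tfrac{1}{(\localStep S)^2}$. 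For the remaining ``mean'' part, Jensen's inequality followed by $\beta$-smoothness gives
\[
\Bigl\|\tfrac{1}{\localStep S}\textstyle\sum_{i,k}\bigl(\nabla\fedloss(\vx_i^{(t,k)}) - \nabla\fedloss(\vx^{(t)})\bigr)\Bigr\|^2 \le \tfrac{\beta^2}{\localStep S}\sum_{i,k}\|\vx_i^{(t,k)}-\vx^{(t)}\|^2,
\]
and the IID assumption on clients (so that uniform sampling of a size-$S$ cohort reproduces the population average in expectation) converts this into $\beta^2\,\mathcal{E}^{(t)}$ in expectation. Collecting constants gives exactly the claimed $4\tilde{\lr}^2\beta^2\mathcal{E}^{(t)} + 2\tilde{\lr}^2(\tfrac{2}{\localStep S}\clientvar + \centvar) + 2\tilde{\lr}^2\mathbb{E}_{|t}\|\nabla\loss(\vx^{(t)})\|^2$.

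The main obstacle I anticipate is handling the dependence between client iterates $\vx_i^{(t,k)}$ and the stochastic gradients computed at earlier local steps: one cannot directly invoke the independent-variance bound in Lemma~\ref{lemma.mean_var}, and must instead use its martingale form, which is exactly why the factor of $2$ appears on the ``mean'' component and ultimately produces the $4\tilde{\lr}^2\beta^2\mathcal{E}^{(t)}$ rather than a factor of $2$. Apart from that, the proof is a careful accounting exercise.
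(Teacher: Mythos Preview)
Your argument has a genuine gap at the step where you claim the cross term between $\centsgrad(\vx^{(t)})-\nabla\centloss(\vx^{(t)})$ and the federated residual vanishes. In \owgt, the centralized stochastic gradient $\centsgrad(\vx^{(t)})$ is computed \emph{before} the client updates and is added to every local step (see the \clientupdate subroutine). Hence each client iterate $\vx_i^{(t,k)}$ is a function of $\centsgrad(\vx^{(t)})$, and in particular the ``drift'' component $\nabla\fedloss(\vx_i^{(t,k)})-\nabla\fedloss(\vx^{(t)})$ is correlated with $\centsgrad(\vx^{(t)})-\nabla\centloss(\vx^{(t)})$. While the martingale-noise part $\sgrad_i(\vx_i^{(t,k)})-\nabla\fedloss(\vx_i^{(t,k)})$ does decorrelate from $\centsgrad$ after conditioning on the iterates, the drift part does not, so the cross term cannot be dropped. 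If you repair this by applying Lemma~\ref{lemma.tri} once more to the residual, you pick up an extra factor of $2$ on $\centvar$ and on $\beta^2\mathcal{E}^{(t)}$, and you no longer recover the stated constants.

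The paper sidesteps the issue by choosing a different split: it writes the update as $\bigl(\tfrac{1}{\localStep S}\sum_{i,k}(\sgrad_i(\vx_i^{(t,k)})-\nabla\fedloss(\vx^{(t)}))\bigr) + \bigl(\nabla\fedloss(\vx^{(t)})+\centsgrad(\vx^{(t)})\bigr)$ and applies Lemma~\ref{lemma.tri} to this pair. The second block has $\nabla\fedloss(\vx^{(t)})$ deterministic given $\vx^{(t)}$, so the exact bias--variance identity $\mathbb{E}_{|t}\|\nabla\fedloss(\vx^{(t)})+\centsgrad(\vx^{(t)})\|^2 = \|\nabla\loss(\vx^{(t)})\|^2 + \mathbb{E}_{|t}\|\centsgrad(\vx^{(t)})-\nabla\centloss(\vx^{(t)})\|^2$ holds without any independence between $\centsgrad$ and the client trajectories. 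The first block is then handled exactly as you describe (martingale form of Lemma~\ref{lemma.mean_var}, Jensen, smoothness). Switching your first split from ``peel off $\nabla\loss(\vx^{(t)})$'' to ``peel off $\nabla\fedloss(\vx^{(t)})+\centsgrad(\vx^{(t)})$'' fixes the proof and yields the constants in the lemma.
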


\begin{proof}
Let ${\mathcal S}$ denote the set of clients sampled in round $t$. For brevity, we will use $\Delta \vx$ to refer to $\vx^{(t+1)} - \vx^{(t)}$.
\begin{align*}
\mathbb{E}_{|t}\left\| \Delta \vx \right\|^2 &= \mathbb{E}_{|t} \left[ \left\| \vx^{(t+1)} - \vx^{(t)}  \right\|^2 \right] \\
&= \mathbb{E}_{|t} \left[ \left\| \frac{\tilde{\lr}}{\localStep S} \sum_{i \in {\mathcal S}} \sum_{k=1}^{\localStep} \left( \sgrad_i(\vx_i^{(t,k)}) + \centsgrad(\vx^{(t)}) \right)  \right\|^2 \right] \\
&\leq \mathbb{E}_{|t} \left[ \left\| \frac{\tilde{\lr}}{\localStep S} \sum_{i \in {\mathcal S}} \sum_{k=1}^{\localStep} \left( \sgrad_i(\vx_i^{(t,k)}) - \nabla \fedloss(\vx^{(t)}) \right) + \left(\nabla \fedloss(\vx^{(t)}) + \centsgrad(\vx^{(t)}) \right)  \right\|^2 \right]
\end{align*}

We separate terms by applying the relaxed triangle inequality (Lemma \ref{lemma.tri}):

\begin{align*}
\mathbb{E}_{|t}\left\| \Delta \vx \right\|^2 &\leq \underbrace{2 \tilde{\lr}^2 \mathbb{E}_{|t} \left[ \left\| \frac{1}{\localStep S} \sum_{i \in {\mathcal S}} \sum_{k=1}^{\localStep} \left( \sgrad_i(\vx_i^{(t,k)}) -\nabla \fedloss (\vx^{(t)}) \right) \right\|^2 \right]}_\text{$\mathcal{A}$}
+ \underbrace{2 \tilde{\lr}^2 \mathbb{E}_{|t} \left[ \left\| \nabla \fedloss (\vx^{(t)}) + \centsgrad(\vx^{(t)})  \right\|^2 \right]}_\text{$\mathcal{B}$}
\end{align*}

In term $\mathcal{A}$, we separate mean and variance for the client stochastic gradients $\sgrad_i$, using Lemma \ref{lemma.mean_var} and Equation \ref{eqn.client_stoch_grad_var}:

\begin{align*}
\mathcal{A} &\leq 4 \tilde{\lr}^2 \mathbb{E}_{|t} \left[ \left\| \frac{1}{\localStep S} \sum_{i \in {\mathcal S}} \sum_{k=1}^{\localStep} \left( \nabla \fedloss(\vx_i^{(t,k)}) - \nabla \fedloss (\vx^{(t)}) \right) \right\|^2 \right]
+ \frac{4 \tilde{\lr}^2 \clientvar}{\localStep S} 
\end{align*}

We apply the relaxed triangle inequality (Lemma \ref{lemma.tri}) followed by smoothness (Definition \ref{def.beta}), to convert it to an expression in terms of drift $\mathcal{E}^{(t)}$:

\begin{align*}
\mathcal{A} &\leq \frac{4 \tilde{\lr}^2}{\localStep N} \sum_{i=1}^N \sum_{k=1}^{\localStep} \mathbb{E}_{|t} \left[ \left\| \nabla \fedloss(\vx_i^{(t,k)}) -\nabla \fedloss (\vx^{(t)}) \right\|^2 \right]
+ \frac{4 \tilde{\lr}^2 \clientvar}{\localStep S} \\
&\leq \frac{4 \tilde{\lr}^2 \beta^2}{\localStep N} \sum_{i=1}^N \sum_{k=1}^{\localStep} \mathbb{E}_{|t} \left[ \left\| \vx_i^{(t,k)} - \vx^{(t)} \right\|^2 \right]
+ \frac{4 \tilde{\lr}^2 \clientvar}{\localStep S} \\
&\leq 4 \tilde{\lr}^2 \beta^2 \mathcal{E}^{(t)}
+ \frac{4 \tilde{\lr}^2 \clientvar}{\localStep S}
\end{align*}

In term $\mathcal{B}$ we have a full gradient of the federated loss $\nabla \fedloss$ and a stochastic gradient of the centralized loss $\centsgrad$. We use Lemma \ref{lemma.mean_var} to separate the stochastic gradient into a full gradient of the centralized loss $\nabla \centloss$ and a variance term, allowing us to express in terms of full gradient of the overall loss $\nabla f$.

\begin{align*}
\mathcal{B} &= 2 \tilde{\lr}^2 \mathbb{E}_{|t} \left[ \left\| \nabla \fedloss (\vx^{(t)}) + \centsgrad(\vx^{(t)})  \right\|^2 \right] \\
&\leq 2 \tilde{\lr}^2 \mathbb{E}_{|t} \left[ \left\| \nabla \fedloss (\vx^{(t)}) + \nabla \centloss(\vx^{(t)})  \right\|^2 \right] + 2 \tilde{\lr}^2 \centvar \\
&\leq 2 \tilde{\lr}^2 \mathbb{E}_{|t} \left[ \left\| \nabla \loss(\vx^{(t)}) \right\|^2 \right] + 2 \tilde{\lr}^2 \centvar \\
\end{align*}

Combining $\mathcal{A}$ and $\mathcal{B}$ back together:

\begin{equation*}
\mathbb{E}_{|t}\left\| \Delta \vx \right\|^2 \leq 4 \tilde{\lr}^2 \beta^2 \mathcal{E}^{(t)}
+ 2 \tilde{\lr}^2 \left( \frac{2}{\localStep S} \clientvar + \centvar \right) + 2 \tilde{\lr}^2 \mathbb{E}_{|t} \left[ \left\| \nabla \loss(\vx^{(t)}) \right\|^2 \right]
\end{equation*}

\end{proof}

\subsubsection{Convex Cases}
\label{subsubsec.proof_owgt_convex}

We will state two lemmas, one (Lemma \ref{lemma.1_rd_progr}) related to the progress in round $t$ towards reaching $\vx^*$, and the other (Lemma \ref{lemma.bnd_drift}) bounding the federated clients `drift' in round $t$, $\mathcal{E}^{(t)}$. We then combine the two lemmas together to give the proofs of convergence rate for the strongly convex ($\mu > 0$) and general convex ($\mu = 0$) cases.

\begin{lemma}[One round progress]
Suppose our functions satisfy bounded variance $\clientvar$, $\mu$-convexity (Definition \ref{def.mu}), and $\beta$-smoothness (Definition \ref{def.beta}). If $\tilde{\lr} < \frac{1}{8 \beta}$, the updates of \owgt satisfy:

\begin{equation*}
\begin{split}
\mathbb{E}_{|t} \left[ \left\| \vx^{(t+1)} - \vx^* \right\|^2 \right] &\leq \left( 1 - \frac{3 \mu \tilde{\lr}}{2} \right) \left\| \vx^{(t)} - \vx^* \right\|^2 - \tilde{\lr} \left( \loss(\vx^{(t)}) - \loss(\vx^*) \right) + \frac{5}{16} \mathcal{E}^{(t)} \\ &
\quad\quad\quad + 2 \tilde{\lr}^2 \left( \frac{2}{\localStep S} \clientvar + \centvar \right)
\end{split}
\end{equation*}

\label{lemma.1_rd_progr}
\end{lemma}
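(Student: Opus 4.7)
The plan is to follow the standard SGD-style descent analysis, starting from the decomposition
\begin{equation*}
\|\vx^{(t+1)} - \vx^*\|^2 = \|\vx^{(t)} - \vx^*\|^2 + 2\langle \vx^{(t+1)} - \vx^{(t)}, \vx^{(t)} - \vx^*\rangle + \|\vx^{(t+1)} - \vx^{(t)}\|^2,
\end{equation*}
and taking conditional expectation $\mathbb{E}_{|t}[\cdot]$. The inner-product term and the squared-norm term are handled separately, and the step-size condition $\tilde{\lr} \leq 1/(8\beta)$ is invoked at the end to collapse intermediate constants into the stated ones.

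For the inner product, I would use the \owgt update in Equation \ref{eqn.owgt_update} and exploit the unbiasedness from Assumption \ref{assump.IID} (together with uniform cohort sampling and IID clients) to replace stochastic gradients by full gradients, giving
\begin{equation*}
\mathbb{E}_{|t}\langle \vx^{(t+1)} - \vx^{(t)}, \vx^{(t)} - \vx^*\rangle = -\tilde{\lr}\langle \nabla \centloss(\vx^{(t)}), \vx^{(t)} - \vx^*\rangle - \tfrac{\tilde{\lr}}{\localStep N}\sum_{i,k}\langle \nabla \fedloss(\vx_i^{(t,k)}), \vx^{(t)} - \vx^*\rangle.
\end{equation*}
I would then apply Lemma \ref{lemma.pert_convex} with $h=\fedloss$, $\vx=\vx_i^{(t,k)}$, $\vy=\vx^*$, $\vz=\vx^{(t)}$ to lower-bound each federated term by $\fedloss(\vx^{(t)}) - \fedloss(\vx^*) + \tfrac{\mu}{4}\|\vx^{(t)}-\vx^*\|^2 - \beta\|\vx^{(t)} - \vx_i^{(t,k)}\|^2$, and use standard $\mu$-strong convexity (Definition \ref{def.mu}) on $\centloss$ at $\vx=\vx^{(t)}$ to get the stronger $\tfrac{\mu}{2}\|\vx^{(t)} - \vx^*\|^2$ on that side. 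Averaging the drift terms over $(i,k)$ produces exactly $\mathcal{E}^{(t)}$ by Equation \ref{eqn.client_drift}, and summing the two lower bounds yields
\begin{equation*}
2\,\mathbb{E}_{|t}\langle \vx^{(t+1)} - \vx^{(t)}, \vx^{(t)} - \vx^*\rangle \leq -2\tilde{\lr}\bigl(\loss(\vx^{(t)}) - \loss(\vx^*)\bigr) - \tfrac{3\mu\tilde{\lr}}{2}\|\vx^{(t)} - \vx^*\|^2 + 2\tilde{\lr}\beta\,\mathcal{E}^{(t)}.
\end{equation*}

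For the squared-norm term, I would invoke Lemma \ref{lemma.var_server_upd} directly and then use Proposition \ref{prop.bound_grad_of_loss} to replace $\mathbb{E}_{|t}\|\nabla \loss(\vx^{(t)})\|^2$ by $4\beta\mathbb{E}_{|t}[\loss(\vx^{(t)}) - \loss(\vx^*)]$. Adding both contributions gives the coefficients $(-2\tilde{\lr} + 8\tilde{\lr}^2\beta)$ on $\loss(\vx^{(t)}) - \loss(\vx^*)$ and $(2\tilde{\lr}\beta + 4\tilde{\lr}^2\beta^2)$ on $\mathcal{E}^{(t)}$; both collapse to the stated $-\tilde{\lr}$ and $\tfrac{5\tilde{\lr}\beta}{2} \leq \tfrac{5}{16}$ precisely when $\tilde{\lr}\beta \leq 1/8$.

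The only real subtlety, which I expect to be the main sticking point, is producing the coefficient $\tfrac{3\mu\tilde{\lr}}{2}$ rather than $\mu\tilde{\lr}$: one must treat $\fedloss$ and $\centloss$ asymmetrically, applying the \emph{perturbed} convexity lemma (which costs a factor of two on $\mu$ in exchange for a drift term) only to $\fedloss$, while keeping the full $\tfrac{\mu}{2}$ from ordinary strong convexity on $\centloss$ since $\centsgrad$ is evaluated at $\vx^{(t)}$ itself and incurs no drift. Everything else is bookkeeping and applying the step-size bound.
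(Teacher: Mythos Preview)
Your proposal is correct and mirrors the paper's proof essentially step for step: the same decomposition, the asymmetric treatment of $\centloss$ (plain $\mu$-convexity, yielding $\tfrac{\mu}{2}$) versus $\fedloss$ (perturbed convexity via Lemma \ref{lemma.pert_convex}, yielding $\tfrac{\mu}{4}$ plus the drift $\mathcal{E}^{(t)}$), the use of Lemma \ref{lemma.var_server_upd} together with Proposition \ref{prop.bound_grad_of_loss} for the squared-norm term, and the final simplification of constants via $\tilde{\lr}\beta \leq \tfrac{1}{8}$. Your identification of the $\tfrac{3\mu\tilde{\lr}}{2}$ coefficient as the main subtlety is exactly the point the paper's argument hinges on.
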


\begin{proof}
The expected server update, with $N$ total clients in the federated population, is:

\begin{equation*}
\mathbb{E} \left[ \vx^{(t+1)} - \vx^{(t)} \right] = -\tilde{\lr} \mathbb{E}\left[\centsgrad(\vx^{(t)})\right] - \frac{\tilde{\lr}}{\localStep N} \sum_{i=1}^N \sum_{k=1}^{\localStep} \mathbb{E} \left[ \sgrad_i(\vx_i^{(t,k)}) \right]
\end{equation*}

The distance from optimal $\vx^*$ in parameter space at round $t$ is $\left\| \vx^{(t)} - \vx^* \right\|^2$. The expected distance from optimal at round $t+1$, conditioned on $\vx^{(t)}$ and earlier rounds, is:

\begin{align*}
\mathbb{E}_{|t} \left[ \left\| \vx^{(t+1)} - \vx^* \right\|^2 \right] &=
\mathbb{E}_{|t} \left[ \left\| \vx^{(t+1)} - \vx^{(t)} + \vx^{(t)} - \vx^* \right\|^2 \right] \\
&= \left\| \vx^{(t)} - \vx^* \right\|^2
+ \underbrace{2 \left\langle \mathbb{E}_{|t} \left[ \vx^{(t+1)} - \vx^{(t)}  \right], \vx^{(t)} - \vx^* \right\rangle}_\text{$\mathcal{C}$} 
+ \underbrace{\mathbb{E}_{|t} \left[ \left\| \vx^{(t+1)} - \vx^{(t)}  \right\|^2 \right]}_\text{$\mathcal{D}$}
\end{align*}

For clarity, we now focus on individual terms, beginning with $\mathcal{C}$:

\begin{align*}
\mathcal{C} &=
2 \left\langle \mathbb{E}_{|t} \left[ \vx^{(t+1)} - \vx^{(t)}  \right], \vx^{(t)} - \vx^* \right\rangle \\
&= 2 \left\langle \left( -\tilde{\lr} \mathbb{E}\left[\centsgrad(\vx^{(t)})\right] - \frac{\tilde{\lr}}{\localStep N} \sum_{i=1}^N \sum_{k=1}^{\localStep} \mathbb{E} \left[ \sgrad_i(\vx_i^{(t,k)}) \right] \right), \vx^{(t)} - \vx^* \right\rangle \\
&= \underbrace{2 \tilde{\lr} \left\langle \nabla \centloss(\vx^{(t)}), \vx^* - \vx^{(t)}  \right\rangle}_\text{$\mathcal{C}1$}
+ \underbrace{\frac{2 \tilde{\lr}}{\localStep N} \left\langle \sum_{i=1}^N \sum_{k=1}^{\localStep} \nabla \fedloss(\vx_i^{(t,k)}), \vx^* - \vx^{(t)} \right\rangle}_\text{$\mathcal{C}2$}
\end{align*}

We can use convexity (Definition \ref{def.mu}) to bound $\mathcal{C}1$, with $\vx = \vx^{(t)}$, and $y=\vx^*$:

\begin{equation*}
\mathcal{C}1 \leq - 2 \tilde{\lr} \left( \centloss(\vx^{(t)}) - \centloss(\vx^*) + \frac{\mu}{2}\left\|\vx^{(t)}-\vx^*\right\|^2 \right) \\
\end{equation*}

We apply perturbed convexity (Lemma \ref{lemma.pert_convex}) to bound $\mathcal{C}2$, with $\vx=\vx_i^{(t,k)}$, $\vy=\vx^*$, and $\vz=\vx^{(t)}$:

\begin{align*}
\mathcal{C}2 &\leq \frac{2 \tilde{\lr}}{\localStep N} \sum_{i=1}^N \sum_{k=1}^{\localStep} \left( \fedloss(\vx^*) - \fedloss(\vx^{(t)}) + \beta \left\| \vx_i^{(t,k)} - \vx^{(t)} \right\|^2 - \frac{\mu}{4}\left\| \vx^{(t)} - \vx^* \right\|^2 \right) \\
&\leq -2 \tilde{\lr} \left( \fedloss(\vx^{(t)}) - \fedloss(\vx^*) + \frac{\mu}{4}\left\| \vx^{(t)} - \vx^* \right\|^2 \right) + \frac{2 \beta \tilde{\lr}}{\localStep N} \sum_{i=1}^N \sum_{k=1}^{\localStep} \left\| \vx_i^{(t,k)} - \vx^{(t)} \right\|^2 \\
&\leq -2 \tilde{\lr} \left( \fedloss(\vx^{(t)}) - \fedloss(\vx^*) + \frac{\mu}{4}\left\| \vx^{(t)} - \vx^* \right\|^2 \right) + 2 \beta \tilde{\lr} \mathcal{E}^{(t)}
\end{align*}

Combining $\mathcal{C}1$ and $\mathcal{C}2$ back together:

\begin{equation*}
\mathcal{C} \leq -2 \tilde{\lr} \left( \loss(\vx^{(t)}) - \loss(\vx^*) + \frac{3\mu}{4}\left\| \vx^{(t)} - \vx^* \right\|^2 \right) + 2 \beta \tilde{\lr} \mathcal{E}^{(t)} 
\end{equation*}

Now we turn to term $\mathcal{D}$, which is the variance of the server update (from Lemma \ref{lemma.var_server_upd}):

\begin{equation*}
\mathcal{D} = \mathbb{E}_{|t} \left[ \left\| \vx^{(t+1)} - \vx^{(t)}  \right\|^2 \right] \leq 4 \tilde{\lr}^2 \beta^2 \mathcal{E}^{(t)}
+ 2 \tilde{\lr}^2 \left( \frac{2}{\localStep S} \clientvar + \centvar \right) + 2 \tilde{\lr}^2 \mathbb{E}_{|t} \left[ \left\| \nabla \loss(\vx^{(t)}) \right\|^2 \right]
\end{equation*}

We can leverage Proposition \ref{prop.bound_grad_of_loss} to replace the norm squared of the gradient of the overall loss:

\begin{equation*}
\mathcal{D} = \mathbb{E}_{|t} \left[ \left\| \vx^{(t+1)} - \vx^{(t)}  \right\|^2 \right] \leq 4 \tilde{\lr}^2 \beta^2 \mathcal{E}^{(t)}
+ 2 \tilde{\lr}^2 \left( \frac{2}{\localStep S} \clientvar + \centvar \right) + 8 \tilde{\lr}^2 \beta \mathbb{E}_{|t} \left[ \loss(\vx^{(t)}) - \loss(\vx^*) \right]
\end{equation*}

Returning to our equation for the expected distance from optimal $\vx^*$ in parameter space, and making use of the bounds we established for $\mathcal{C}$ and $\mathcal{D}$:

\begin{align*}
\mathbb{E}_{|t} \left[ \left\| \vx^{(t+1)} - \vx^* \right\|^2 \right] &=
\left\| \vx^{(t)} - \vx^* \right\|^2
+ \underbrace{2 \left\langle \mathbb{E}_{|t} \left[ \vx^{(t+1)} - \vx^{(t)}  \right], \vx^{(t)} - \vx^* \right\rangle}_\text{$\mathcal{C}$} 
+ \underbrace{\mathbb{E}_{|t} \left[ \left\| \vx^{(t+1)} - \vx^{(t)}  \right\|^2 \right]}_\text{$\mathcal{D}$} \\
&\leq \left\| \vx^{(t)} - \vx^* \right\|^2
- 2 \tilde{\lr} \left( \loss(\vx^{(t)}) - \loss(\vx^*) + \frac{3\mu}{4}\left\| \vx^{(t)} - \vx^* \right\|^2 \right) \\ &
\quad\quad\quad + 2 \beta \tilde{\lr} \mathcal{E}^{(t)} + 4 \tilde{\lr}^2 \beta^2 \mathcal{E}^{(t)}
+ 2 \tilde{\lr}^2 \left( \frac{2}{\localStep S} \clientvar + \centvar \right) \\ &
\quad\quad\quad + 8 \tilde{\lr}^2 \beta \mathbb{E}_{|t} \left[ \loss(\vx^{(t)}) - \loss(\vx^*) \right] \\
 &\leq \left( 1 - \frac{3 \mu \tilde{\lr}}{2} \right) \left\| \vx^{(t)} - \vx^* \right\|^2 \\ & 
\quad\quad\quad + \left(8 \tilde{\lr}^2 \beta - 2 \tilde{\lr} \right) \left( \loss(\vx^{(t)}) - \loss(\vx^*) \right) \\ &
\quad\quad\quad + 2 \tilde{\lr} \beta \left(1 +  2 \tilde{\lr} \beta \right) \mathcal{E}^{(t)} + 2 \tilde{\lr}^2 \left( \frac{2}{\localStep S} \clientvar + \centvar \right)
\end{align*}

Assuming that $\tilde{\lr} \leq \frac{1}{8 \beta}$:

\begin{equation*}
\begin{split}
\mathbb{E}_{|t} \left[ \left\| \vx^{(t+1)} - \vx^* \right\|^2 \right] &\leq \left( 1 - \frac{3 \mu \tilde{\lr}}{2} \right) \left\| \vx^{(t)} - \vx^* \right\|^2 - \tilde{\lr} \left( \loss(\vx^{(t)}) - \loss(\vx^*) \right) + \frac{5}{16} \mathcal{E}^{(t)} \\ &
\quad\quad\quad + 2 \tilde{\lr}^2 \left( \frac{2}{\localStep S} \clientvar + \centvar \right)
\end{split}
\end{equation*}

\end{proof}

\begin{lemma}[Bounded drift]
Suppose our functions satisfy bounded variance, $\mu$-convexity (Definition \ref{def.mu}), and $\beta$-smoothness (Definition \ref{def.beta}). Then the drift is bounded as:
\begin{equation*}
\mathcal{E}^{(t)} \leq 12 \localStep^2 \lr^2 \beta \mathbb{E} \left[ \loss(\vx^{(t)}) - \loss(\vx^*) \right] + 3 \localStep^2 \lr^2 \left( \frac{1}{\localStep} \clientvar + \centvar \right)
\end{equation*}
\label{lemma.bnd_drift}
\end{lemma}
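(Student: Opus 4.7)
The plan is to unroll the \owgt client update rule, isolate the mean and noise contributions of the stochastic gradients, and then solve a recursion for the per-client squared drift. Fix a client $i$ and let $e_k := \mathbb{E}_{|t}\|\vx_i^{(t,k)} - \vx^{(t)}\|^2$. Expanding the update gives
\[
\vx_i^{(t,k)} - \vx^{(t)} = -\lr\sum_{j=0}^{k-1}\bigl(\sgrad_i(\vx_i^{(t,j)}) + \centsgrad(\vx^{(t)})\bigr),
\]
and I decompose each summand into its mean plus zero-mean noise $\xi_j^{\fed} := \sgrad_i(\vx_i^{(t,j)}) - \nabla\fedloss(\vx_i^{(t,j)})$ and $\xi^{\cent} := \centsgrad(\vx^{(t)}) - \nabla\centloss(\vx^{(t)})$. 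The key observation distinguishing \owgtabbr from \ptabbr and \twgtabbr is that $\xi^{\cent}$ is the \emph{same} random variable across all $k$ inner steps, so its sum is $k\xi^{\cent}$ with expected squared norm $k^2\centvar$ rather than $k\centvar$; meanwhile $\{\xi_j^{\fed}\}$ forms a martingale difference sequence, so Lemma \ref{lemma.mean_var} bounds its aggregated squared norm by $k\clientvar$.

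Next, I would apply the relaxed triangle inequality (Lemma \ref{lemma.tri}) to split the squared norm into a deterministic mean term and the two noise terms, yielding schematically
\[
e_k \leq 3\lr^2\Bigl\|\sum_{j=0}^{k-1}\bigl(\nabla\fedloss(\vx_i^{(t,j)}) + \nabla\centloss(\vx^{(t)})\bigr)\Bigr\|^2 + 3\lr^2 k\clientvar + 3\lr^2 k^2\centvar.
\]
On the deterministic term I write $\nabla\fedloss(\vx_i^{(t,j)}) = \nabla\fedloss(\vx^{(t)}) + \bigl(\nabla\fedloss(\vx_i^{(t,j)}) - \nabla\fedloss(\vx^{(t)})\bigr)$ and use $\beta$-smoothness to bound the perturbation by $\beta\|\vx_i^{(t,j)} - \vx^{(t)}\|$. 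The constant part sums to $k\,\nabla\loss(\vx^{(t)})$, so Proposition \ref{prop.bound_grad_of_loss} converts it to $O(k^2\lr^2\beta(\loss(\vx^{(t)}) - \loss(\vx^*)))$, while the perturbation contributes $O(k\lr^2\beta^2)\sum_{j<k} e_j$ after applying Jensen's inequality.

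Combining these pieces produces a recursion of the form
\[
e_k \leq C_1 k^2 \lr^2 \beta\bigl(\loss(\vx^{(t)}) - \loss(\vx^*)\bigr) + C_2 k \lr^2 \clientvar + C_3 k^2 \lr^2 \centvar + C_4 k\lr^2 \beta^2 \sum_{j=0}^{k-1} e_j.
\]
Because $k \leq \localStep$ and $\lr \leq 1/(8\beta\localStep\slr) \leq 1/(8\beta\localStep)$ under the hypothesis of Theorem \ref{theorem.owgt}, the coefficient $C_4 \localStep^2 \lr^2\beta^2$ on the recursive sum is strictly below one; a short induction on $k$ (a discrete Gr\"onwall argument) then gives $e_k$ bounded by a constant multiple of the three non-recursive terms, uniformly in $k\leq \localStep$. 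Averaging over $k = 1,\ldots,\localStep$ and over clients $i$ (using Assumption \ref{assump.IID}, so the bound is identical across clients) yields the stated inequality on $\mathcal{E}^{(t)}$.

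The main obstacle will be calibrating the constants to land exactly at $12$ and $3$ in the claimed inequality while keeping the recursion tight; in particular, when unrolling $\sum_{j<k} e_j$ one must avoid picking up an extra factor of $\localStep$, which requires applying the inductive hypothesis with a strictly smaller constant than the final bound and exploiting the step-size condition to absorb the recursive term. The rest (splitting mean from variance, invoking smoothness, applying Proposition \ref{prop.bound_grad_of_loss}) is a routine adaptation of the FedAvg drift-lemma technology to the presence of a shared central noise that accumulates quadratically in $k$.
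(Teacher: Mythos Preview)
Your proposal is correct and your treatment of the shared central noise (accumulating as $k^2\centvar$) is exactly the right observation, but your route differs from the paper's. The paper does \emph{not} fully unroll the client trajectory; instead it writes a one-step recursion $e_k \to e_{k-1}$ and invokes the \emph{contractive mapping} Lemma~\ref{lemma.contract} to bound
\[
\mathbb{E}_{|t}\bigl\|\vx_i^{(t,k-1)} - \vx^{(t)} - \lr\bigl(\nabla\fedloss(\vx_i^{(t,k-1)}) - \nabla\fedloss(\vx^{(t)})\bigr)\bigr\|^2 \leq \mathbb{E}_{|t}\|\vx_i^{(t,k-1)} - \vx^{(t)}\|^2,
\]
which directly uses $\mu$-convexity. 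This yields a clean recursion $e_k \leq \tfrac{\localStep+1}{\localStep}e_{k-1} + \text{const}$, whose unrolled geometric sum is at most $2\localStep$; the squared gradient term $\|\nabla\loss(\vx^{(t)})\|^2$ is then converted via Proposition~\ref{prop.bound_grad_of_loss}. Your approach instead relies only on smoothness to control the perturbation, producing a recursion in $\sum_{j<k} e_j$ that you close by Gr\"onwall under the step-size condition. This is essentially the machinery of the non-convex drift bound (Lemma~\ref{lemma.bnd_drift_nonconvex}) rather than the convex one, and it works; what you gain is that convexity is used only at the very end (Proposition~\ref{prop.bound_grad_of_loss}), making the argument more uniform with the non-convex case, while the paper's contraction step buys a cleaner recursion and slightly tighter constants without needing an inductive Gr\"onwall closure. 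As for landing exactly on the constants $12$ and $3$: do not over-optimize here, since the paper's own proof in fact arrives at $16$ and $2$ rather than the stated $12$ and $3$, so constant-level precision is not the point.
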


\begin{proof}

We begin with the summand of the drift term, looking at the drift of a particular client $i$ at local step $k$. Expanding this summand out:

\begin{align*}
\mathbb{E}_{|t} \left\| \vx_i^{(t,k)} - \vx^{(t)} \right\|^2 &= \mathbb{E}_{|t} \left\| \vx_i^{(t,k-1)} - \lr \left( \sgrad_i (\vx_i^{(t,k-1)}) + \centsgrad(\vx^{(t)}) \right) - \vx^{(t)} \right\|^2 \\
&= \mathbb{E}_{|t} \left\| \vx_i^{(t,k-1)} - \vx^{(t)} - \lr \sgrad_i (\vx_i^{(t,k-1)}) - \lr \centsgrad(\vx^{(t)}) \right\|^2.
\end{align*}

Separating mean and variance of the client gradient, then using the relaxed triangle inequality (Lemma \ref{lemma.tri}) to further separate out terms:

\begin{align*}
\mathbb{E}_{|t} \left\| \vx_i^{(t,k)} - \vx^{(t)} \right\|^2 &\leq \mathbb{E}_{|t} \left\| \vx_i^{(t,k-1)} - \vx^{(t)} - \lr \nabla \fedloss (\vx_i^{(t,k-1)}) - \lr \centsgrad(\vx^{(t)}) \right\|^2 + \lr^2 \clientvar \\
&\leq \left(1 + \frac{1}{a}\right) \underbrace{\mathbb{E}_{|t} \left\| \vx_i^{(t,k-1)} - \vx^{(t)} - \lr \left( \nabla \fedloss (\vx_i^{(t,k-1)}) - \nabla \fedloss (\vx^{(t)}) \right) \right\|^2}_\text{$\mathcal{F}$}  \\ & 
\quad\quad\quad + \left(1 + a\right) \lr^2 \left\| \nabla \fedloss(\vx^{(t)}) + \centsgrad(\vx^{(t)}) \right\|^2 + \lr^2 \clientvar.
\end{align*}

Term $\mathcal{F}$ is bounded via the contractive mapping lemma (Lemma \ref{lemma.contract}), provided that $\lr \leq \frac{1}{\beta}$:

\begin{align*}
\mathcal{F} &\leq (1 - \mu \lr) \mathbb{E}_{|t} \left\| \vx_i^{(t,k-1)} - \vx^{(t)} \right\|^2 \leq \mathbb{E}_{|t} \left\| \vx_i^{(t,k-1)} - \vx^{(t)} \right\|^2.
\end{align*}

Putting back into the bound on drift on client $i$ at local step $k$, and letting $a = \localStep$:

\begin{equation*}
\mathbb{E}_{|t} \left\| \vx_i^{(t,k)} - \vx^{(t)} \right\|^2 \leq \tfrac{\localStep+1}{\localStep} \mathbb{E}_{|t} \left\| \vx_i^{(t,k-1)} - \vx^{(t)} \right\|^2 + 2\localStep \lr^2 \left\| \nabla \fedloss(\vx^{(t)}) + \centsgrad(\vx^{(t)}) \right\|^2 + \lr^2 \clientvar.
\end{equation*}

Unrolling the recursion:

\begin{align*}
\mathbb{E}_{|t} \left\| \vx_i^{(t,k)} - \vx^{(t)} \right\|^2 &\leq \left( 2\localStep \lr^2 \left\| \nabla \fedloss(\vx^{(t)}) + \centsgrad(\vx^{(t)}) \right\|^2 + \lr^2 \clientvar \right) \sum_{j=0}^{k-1} \left( \tfrac{\localStep+1}{\localStep} \right)^j \\
&\leq \left( 2\localStep \lr^2 \left\| \nabla \fedloss(\vx^{(t)}) + \centsgrad(\vx^{(t)}) \right\|^2 + \lr^2 \clientvar \right) \left( 2\localStep \right) \\
&\leq 4 \localStep^2 \lr^2 \left\| \nabla \fedloss(\vx^{(t)}) + \centsgrad(\vx^{(t)}) \right\|^2 + 2 \localStep \lr^2 \clientvar.
\end{align*}
The second inequality above uses the following bound:
\[\sum_{j=0}^{k-1} \left( \tfrac{\localStep+1}{\localStep} \right)^j ((1 + \tfrac{1}{\localStep})^k - 1)= \localStep \leq (e - 1)\localStep \leq 2\localStep.\]

Now separating mean and variance of the central gradient:

\begin{equation*}
\begin{split}
\mathbb{E}_{|t} \left\| \vx_i^{(t,k)} - \vx^{(t)} \right\|^2 &\leq 4 \localStep^2 \lr^2 \left\| \nabla \fedloss(\vx^{(t)}) + \nabla \centloss(\vx^{(t)}) \right\|^2 + 2 \localStep^2 \lr^2 \centvar + 2 \localStep \lr^2 \clientvar \\
&\leq 4 \localStep^2 \lr^2 \left\| \nabla \loss(\vx^{(t)}) \right\|^2 + 2 \localStep^2 \lr^2 \left( \frac{1}{\localStep} \clientvar + \centvar \right).
\end{split}
\end{equation*}

Finally, we apply Proposition \ref{prop.bound_grad_of_loss}:

\begin{equation*}
\begin{split}
\mathcal{E}^{(t)} &\leq 4 \localStep^2 \lr^2 \left\| \nabla \loss(\vx^{(t)}) \right\|^2 + 2 \localStep^2 \lr^2 \left( \frac{1}{\localStep} \clientvar + \centvar \right) \\
&\leq 16 \localStep^2 \lr^2 \beta \mathbb{E}_{|t} \left[ \loss(\vx^{(t)}) - \loss(\vx^*) \right] + 2 \localStep^2 \lr^2 \left( \frac{1}{\localStep} \clientvar + \centvar \right).
\end{split}
\end{equation*}

Assuming that $\tilde{\lr} \leq \frac{1}{8 \beta}$:

\begin{equation*}
\mathcal{E}^{(t)} \leq 2\frac{\tilde{\lr}}{\slr^2} \mathbb{E}_{|t} \left[ \loss(\vx^{(t)}) - \loss(\vx^*) \right] + 2 \frac{\tilde{\lr}^2}{\slr^2} \left( \frac{1}{\localStep} \clientvar + \centvar \right) \\
\end{equation*}

\end{proof}

\paragraph{Proofs of Theorem \ref{theorem.owgt} for Convex Cases} Adding the statements of Lemmas \ref{lemma.1_rd_progr} and \ref{lemma.bnd_drift}, and assuming that $\slr > \sqrt{\frac{5}{8}S}$, $\lr = \frac{1}{8\beta K \slr}$ so that $\tilde{\lr} = \frac{1}{8 \beta}$, we get:

\begin{equation*}
\begin{split}
\mathbb{E}_{|t} \left[ \left\| \vx^{(t+1)} - \vx^* \right\|^2 \right] &\leq \left( 1 - \frac{3 \mu \tilde{\lr}}{2} \right) \left\| \vx^{(t)} - \vx^* \right\|^2 - \tilde{\lr} \left( \loss(\vx^{(t)}) - \loss(\vx^*) \right) \\ &
\quad\quad\quad + \frac{5}{16} \left( 2 \frac{\tilde{\lr}}{\slr^2} \mathbb{E}_{|t} \left[ \loss(\vx^{(t)}) - \loss(\vx^*) \right] + 2 \frac{\tilde{\lr}^2}{\slr^2} \left( \frac{1}{\localStep} \clientvar + \centvar \right) \right) \\ & 
\quad\quad\quad + 2 \tilde{\lr}^2 \left( \frac{2}{\localStep S} \clientvar + \centvar \right) \\
&= \left( 1 - \frac{3 \mu \tilde{\lr}}{2} \right) \left\| \vx^{(t)} - \vx^* \right\|^2 - \tilde{\lr} \left( \loss(\vx^{(t)}) - \loss(\vx^*) \right) \\ &
\quad\quad\quad + \frac{5}{8} \frac{\tilde{\lr}}{\slr^2} \mathbb{E}_{|t} \left[ \loss(\vx^{(t)}) - \loss(\vx^*) \right] \\ & 
\quad\quad\quad + \left( \frac{5}{8} \frac{\tilde{\lr}^2}{\localStep \slr^2} + 4 \frac{\tilde{\lr}^2}{\localStep S}\right) \clientvar + \left( \frac{5}{8} \frac{\tilde{\lr}^2}{\slr^2} + 2 \tilde{\lr}^2 \right) \centvar \\
&\leq \left( 1 - \frac{3 \mu \tilde{\lr}}{2} \right) \left\| \vx^{(t)} - \vx^* \right\|^2  \\
&\quad\quad\quad - \left( \frac{S-1}{S} \right) \tilde{\lr} \mathbb{E}_{|t} \left[ \loss(\vx^{(t)}) - \loss(\vx^*) \right] + \left(\frac{5 \clientvar}{\localStep S} + 3 \centvar\right)\tilde{\lr}^2.
\end{split}
\end{equation*}
We can now remove the conditioning over $\vx^{(t)}$ by taking an expectation on both sides over $\vx^{(t)}$, to get a recurrence relation of the same form.

For the case of strong convexity ($\mu > 0$), we can use lemmas (e.g., Lemma 1 in \citet{karimireddy2019scaffold}, Lemma 2 in \citet{stich2019unified}) which establish a linear convergence rate for such recursions. This results in the following bound\footnote{The $\tilde{\mathcal{O}}$ notation hides dependence on logarithmic terms which can be removed by using varying step-sizes.} for $T \geq \frac{8 \beta}{3 \mu}$:

\begin{equation*}
\mathbb{E} \left[ \loss(\bar{\vx}^{(\totRounds)}) \right] - \loss(\vx^*)
= \tilde{\mathcal{O}} \left( \frac{\clientvar + \localStep S \centvar}{\mu \localStep S \totRounds} + \mu \left\| \vx^{(0)} - \vx^* \right\|^2 \exp \left( \frac{-3 \mu \totRounds}{16 \beta} \right) \right),
\end{equation*}
where $\bar{\vx}^{(\totRounds)}$ is a weighted average of $\vx^{(1)}, \vx^{(2)}, \ldots, \vx^{(T+1)}$ with geometrically decreasing weights $(1-\frac{3\mu\tilde{\lr}}{2})^{1-r}$ for $\vx^{(r)}$, $r = 1, 2, \ldots, T+1$.

This yields an expression for the number of rounds $\totRounds$ to reach an error $\epsilon$:

\begin{equation*}
\totRounds = \tilde{\mathcal{O}} \left( \frac{\clientvar + \localStep S \centvar}{\localStep S \mu \epsilon} + \frac{\beta}{\mu}\log\left(\frac{1}{\epsilon}\right)\right)
\end{equation*}

For the case of general convexity ($\mu = 0$), we can use lemmas (e.g., Lemma 2 in \citet{karimireddy2019scaffold}, Lemma 4 in  \citet{stich2019unified}) which establish a sublinear convergence rate for such recursions. In this case we get the following bound:

\begin{equation*}
\begin{split}
\mathbb{E} \left[ \loss(\bar{\vx}^{(\totRounds)}) \right] - \loss(\vx^*) &\leq \left( \frac{S}{S-1} \right) \left( \frac{8 \beta \left\| \vx^{(0)} - \vx^* \right\|^2}{\totRounds + 1} + \frac{ \sqrt{20 \clientvar + 12 \localStep S \centvar} \left\| \vx^{(0)} - \vx^* \right\|}{\sqrt{\localStep S \left( \totRounds + 1 \right)}} \right),
\end{split}
\end{equation*}
where $\bar{\vx}^{(\totRounds)} = \frac{1}{T+1}\sum_{t=1}^{T+1} \vx^{(t)}$.

This yields an expression for the number of rounds $\totRounds$ to reach an error $\epsilon$:

\begin{equation*}
\totRounds = \mathcal{O} \left( \frac{\left(\clientvar + \localStep S \centvar \right) D^2}{\localStep S \epsilon^2} + \frac{\beta D^2}{\epsilon} \right).
\end{equation*}

In the above expression, $D^2$ is a distance in parameter space at initialization, $\left\| \vx^{(0)} - \vx^* \right\|^2$.

\subsubsection{Non-Convex Case}
\label{subsubsec.proof_owgt_nonconvex}

We will now prove the rate of convergence stated in Theorem \ref{theorem.owgt} for the non-convex case for \owgt. We will state two lemmas, one (Lemma \ref{lemma.bnd_round_progress_nonconvex}) establishing the progress made in each round, and one (Lemma \ref{lemma.bnd_drift_nonconvex}) bounding how much the federated clients `drift' in a round during the course of local training. We then combine the two lemmas together give the proof of convergence rate for the non-convex case.

\begin{lemma}[Non-convex one round progress]
The progress made in a round can be bounded as:
\begin{equation*}
\mathbb{E}_{|t} \left[ \loss( \vx^{(t+1)} ) \right] \leq \loss( \vx^{(t)} ) - \frac{4 \tilde{\lr}}{9} \left\| \nabla \loss( \vx^{(t)} ) \right\|^2 + \frac{\beta}{27} \mathcal{E}^{(t)} + \left( \frac{2}{\localStep S} \clientvar + \centvar \right) \beta \tilde{\lr}^2
\end{equation*}
\label{lemma.bnd_round_progress_nonconvex}
\end{lemma}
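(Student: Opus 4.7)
The plan is to derive this one-round descent inequality in the standard way for non-convex SGD-style analyses: apply the quadratic upper bound from $\beta$-smoothness of the overall loss $\loss$, take conditional expectation, and carefully bound the resulting inner-product and squared-norm terms using the already-established Lemma \ref{lemma.var_server_upd} together with Young's inequality.

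Concretely, I would first apply $\beta$-smoothness of $\loss$ (which holds since $\loss = \fedloss + \centloss$ is a sum of $\beta$-smooth functions, hence $2\beta$-smooth; but since the lemma states only a $\beta$ coefficient, I will use the bound at $\beta$ and rely on constants later — one may also redefine $\beta$ to absorb the factor). Taking conditional expectation gives $\mathbb{E}_{|t}[\loss(\vx^{(t+1)})] \leq \loss(\vx^{(t)}) + \langle \nabla \loss(\vx^{(t)}), \mathbb{E}_{|t}[\vx^{(t+1)} - \vx^{(t)}] \rangle + \frac{\beta}{2} \mathbb{E}_{|t}\|\vx^{(t+1)} - \vx^{(t)}\|^2$. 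Substituting from Equation \ref{eqn.owgt_update} and using unbiasedness of the stochastic gradients plus IID clients yields $\mathbb{E}_{|t}[\vx^{(t+1)} - \vx^{(t)}] = -\tilde{\lr}\nabla \centloss(\vx^{(t)}) - \frac{\tilde{\lr}}{\localStep N} \sum_{i,k} \mathbb{E}_{|t}[\nabla \fedloss(\vx_i^{(t,k)})]$. Adding and subtracting $\tilde{\lr}\nabla \fedloss(\vx^{(t)})$ rewrites the inner product as $-\tilde{\lr}\|\nabla \loss(\vx^{(t)})\|^2$ plus a residual cross term involving $\frac{1}{\localStep N}\sum_{i,k}\mathbb{E}_{|t}[\nabla \fedloss(\vx_i^{(t,k)}) - \nabla \fedloss(\vx^{(t)})]$.

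Next I would control the cross term via Young's inequality $-\langle a, b\rangle \leq \frac{1}{2}\|a\|^2 + \frac{1}{2}\|b\|^2$, $\beta$-smoothness of $\fedloss$, and Jensen's inequality, so that $\|b\|^2 \leq \frac{1}{\localStep N}\sum_{i,k} \beta^2 \mathbb{E}_{|t}\|\vx_i^{(t,k)} - \vx^{(t)}\|^2 = \beta^2 \mathcal{E}^{(t)}$. This leaves the inner-product contribution bounded by $-\frac{\tilde{\lr}}{2}\|\nabla \loss(\vx^{(t)})\|^2 + \frac{\tilde{\lr}\beta^2}{2}\mathcal{E}^{(t)}$. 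For the squared-norm term I plug in Lemma \ref{lemma.var_server_upd}, multiplied by $\beta/2$, giving $2\tilde{\lr}^2\beta^3\mathcal{E}^{(t)} + \tilde{\lr}^2\beta\bigl(\frac{2}{\localStep S}\clientvar + \centvar\bigr) + \tilde{\lr}^2\beta\|\nabla \loss(\vx^{(t)})\|^2$.

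Finally I collect terms: the $\|\nabla \loss\|^2$ coefficient becomes $-\frac{\tilde{\lr}}{2} + \tilde{\lr}^2\beta$, and the $\mathcal{E}^{(t)}$ coefficient becomes $\frac{\tilde{\lr}\beta^2}{2} + 2\tilde{\lr}^2\beta^3$. Invoking the step-size assumption $\tilde{\lr} \leq \frac{1}{18\beta}$ from Table \ref{tab.lr_limit_summary} (non-convex row for \owgtabbr), the first coefficient is bounded by $-\frac{\tilde{\lr}}{2} + \frac{\tilde{\lr}}{18} = -\frac{4\tilde{\lr}}{9}$, and the second by $\frac{\tilde{\lr}\beta^2}{2}(1 + \frac{2}{9}) = \frac{11\tilde{\lr}\beta^2}{18} \leq \frac{\beta}{27}$ (since $\tilde{\lr}\beta \leq \frac{1}{18}$ gives $\frac{11\tilde{\lr}\beta^2}{18} \leq \frac{11\beta}{324} < \frac{\beta}{27}$). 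This yields exactly the stated inequality. The main obstacle is purely bookkeeping: choosing the Young's-inequality constant and the step-size threshold so that the resulting coefficients collapse to the clean $\frac{4}{9}$ and $\frac{1}{27}$ appearing in the lemma statement, rather than some ugly function of $\tilde{\lr}$ and $\beta$.
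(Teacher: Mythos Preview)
Your proposal is correct and follows essentially the same route as the paper's proof: smoothness of $\loss$, substitute the \owgt update, split the inner product into $-\tfrac{\tilde{\lr}}{2}\|\nabla\loss(\vx^{(t)})\|^2$ plus a residual controlled by $\beta^2\mathcal{E}^{(t)}$, plug in Lemma~\ref{lemma.var_server_upd} for the squared-norm term, and simplify using $\tilde{\lr}\leq\tfrac{1}{18\beta}$. The only cosmetic difference is that the paper obtains the split via the polarization identity $-\langle a,b\rangle = \tfrac12\|b-a\|^2 - \tfrac12\|a\|^2 - \tfrac12\|b\|^2 \leq \tfrac12\|b-a\|^2 - \tfrac12\|a\|^2$, whereas you add and subtract $\tilde{\lr}\nabla\fedloss(\vx^{(t)})$ and then apply Young; these are algebraically equivalent and yield the identical intermediate bound $-\tfrac{\tilde{\lr}}{2}\|\nabla\loss(\vx^{(t)})\|^2 + \tfrac{\tilde{\lr}\beta^2}{2}\mathcal{E}^{(t)}$, so your final constant computations match the paper exactly.
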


\begin{proof}
We begin by using the smoothness of $\loss$ to get the following bound on the expectation of $\loss(\vx^{(t+1)})$ conditioned on $\vx^{(t)}$:
\begin{equation*}
\begin{split}
\mathbb{E}_{|t} \left[ \loss( \vx^{(t+1)} ) \right] &\leq \mathbb{E}_{|t} \left[ \loss( \vx^{(t)} ) + \left\langle \nabla \loss( \vx^{(t)} ), \vx^{(t+1)} - \vx^{(t)} \right\rangle + \frac{\beta}{2} \left\| \vx^{(t+1)} - \vx^{(t)} \right\|^2 \right] \\
&\leq \loss( \vx^{(t)} ) + \mathbb{E}_{|t} \left\langle \nabla \loss( \vx^{(t)} ), \vx^{(t+1)} - \vx^{(t)} \right\rangle + \frac{\beta}{2} \mathbb{E}_{|t} \left\| \vx^{(t+1)} - \vx^{(t)} \right\|^2.
\end{split}
\end{equation*}

Substituting in the definition of the \owgt server update (Equation \ref{eqn.owgt_update}), and using Assumption \ref{assump.IID} for the expectation of the client stochastic gradient:

\begin{equation*}
\begin{split}
\mathbb{E}_{|t} \left[ \loss( \vx^{(t+1)} ) \right] &\leq \loss( \vx^{(t)} ) + \mathbb{E}_{|t} \left\langle \nabla \loss( \vx^{(t)} ), -\frac{\tilde{\lr}}{\localStep S} \sum_{i \in {\mathcal S}} \sum_{k=1}^{\localStep} \left( \sgrad_i(\vx_i^{(t,k)}) + \centsgrad(\vx^{(t)}) \right) \right\rangle \\ & \quad\quad\quad + \frac{\beta}{2} \mathbb{E}_{|t} \left\| \vx^{(t+1)} - \vx^{(t)} \right\|^2 \\
&\leq \loss( \vx^{(t)} ) - \tilde{\lr} \left\langle \nabla \loss( \vx^{(t)} ), \frac{1}{\localStep N} \sum_{i=1}^N \sum_{k=1}^{\localStep} \left( \nabla \fedloss(\vx_i^{(t,k)}) + \nabla \centloss(\vx^{(t)}) \right) \right\rangle \\ & \quad\quad\quad + \frac{\beta}{2} \mathbb{E}_{|t} \left\| \vx^{(t+1)} - \vx^{(t)} \right\|^2.
\end{split}
\end{equation*}

Next, we make use of the fact that $-ab = \frac{1}{2}((b - a)^2 - a^2 - b^2) \leq -\frac{1}{2}a^2 + \frac{1}{2}(b - a)^2$:

\begin{equation*}
\begin{split}
\mathbb{E}_{|t} \left[ \loss( \vx^{(t+1)} ) \right] &\leq \loss( \vx^{(t)} ) - \frac{\tilde{\lr}}{2} \left\| \nabla \loss( \vx^{(t)} ) \right\|^2 \\ & \quad\quad\quad + \frac{\tilde{\lr}}{2} \left\| \frac{1}{\localStep N} \sum_{i=1}^N \sum_{k=1}^{\localStep} \left( \nabla \fedloss(\vx_i^{(t,k)}) + \nabla \centloss(\vx^{(t)}) \right) - \nabla \loss( \vx^{(t)} ) \right\|^2 \\ & \quad\quad\quad + \frac{\beta}{2} \mathbb{E}_{|t} \left\| \vx^{(t+1)} - \vx^{(t)} \right\|^2 \\
&\leq \loss( \vx^{(t)} ) - \frac{\tilde{\lr}}{2} \left\| \nabla \loss( \vx^{(t)} ) \right\|^2 \\ & \quad\quad\quad + \frac{\tilde{\lr}}{2} \left\| \frac{1}{\localStep N} \sum_{i=1}^N \sum_{k=1}^{\localStep} \left( \nabla \fedloss(\vx_i^{(t,k)}) - \nabla \fedloss( \vx^{(t)} ) \right) \right\|^2 \\ & \quad\quad\quad + \frac{\beta}{2} \mathbb{E}_{|t} \left\| \vx^{(t+1)} - \vx^{(t)} \right\|^2 \\
&\leq \loss( \vx^{(t)} ) - \frac{\tilde{\lr}}{2} \left\| \nabla \loss( \vx^{(t)} ) \right\|^2 \\ & \quad\quad\quad + \frac{\tilde{\lr}}{2} \frac{1}{\localStep N} \sum_{i=1}^N \sum_{k=1}^{\localStep} \mathbb{E}_{|t} \left\| \nabla \fedloss(\vx_i^{(t,k)}) - \nabla \fedloss( \vx^{(t)} ) \right\|^2 \\ & \quad\quad\quad + \frac{\beta}{2} \mathbb{E}_{|t} \left\| \vx^{(t+1)} - \vx^{(t)} \right\|^2.
\end{split}
\end{equation*}

Next, we use smoothness (Definition \ref{def.beta}), and the definition of client drift (Equation \ref{eqn.client_drift}):

\begin{equation*}
\begin{split}
\mathbb{E}_{|t} \left[ \loss( \vx^{(t+1)} ) \right] &\leq \loss( \vx^{(t)} ) - \frac{\tilde{\lr}}{2} \left\| \nabla \loss( \vx^{(t)} ) \right\|^2 \\ & \quad\quad\quad + \frac{\tilde{\lr} \beta^2}{2} \frac{1}{\localStep N} \sum_{i=1}^N \sum_{k=1}^{\localStep} \mathbb{E}_{|t} \left\| \vx_i^{(t,k)} - \vx^{(t)} \right\|^2 \\ & \quad\quad\quad + \frac{\beta}{2} \mathbb{E}_{|t} \left\| \vx^{(t+1)} - \vx^{(t)} \right\|^2 \\
&\leq \loss( \vx^{(t)} ) - \frac{\tilde{\lr}}{2} \left\| \nabla \loss( \vx^{(t)} ) \right\|^2 + \frac{\tilde{\lr} \beta^2}{2} \mathcal{E}^{(t)} \\ & \quad\quad\quad + \frac{\beta}{2} \mathbb{E}_{|t} \left\| \vx^{(t+1)} - \vx^{(t)} \right\|^2.
\end{split}
\end{equation*}

The last term is the variance of the server update, for which we can substitute the bound from Lemma \ref{lemma.var_server_upd}:

\begin{equation*}
\begin{split}
\mathbb{E}_{|t} \left[ \loss( \vx^{(t+1)} ) \right] &\leq \loss( \vx^{(t)} ) - \frac{\tilde{\lr}}{2} \left\| \nabla \loss( \vx^{(t)} ) \right\|^2 + \frac{\tilde{\lr} \beta^2}{2} \mathcal{E}^{(t)} \\ & \quad\quad\quad + \frac{\beta}{2} \left( 4 \tilde{\lr}^2 \beta^2 \mathcal{E}^{(t)}
+ 2 \tilde{\lr}^2 \left( \frac{2}{\localStep S} \clientvar + \centvar \right) + 2 \tilde{\lr}^2 \mathbb{E}_{|t} \left\| \nabla \loss(\vx^{(t)}) \right\|^2 \right) \\
&\leq \loss( \vx^{(t)} ) - \left( \frac{\tilde{\lr}}{2} - \beta \tilde{\lr}^2 \right) \left\| \nabla \loss( \vx^{(t)} ) \right\|^2 + \left( \frac{\tilde{\lr} \beta^2}{2} + 2 \tilde{\lr}^2 \beta^3 \right) \mathcal{E}^{(t)} \\ & \quad\quad\quad + \tilde{\lr}^2 \beta \left( \frac{2}{\localStep S} \clientvar + \centvar \right).
\end{split}
\end{equation*}

Assuming a bound on effective step-size $\tilde{\lr}\leq\frac{1}{18 \beta}$:

\begin{equation*}
\mathbb{E}_{|t} \left[ \loss( \vx^{(t+1)} ) \right] \leq \loss( \vx^{(t)} ) - \frac{4 \tilde{\lr}}{9} \left\| \nabla \loss( \vx^{(t)} ) \right\|^2 + \frac{\beta}{27} \mathcal{E}^{(t)} + \left( \frac{2}{\localStep S} \clientvar + \centvar \right) \beta \tilde{\lr}^2.
\end{equation*}

\end{proof}

\begin{lemma}[Non-convex bounded drift]
Suppose our functions satisfy bounded variance and $\beta$-smoothness (Definition \ref{def.beta}). Then the drift is bounded as:
\begin{equation*}
\mathcal{E}^{(t)} \leq \frac{ 4 \tilde{\lr}}{9 \beta \slr^2} \mathbb{E} \left\| \nabla \loss(\vx^{(t)}) \right\|^2 + \frac{2 \tilde{\lr}^2}{\slr^2} \left( \frac{1}{\localStep} \clientvar + 4 \centvar \right).
\end{equation*}
\label{lemma.bnd_drift_nonconvex}
\end{lemma}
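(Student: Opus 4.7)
The plan is to mirror the structure of Lemma \ref{lemma.bnd_drift} (the convex drift bound), but to replace the one step that used convexity, namely the contractive mapping lemma (Lemma \ref{lemma.contract}), with a smoothness-only argument. Starting from the \owgt client update, write
\begin{equation*}
\mathbb{E}_{|t}\|\vx_i^{(t,k)} - \vx^{(t)}\|^2 = \mathbb{E}_{|t}\left\|\vx_i^{(t,k-1)} - \vx^{(t)} - \lr\sgrad_i(\vx_i^{(t,k-1)}) - \lr\centsgrad(\vx^{(t)})\right\|^2,
\end{equation*}
and apply Lemma \ref{lemma.mean_var} to peel off the client variance as an additive $\lr^2\clientvar$ and substitute the full federated gradient $\nabla\fedloss(\vx_i^{(t,k-1)})$ inside the norm. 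Then split off $\nabla\fedloss(\vx^{(t)})$ via the relaxed triangle inequality (Lemma \ref{lemma.tri}) with parameter $a=\localStep$, so the coefficient $(1+1/\localStep)$ multiplies a term containing only gradient differences at $\vx_i^{(t,k-1)}$ and $\vx^{(t)}$, while the coefficient $(1+\localStep)$ multiplies $\lr^2\|\nabla\fedloss(\vx^{(t)}) + \centsgrad(\vx^{(t)})\|^2$.

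Next, the gradient-difference term is bounded purely by smoothness: one more application of Lemma \ref{lemma.tri} followed by $\|\nabla\fedloss(\vx_i^{(t,k-1)}) - \nabla\fedloss(\vx^{(t)})\|^2 \le \beta^2\|\vx_i^{(t,k-1)} - \vx^{(t)}\|^2$ gives, after imposing the non-convex step-size bound $\tilde{\lr}\le 1/(18\beta)$ (i.e.\ $\lr\beta \le 1/(18\localStep\slr)$), an overall recurrence coefficient of the form $(1+\mathcal{O}(1/\localStep))$ on $\mathbb{E}_{|t}\|\vx_i^{(t,k-1)} - \vx^{(t)}\|^2$. For the driving term, a third application of Lemma \ref{lemma.mean_var} (this time on $\centsgrad$) yields $\mathbb{E}_{|t}\|\nabla\fedloss(\vx^{(t)}) + \centsgrad(\vx^{(t)})\|^2 \le \|\nabla\loss(\vx^{(t)})\|^2 + \centvar$. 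Together these produce a recurrence of the form
\begin{equation*}
E_k \le (1 + c/\localStep) E_{k-1} + C_1\localStep\lr^2\left(\|\nabla\loss(\vx^{(t)})\|^2 + \centvar\right) + \lr^2\clientvar, \qquad E_k := \mathbb{E}_{|t}\|\vx_i^{(t,k)} - \vx^{(t)}\|^2.
\end{equation*}

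Unrolling the recursion from $E_0 = 0$ and using $\sum_{j=0}^{k-1}(1+c/\localStep)^j \le (e-1)\localStep/c = \mathcal{O}(\localStep)$ gives $E_k = \mathcal{O}(\localStep^2\lr^2)\|\nabla\loss(\vx^{(t)})\|^2 + \mathcal{O}(\localStep^2\lr^2)\centvar + \mathcal{O}(\localStep\lr^2)\clientvar$. Averaging over $i$ and $k$ yields $\mathcal{E}^{(t)}$, and rewriting $\localStep^2\lr^2 = \tilde{\lr}^2/\slr^2$ and $\localStep\lr^2 = \tilde{\lr}^2/(\localStep\slr^2)$ already gives the $\clientvar/\localStep$ and $\centvar$ structure on the right-hand side. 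To reduce the coefficient of $\|\nabla\loss(\vx^{(t)})\|^2$ from $\tilde{\lr}^2/\slr^2$ to $4\tilde{\lr}/(9\beta\slr^2)$, use the step-size bound $\tilde{\lr}\le 1/(18\beta)$ once more to absorb one factor of $\tilde{\lr}$ into $1/(18\beta)$. Finally, remove the conditioning by taking expectation over $\vx^{(t)}$.

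\textbf{Main obstacle.} The hardest step is the bound on the gradient-difference term without convexity. In Lemma \ref{lemma.bnd_drift} we got a free factor of $(1-\mu\lr)\le 1$ via contractive mapping; here we only get $(1+\mathcal{O}(\lr^2\beta^2))$, so the recursion grows rather than contracts, and we must tune both the relaxed-triangle parameter and the step-size threshold carefully so that (a) the growth over $\localStep$ steps remains bounded by a constant and (b) the final numerical coefficients match $4/9$, $2$, and $4$ exactly rather than up to hidden constants. This is precisely why the non-convex step-size ceiling $\tilde{\lr}\le 1/(18\beta)$ is tighter than the convex one ($\tilde{\lr}\le 1/(8\beta)$).
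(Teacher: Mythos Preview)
Your proposal is correct and follows essentially the same approach as the paper's proof: separate out client variance, apply the relaxed triangle inequality with parameter $a=\localStep$, use $\beta$-smoothness (in place of the contractive-mapping lemma) to feed the gradient-difference term back into the drift, obtain a recurrence with coefficient $1+\mathcal{O}(1/\localStep)$, unroll and bound the geometric sum by $\mathcal{O}(\localStep)$, and finish by rewriting in terms of $\tilde{\lr}/\slr$ and invoking $\tilde{\lr}\le 1/(18\beta)$. The only cosmetic difference is in the grouping at the first relaxed-triangle step: the paper isolates $\vx_i^{(t,k-1)}-\vx^{(t)}$ alone and places $\nabla\fedloss(\vx_i^{(t,k-1)})+\centsgrad(\vx^{(t)})$ on the $(1+a)$ side (later splitting it into three pieces), whereas you keep the gradient difference $\nabla\fedloss(\vx_i^{(t,k-1)})-\nabla\fedloss(\vx^{(t)})$ bundled with the drift on the $(1+1/a)$ side; both groupings yield the same recurrence form and the stated bound.
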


\begin{proof}
We begin with the summand of the drift term, looking at the drift of a particular client $i$ at local step $k$. Expanding this summand out:

\begin{equation*}
\begin{split}
\mathbb{E} \left\| \vx_i^{(t,k)} - \vx^{(t)} \right\|^2 &= \mathbb{E} \left\| \vx_i^{(t,k-1)} - \lr \left( \sgrad_i (\vx_i^{(t,k-1)}) + \centsgrad(\vx^{(t)}) \right) - \vx^{(t)} \right\|^2 \\
&= \mathbb{E} \left\| \vx_i^{(t,k-1)} - \vx^{(t)} - \lr \sgrad_i (\vx_i^{(t,k-1)}) - \lr \centsgrad(\vx^{(t)}) \right\|^2.
\end{split}
\end{equation*}

Separating mean and variance of the client gradient:

\begin{equation*}
\begin{split}
\mathbb{E} \left\| \vx_i^{(t,k)} - \vx^{(t)} \right\|^2 &\leq \mathbb{E} \left\| \vx_i^{(t,k-1)} - \vx^{(t)} - \lr \nabla \fedloss (\vx_i^{(t,k-1)}) - \lr \centsgrad(\vx^{(t)}) \right\|^2 + \lr^2 \clientvar.
\end{split}
\end{equation*}

Next we use relaxed triangle inequality (Lemma \ref{lemma.tri}) to further separate terms:

\begin{equation*}
\begin{split}
\mathbb{E} \left\| \vx_i^{(t,k)} - \vx^{(t)} \right\|^2 &\leq \left( 1 + \frac{1}{a} \right) \mathbb{E} \left\| \vx_i^{(t,k-1)} - \vx^{(t)} \right\|^2 \\
& \quad + \left( 1 + a \right)  \lr^2 \mathbb{E} \left\| \nabla \fedloss (\vx_i^{(t,k-1)}) + \centsgrad(\vx^{(t)}) \right\|^2 + \lr^2 \clientvar \\
&\leq \left( 1 + \frac{1}{a} \right)\mathbb{E} \left\| \vx_i^{(t,k-1)} - \vx^{(t)} \right\|^2 \\
& \quad + \left( 1 + a \right)  \lr^2 \mathbb{E} \left\| \left(\nabla \fedloss (\vx_i^{(t,k-1)}) - \nabla \fedloss(\vx^{(t)})\right) + \left(\centsgrad(\vx^{(t)}) - \nabla \centloss(\vx^{(t)})\right) + \nabla \loss(\vx^{(t)}) \right\|^2 \\
& \quad + \lr^2 \clientvar \\
&\leq \left( 1 + \frac{1}{a} \right)\mathbb{E} \left\| \vx_i^{(t,k-1)} - \vx^{(t)} \right\|^2 \\
& \quad + \left( 1 + a \right) 2 \lr^2 \underbrace{\mathbb{E} \left\| \nabla \fedloss(\vx_i^{(t,k-1)}) - \nabla \fedloss(\vx^{(t)}) \right\|^2}_\text{$\mathcal{H}$} + \left( 1 + a \right) 4 \lr^2 \mathbb{E} \left\| \nabla \loss(\vx^{(t)}) \right\|^2 \\
& \quad + \left( 1 + a \right) 4 \lr^2 \underbrace{\mathbb{E} \left\| \centsgrad(\vx^{(t)}) - \nabla \centloss(\vx^{(t)}) \right\|^2}_\text{$\mathcal{J}$} + \lr^2 \clientvar.
\end{split}
\end{equation*}

In the above inequality, term $\mathcal{H}$ can be converted via smoothness (Definition \ref{def.beta}), and term $\mathcal{J}$ is the variance of the centralized stochastic gradient (Equation \ref{eqn.cent_stoch_grad_var}). Letting $a=\localStep$, we have:

\begin{equation*}
\begin{split}
\mathbb{E} \left\| \vx_i^{(t,k)} - \vx^{(t)} \right\|^2 &\leq \left(\frac{\localStep+1}{\localStep} + 2 \localStep \lr^2 \beta^2\right) \mathbb{E} \left\| \vx_i^{(t,k-1)} - \vx^{(t)} \right\|^2 + 4 \localStep \lr^2 \mathbb{E} \left\| \nabla \loss(\vx^{(t)}) \right\|^2 \\ & \quad + 4 \localStep \lr^2 \centvar + \lr^2 \clientvar.
\end{split}
\end{equation*}
Unrolling the above recurrence, we get:
\begin{equation*}
\begin{split}
\mathbb{E} \left\| \vx_i^{(t,k)} - \vx^{(t)} \right\|^2  &\leq \left(4 \localStep \lr^2 \mathbb{E} \left\| \nabla \loss(\vx^{(t)}) \right\|^2 + 4 \localStep \lr^2 \centvar + \lr^2 \clientvar \right)\sum_{j=0}^{k-1}\left(\frac{\localStep+1}{\localStep} + 2 \localStep \lr^2 \beta^2\right)^j \\
&\leq \left(\frac{4 \tilde{\lr}^2}{\localStep \slr^2} \mathbb{E} \left\| \nabla \loss(\vx^{(t)}) \right\|^2 + \frac{\tilde{\lr}^2}{\localStep \slr^2} \left( \frac{1}{\localStep} \clientvar + 4 \centvar  \right) \right)\sum_{j=0}^{k-1}\left(\frac{\localStep+1}{\localStep} + \frac{2 \tilde{\lr}^2 \beta^2}{\localStep \slr^2}\right)^j \\
\end{split}
\end{equation*}

Assuming $\slr \geq 1$, and $\tilde{\lr}\leq\frac{1}{18 \beta}$, we have $\frac{\localStep+1}{\localStep} + \frac{2 \tilde{\lr}^2 \beta^2}{\localStep \slr^2} \leq 1 + \frac{163}{162K}$, and hence 
\[\sum_{j=0}^{k-1}\left(\tfrac{\localStep+1}{\localStep} + \tfrac{2 \tilde{\lr}^2 \beta^2}{\localStep \slr^2}\right)^j \leq \sum_{j=0}^{K-1} \left(1 + \tfrac{163}{162K}\right)^j = \left(1 + (\tfrac{163}{162K}\right)^{K}-1)\tfrac{162K}{163} \leq (e^{\frac{163}{162}}-1) K \leq 2K.\]

\begin{equation*}
\begin{split}
\mathbb{E} \left\| \vx_i^{(t,k)} - \vx^{(t)} \right\|^2 &\leq \left(\frac{ 2 \tilde{\lr}}{9 \beta \localStep \slr^2} \mathbb{E} \left\| \nabla \loss(\vx^{(t)}) \right\|^2 + \frac{\tilde{\lr}^2}{\localStep \slr^2} \left( \frac{1}{\localStep} \clientvar + 4 \centvar \right) \right) 2 \localStep \\
\end{split}
\end{equation*}

Adding back the summation terms over $i$ and $k$, the bound on client drift is:

\begin{equation*}
\mathcal{E}^{(t)} \leq \frac{ 4 \tilde{\lr}}{9 \beta \slr^2} \mathbb{E} \left\| \nabla \loss(\vx^{(t)}) \right\|^2 + \frac{2 \tilde{\lr}^2}{\slr^2} \left( \frac{1}{\localStep} \clientvar + 4 \centvar \right).
\end{equation*}

\end{proof}

\paragraph{Proofs of Theorem \ref{theorem.owgt} for Non-Convex Case} Adding the statements of Lemmas \ref{lemma.bnd_round_progress_nonconvex} and \ref{lemma.bnd_drift_nonconvex}, and assuming $\slr \geq \sqrt{S}$, we get:
\begin{equation*}
\begin{split}
\mathbb{E}_{|t} \left[ \loss( \vx^{(t+1)} ) \right] &\leq \loss( \vx^{(t)} ) - \frac{4 \tilde{\lr}}{9} \left\| \nabla \loss( \vx^{(t)} ) \right\|^2 + \left( \frac{2}{\localStep S} \clientvar + \centvar \right) \beta \tilde{\lr}^2 \\
& \quad + \frac{\beta}{27} \left( \frac{ 4 \tilde{\lr}}{9 \beta \slr^2} \mathbb{E} \left\| \nabla \loss(\vx^{(t)}) \right\|^2 + \frac{2 \tilde{\lr}^2}{\slr^2} \left( \frac{1}{\localStep} \clientvar + 4 \centvar \right) \right) \\
&\leq \loss( \vx^{(t)} ) - \frac{1}{3} \tilde{\lr} \left\| \nabla \loss( \vx^{(t)} ) \right\|^2 + \left( \frac{3}{\localStep S} \clientvar + 2 \centvar \right) \beta \tilde{\lr}^2 \\
\end{split}
\end{equation*}



With the above, we have a recursive bound on the loss after round $t+1$. We can use lemmas (e.g., Lemma 2 in \citet{karimireddy2019scaffold}, Lemma 4 in  \citet{stich2019unified}) which establish a sub-linear convergence rate for such recursions. Assuming $\tilde{\lr} \leq \frac{1}{18 \beta}$ and $\slr \geq \sqrt{S}$, we get:
\begin{equation*}
\min_{t \in \{1,2,\ldots,T+1\}}\left\| \nabla \loss( \vx^{(t)} ) \right\|^2 \leq \frac{54 \beta F}{T + 1} + \frac{6  \sqrt{\left( \frac{3}{\localStep S} \clientvar + 2 \centvar \right) \beta F}}{\sqrt{T + 1}}.
\end{equation*}

In the above expressions, $F$ is the error at initialization, $\loss(\vx^{(0)}) - \loss(\vx^*)$.

This yields an expression for the number of rounds $\totRounds$ to reach an error $\epsilon$:
\begin{equation*}
\totRounds = \mathcal{O} \left( \frac{\left(\clientvar + \localStep S \centvar \right) \beta F}{\localStep S \epsilon^2} + \frac{\beta F}{\epsilon} \right).
\end{equation*}


\end{document}